\definecolor{linkcolor}{RGB}{83,83,182}
\definecolor{citecolor}{RGB}{128,0,128}
\begin{document}
\title{Implicit Differentiation for Fast Hyperparameter Selection in Non-Smooth Convex Learning}
%
\jmlrheading{23}{2022}{1-\pageref{LastPage}}{5/21; Revised
3/22}{4/22}{21-0486}{Quentin Bertrand, Quentin Klopfenstein, Mathurin Massias, Mathieu Blondel, Samuel Vaiter, Alexandre Gramfort and Joseph Salmon}
\ShortHeadings{
    Implicit Differentiation in Non-Smooth Convex Learning}{
     Bertrand, Klopfenstein, Massias, Blondel, Vaiter, Gramfort and Salmon
    }
\firstpageno{1}
\author{\name Quentin Bertrand$^*$
       \email quentin.bertrand@inria.fr \\
       \addr Université Paris-Saclay, Inria, CEA, Palaiseau, France
       \AND
       \name Quentin Klopfenstein$^*$ \email quentin.klopfenstein@u-bourgogne.fr \\
       \addr Institut Mathématique de Bourgogne, Université de Bourgogne, Dijon, France
       \AND
       \name Mathurin Massias \email mathurin.massias@gmail.com \\
       \addr MaLGa, DIBRIS, Universit\`a degli Studi di Genova, Genova, Italy
       \AND
       \name Mathieu Blondel \email mblondel@google.com \\
       \addr Google Research, Brain team, Paris, France
       \AND
       \name Samuel Vaiter \email samuel.vaiter@math.cnrs.fr \\
       \addr CNRS and Institut Mathématique de Bourgogne, Université de Bourgogne, Dijon, France
       \AND
       \name Alexandre Gramfort \email alexandre.gramfort@inria.fr \\
       \addr Université Paris-Saclay, Inria, CEA, Palaiseau, France
       \AND
       \name Joseph Salmon \email joseph.salmon@umontpellier.fr \\
       \addr IMAG, Universit\'e de Montpellier, CNRS, Montpellier, France
       }

       \editor{Massimiliano Pontil}

       \maketitle
    \begin{abstract}%
%
Finding the optimal hyperparameters of a model can be cast as a bilevel optimization problem, typically solved using
zero-order techniques.
In this work we study first-order methods when the inner optimization problem is convex but non-smooth.
We show that the forward-mode differentiation of proximal gradient descent and proximal coordinate descent yield sequences of Jacobians converging toward the exact Jacobian.
Using \implicitfull, we show it is possible to leverage the non-smoothness of the inner problem to speed up the computation.
Finally, we provide a bound on the error made on the hypergradient when the inner optimization problem is solved approximately.
Results on regression and classification problems reveal computational benefits for hyperparameter optimization, especially when multiple hyperparameters are required.

    \end{abstract}
\medskip
\begin{keywords}
    Convex optimization,
    hyperparameter optimization,
    hyperparameter selection,
    bilevel optimization, Lasso, generalized linear models
\end{keywords}
%
%

\section{Introduction}
Almost all models in machine learning require at least one hyperparameter, the
tuning of which drastically affects accuracy.
This is the case for many popular estimators, where the regularization hyperparameter controls the trade-off between a data fidelity term and a regularization term.
Such estimators, including Ridge regression \citep{Hoerl_Kennard70}, Lasso
\citep{Tibshirani96,Chen_Donoho_Saunders98}, elastic net \citep{Zou_Hastie05},
sparse logistic regression \citep{Koh_Kim_Boyd07},
support-vector machine/SVM \citep{Boser_Guyon_Vapnik92,Platt99}
are often cast as an optimization problem (\Cref{table:inner})
\begin{table}[t]
  \centering
  \caption{Examples of non-smooth inner problems as in \eqref{pb:generic_inner_pb}.}
  \begin{tabular}{|c|c|c|c|}
    \hline
  Inner problem, $\Phi$
  & $f(\beta)$
  & $g_j(\beta_j, \lambda)$
  & $e^{\lambda_{\max}}$
  \\
  \hline
  \rule{0pt}{2.5ex}
  Lasso
  & $\frac{1}{2n} \|y - X\beta\|^2$
  & $e^{\lambda} |\beta_j|$
  & $\tfrac{1}{n}\normin{X^\top y}_{\infty}$
  \\[1mm]
  elastic net
  & $\frac{1}{2n} \|y - X\beta\|^2$
  & $e^{\lambda_1} |\beta_j| + \tfrac{1}{2}e^{\lambda_2}\beta_j^2$
  & $\tfrac{1}{n}\normin{X^\top y}_{\infty}$
  \\[1mm]
  sparse log. reg.
  & $\frac{1}{n} \sum_{i=1}^{n} \ln (1 + e^{-y_i X_{i:} \beta})$
  & $e^{\lambda} |\beta_j|$
  & $\tfrac{1}{2n}\normin{X^\top y}_{\infty}$
  \\[1mm]
  dual SVM
  & $\frac{1}{2} \normin{(y \odot X)^\top \beta}^2 - \sum_{j=1}^p \beta_j$
  & $\iota_{[0, e^{\lambda}]}(\beta_j)$
  & $-$
  \\
  \hline
  \end{tabular}
  \label{table:inner}
  \end{table}
\begin{align}\label[pb_multiline]{pb:generic_inner_pb}
    \hat \beta^{(\lambda)} \in
    \argmin_{\beta \in \bbR^p}
    \Phi(\beta, \lambda)
    \eqdef
    f(\beta)
    + \underbrace{\sum_{j=1}^p g_j(\beta_j, \lambda)}_{\eqdef g(\beta, \lambda)}
    \enspace,
\end{align}
with smooth $f: \bbR^p \rightarrow \bbR$ (\ie with Lipschitz gradient), proper closed convex (possibly non-smooth) functions $g_j(\cdot, \lambda)$, and a regularization hyperparameter $\lambda \in \bbR^r$.
In the examples of \Cref{table:inner}, the computation of $f$ involves a design matrix $X \in \bbR^{n \times p}$; and the cost of computing $\nabla f(\beta)$ is $\cO(np)$.
In the SVM example, since we consider the dual problem, we chose to reverse the roles of $n$ and $p$ to enforce $\beta \in \bbR^p$.
We often drop the $\lambda$ dependency and write $\hat\beta$ instead of $\hat\beta^{(\lambda)}$ when it is clear from context.

For a fixed $\lambda$, the issue of solving efficiently \Cref{pb:generic_inner_pb} has been largely explored.
If the functions $g_j$ are smooth, one can use solvers such as L-BFGS \citep{Liu_Nocedal1989}, SVRG \citep{Johnson_Zhang2013,Zhang_Mahdavi_Jin2013}, or SAGA \citep{Defazio_Bach_LacosteJulien2014}.
When the functions $g_j$ are non-smooth, \Cref{pb:generic_inner_pb} can be tackled efficiently with stochastic algorithms \citep{Pedregosa_Leblond_LacosteJulien2017} or using working set methods \citep{Fan_Lv2008,Tibshirani_Bien_Friedman_Hastie_Simon_Tibshirani12}
combined with coordinate descent
\citep{Tseng_Yun09}, see overview by \citet{Massias_Vaiter_Gramfort_Salmon20}.
The question of \emph{model selection}, \ie how to select the hyperparameter $\lambda \in \bbR^r$ (potentially multidimensional), is more open, especially when the dimension $r$ of the regularization hyperparameter $\lambda$ is large.

For the Lasso, a broad literature has been devoted to parameter tuning.
Under strong hypothesis on the design matrix $X$, it is possible to derive guidelines for the setting of the regularization parameter $\lambda$ \citep{Lounici08,Bickel_Ritov_Tsybakov09,Belloni_Chernozhukov_Wang11}.
Unfortunately, these guidelines rely on quantities which are typically unknown in practice, and Lasso users still have to resort to other techniques to select the hyperparameter $\lambda$.

A popular approach for hyperparameter selection is \emph{hyperparameter optimization} \citep{Kohavi_John1995,Hutter_Lucke_SchmidtThieme2015,Feurer_Hutter2019}: one selects the hyperparameter $\lambda$
such that the regression coefficients $\hat \beta^{(\lambda)}$ minimize a given criterion $\cC: \bbR^p \rightarrow \bbR$.
Here $\cC$ should ensure good generalization, or avoid overcomplex models.
Common examples (see \Cref{table:criterion}) include the hold-out loss \citep{Devroye_Wagner1979}, the cross-validation loss (CV,
\citealt{Stone_Ramer65}, see \citealt{Arlot_Celisse10} for a survey), the AIC \citep{Akaike74}, BIC \citep{Schwarz78} or SURE \citep{Stein81} criteria.
Formally, the hyperparameter optimization problem is a bilevel optimization problem \citep{Colson_Marcotte_Savard2007}
\begin{align}\label[pb_multiline]{pb:bilevel_opt}
  \begin{aligned}
    &\argmin_{\lambda \in \bbR^r}
    \left\{
    \cL(\lambda) \eqdef
    \cC \left (\hat \beta^{(\lambda)}  \right)
    \right\}
    \\
    &\st \hat \beta^{(\lambda)} \in \argmin_{\beta \in \bbR^p}
    \Phi(\beta, \lambda)
     \enspace.
\end{aligned}
\end{align}
%
%
\begin{table}[t]
\centering
\caption{Examples of outer criteria used for hyperparameter selection.}
\scalebox{0.90}{
\begin{tabular}{|c|c|c|}
\hline
Criterion & Problem type   & Criterion $\cC(\beta)$ \\
\hline
\rule{0pt}{2.5ex}
Hold-out mean squared error & Regression
& $\frac{1}{n} \| y^{\text{val}} - X^{\text{val}}\beta\|^2$ \\[1mm]
Stein unbiased risk estimate (SURE)\tablefootnote{
  For a linear model $y = X\beta + \varepsilon$,
  with $\varepsilon \sim  \cN(0, \sigma^2)$,
  the degree of freedom ($\text{dof}$, \citealt{Efron86}) is defined as $\text{dof}(\beta) = \sum_{i=1}^n \text{cov}(y_i, (X\beta)_i) / \sigma^2$.}
& Regression
& $\|y - X\beta\|^2 - n \sigma^2 + 2\sigma^2 \text{dof}(\beta)$ \\[1mm]
Hold-out logistic loss
& Classification
& $\frac{1}{n}\sum_{i=1}^{n} \ln(1 + e^{-y^{\text{val}}_i X_{i :}^{\text{val}} \beta})$
\\[1mm]
Hold-out smoothed Hinge loss\tablefootnote{The smoothed Hinge loss is given by
$
\ell(x)
= \frac{1}{2} - x$ if $x \leq 0$, $\frac{1}{2} (1 - x)^2$ if $0 \leq x \leq 1$ and 0 else.  }
& Classification
& $\frac{1}{n} \sum_{i=1}^{n} \ell (y^{\text{val}}_i, X_{i :}^{\text{val}} \beta)$\\[1mm]
\hline
\end{tabular}
}
\label{table:criterion}
\end{table}
Popular approaches to solve (the generally non-convex) \Cref{pb:bilevel_opt} include zero-order optimization (gradient-free) techniques such as
grid-search,
random-search \citep{Rastrigin63,Bergstra_Bengio12,Bergstra13}
or Sequential Model-Based Global Optimization (SMBO), often referred to as Bayesian optimization \citep{Mockus1989,Jones_Schonlau_Welch1998,Forrester_Sobester_Keane2008,Brochu_Cora_deFreitas10,Snoek_Larochelle_Ryan12}.
Grid-search is a naive discretization of \Cref{pb:bilevel_opt}. It consists in evaluating the outer function $\cL$ on a grid of hyperparameters, solving one inner optimization \Cref{pb:generic_inner_pb} for each $\lambda$ in the grid (see \Cref{fig:intro_cv}).
For each inner problem solution $\hat \beta^{(\lambda)}$,
the criterion $\cC(\hat \beta^{(\lambda)})$ is evaluated,
and the model achieving the lowest value is selected.
Random-search has a similar flavor, but one randomly selects where the criterion must be evaluated.
Finally, SMBO models the objective function $\cL$ via a function amenable to uncertainty estimates on its predictions such as a Gaussian process.
Hyperparameter values are chosen iteratively to maximize a function such as the
expected improvement as described, \eg by \citet{BergstraBardenetBengioKegl2011}.
However, these zero-order methods share a common drawback: they scale exponentially with the dimension of the search space \citep[Sec. 1.1.2]{Nesterov04}.

When the hyperparameter space is continuous and the regularization path $\lambda \mapsto \hat \beta^{(\lambda)}$ is well-defined and almost everywhere differentiable, first-order optimization methods are well suited to solve the bilevel optimization \Cref{pb:bilevel_opt}.
Using the chain rule, the gradient of $\cL$ \wrt $\lambda$, also referred to as the
\emph{hypergradient}, evaluates to
\begin{align}\label{eq:hypergrad}
    \nabla_{\lambda}\cL(\lambda)
    & =
    \hat \jac^{\top}_{(\lambda)}\nabla \cC(\hat \beta^{(\lambda)}) \enspace,
\end{align}
with $\hat \jac_{(\lambda)} \in \bbR^{p \times r}$ the \emph{Jacobian} of the function $\lambda \mapsto \hat \beta^{(\lambda)}$,
\begin{align}
    \label{eq:jac}
    \hat \jac_{(\lambda)} \eqdef
    \begin{pmatrix}
    &\tfrac{\partial \hat \beta^{(\lambda)}_1}{\partial \lambda_1} & \hdots
    &\tfrac{\partial \hat \beta^{(\lambda)}_1}{\partial \lambda_r}\\
    &\vdots &\hdots &\vdots\\
    &\tfrac{\partial \hat \beta^{(\lambda)}_p}{\partial \lambda_1}&\hdots
    &\tfrac{\partial \hat \beta^{(\lambda)}_p}{\partial \lambda_r}
    \end{pmatrix}\enspace .
\end{align}


An important challenge of applying first-order methods to solve \Cref{pb:bilevel_opt} is evaluating the hypergradient in \Cref{eq:hypergrad}.
There are three main algorithms to compute the hypergradient $\nabla_{\lambda}\cL(\lambda)$: implicit differentiation
\citep{Larsen_Hansen_Svarer_Ohlsson96,Bengio00} and automatic differentiation
using the \backward \citep{Linnainmaa1970,Lecun_Bottou_Orr_Muller2012}
or the \forward
\citep{Wengert1964,Deledalle_Vaiter_Fadili_Peyre14,Franceschi_Donini_Frasconi_Pontil17}.
As illustrated in \Cref{fig:intro_cv}, once the hypergradient
in \Cref{eq:hypergrad} has been computed, one can solve \Cref{pb:bilevel_opt}
with first-order schemes, \eg gradient descent.

\begin{figure}[t]
  \centering
  \begin{subfigure}[b]{1\textwidth}
      \includegraphics[width=1\linewidth]{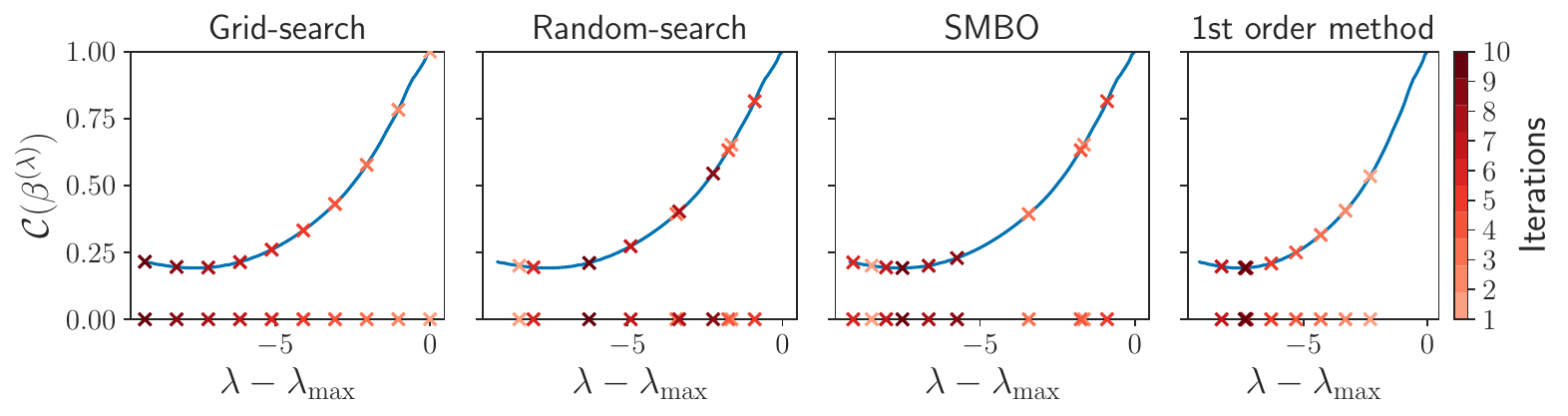}
      \includegraphics[width=1\linewidth]{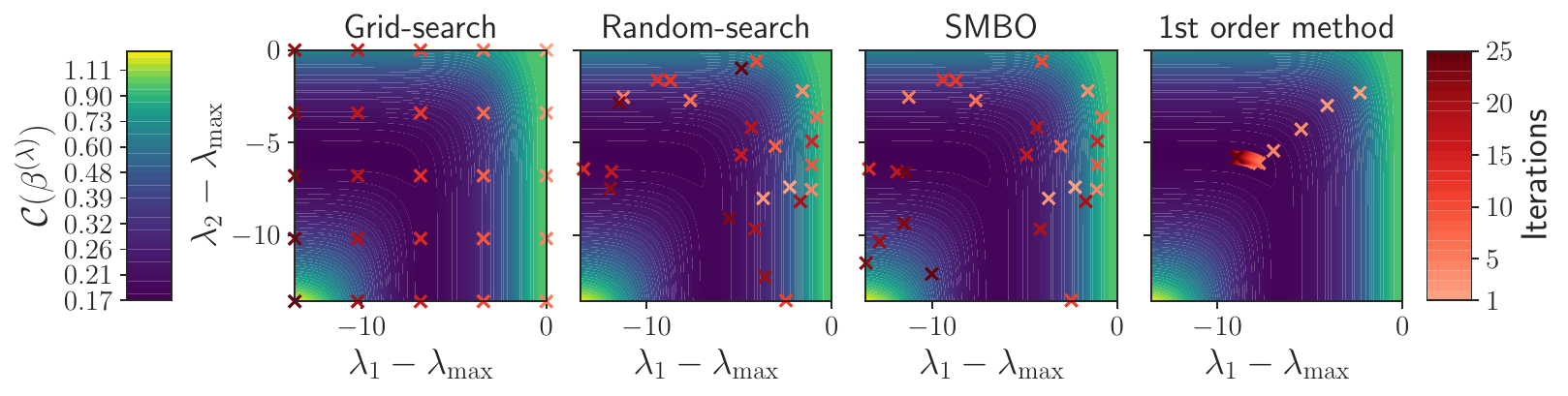}
  \end{subfigure}
\caption{\emph{5-fold cross-validation error} $\cC(\beta^{(\lambda)})$:
(top) Lasso CV error \wrt $\lambda$ for multiple hyperparameter optimization
methods on the \emph{real-sim} data set, and (bottom) elastic net CV error \wrt
$\lambda_1$ and $\lambda_2$ on the \emph{rcv1} data set.
Crosses represent the $10$ (top) or $25$ (bottom) first error evaluations for each method.
}
\label{fig:intro_cv}
\end{figure}

\textit{Contributions.}
We are interested in tackling the bilevel optimization \Cref{pb:bilevel_opt}, with a non-smooth inner optimization \Cref{pb:generic_inner_pb}. More precisely,
\begin{itemize}
  \item We show that classical algorithms used to compute hypergradients for smooth inner problem have theoretically grounded non-smooth counterparts.
    We provide in \Cref{thm:non-smooth_implicitdiff} an \implicitfull formula for non-smooth optimization problems.
    We obtain in \Cref{thm:approximate_gradient},
    for the first time in the non-smooth case,
    error bounds with respect to the hypergradient when the inner
    problem and the linear system involved are only solved approximately.
    We obtain in \Cref{thm:iterdiff_linear_convergence} convergence rates on the hypergradient for iterative differentiation of non-smooth optimization problems.
  \item Based on the former contributions we propose an algorithm to tackle \Cref{pb:bilevel_opt}.
    We develop an efficient implicit differentiation algorithm to compute the hypergradient in \Cref{eq:hypergrad}, leveraging the sparsity of the Jacobian and enabling the use of state-of-the-art solvers (\Cref{alg:implicit}).
    We combine in \Cref{alg:bilevel_approx} this fast hypergradient computation with a gradient descent
    scheme to solve \Cref{pb:bilevel_opt}.
  \item We provide extensive experiments on diverse data sets and estimators (\Cref{sec:experiments}).
  We first show that implicit differentiation significantly outperforms other hypergradient methods (\Cref{sub:expe_hypergrad_computation}).
  Then, leveraging sparsity, we illustrate computational benefits of first-order optimization \wrt zero-order techniques for solving \Cref{pb:bilevel_opt} on Lasso, elastic net and multiclass logistic regression (\Cref{sub:expe_bilevel_optimization}).
  \item We release our implementation as a high-quality, documented and tested Python package: \url{https://github.com/qb3/sparse-ho}.
\end{itemize}

\textit{General notation.}
We write $\normin{\cdot}$ the Euclidean norm on vectors.
For a set $S$, we denote by $S^c$ its complement.
We denote $[p]=\{1,\dots,p\}$.
We denote by $(e_j)_{j=1}^{p}$ the vectors of the canonical basis of $\mathbb{R}^p$.
We denote the coordinate-wise multiplication of two vectors $u$ and $v$ by $u \odot v$, and by $u \odot M$ the row-wise multiplication between a vector and a matrix.
The $i$-th line of the matrix $M$ is $M_{i:}$ and its $j$-th column is $M_{:j}$.
The spectral radius of a matrix $M\in \mathbb{R}^{n\times n}$ is denoted $\rho(M) = \max_i |s_i|$ where $s_1, \hdots, s_n$ are the eigenvalues of $M$.
For a matrix $M$, we write that $M \succ 0$ if $M$ is positive definite.
The regularization parameter, possibly multivariate, is denoted by $\lambda = (\lambda_1,\dots,\lambda_r)^\top\in\bbR^r$.
Recall that for a locally integrable function $f: x \in \Omega \mapsto \mathbb{R}$, where $\Omega$ is an open subset of $\mathbb{R}^n$, its weak partial derivative  \citep{evan1992measure} with respect to $x_i$ in $\Omega$ is the locally integrable function $g_i$ on $\Omega$ such that
\begin{align}
  \int_\Omega g_i(x) \phi(x) dx
  =
  -\int_\Omega f(x) \frac{\partial \phi(x)}{\partial x_i} dx
  \enspace ,
\end{align}
holds for all functions $\phi$ that are continuously differentiable and of compact support.
For vector valued function $f$, we denote by $\jac (x)$ its weak Jacobian \ie the matrix composed of weak partial derivatives.
An important use of this notation in the rest of the paper is  $\hat \jac_{(\lambda)} \eqdef (\nabla_\lambda \hat \beta_1^{(\lambda)}, \dots, \nabla_\lambda \hat\beta_p^{(\lambda)})^\top \in \bbR^{p \times r}$ the weak Jacobian of $\hat \beta^{(\lambda)}$ \wrt $\lambda$.
%
\textit{Convex analysis.}
For a convex function $h: \bbR^p \to \bbR$, the proximal operator of $h$ is defined, for any $x\in\bbR^p$, as:
  $\prox_{h}(x)
  =
  \argmin_{y\in \mathbb{R}^{p}} \frac{1}{2}
  \|x - y \|^{2} + h(y)$.
The subdifferential of $h$ at $x$ is denoted $\partial h (x) = \condset{u \in \bbR^p}{\forall z \in \bbR^p, h(z) \geq h(x) + u^\top (z - x)}$.
A function is said to be \emph{smooth} if it has Lipschitz gradients.
Let $f$ be a $L$-smooth function.
Lipschitz constants of the functions $\nabla_j f$ are denoted by $L_j$; hence for all $x \in \bbR^p$, $h \in \bbR$
\begin{align}
    |\nabla_j f(x + h e_j) - \nabla_j f(x)|
    \leq
    L_j |h| \nonumber
    \enspace .
\end{align}
For a function $f$, its gradient restricted to the indices in a set $S$ is denoted $\nabla_S f$.
For a set $\Xi \subset \mathbb{R}^{p}$, its relative interior is noted $\ri (\Xi)$, and
its indicator function is defined for any $x\in \bbR^p$ by
$ \iota_{\Xi}(x) = 0$ if $x \in \Xi$ and $ +\infty$ otherwise.
A function
$h:\mathbb{R} \rightarrow \mathbb{R} \cup \{ +\infty\}$
is said to be proper if
$\dom(h) = \{x \in \bbR : h(x) < +\infty\} \neq \emptyset$),
and closed if for any $\alpha\in \bbR$, the sublevel set $\{x\in \dom(h): h(x)\leq \alpha \}$ is a closed set.

For a function $\psi : \mathbb{R}^p \times \bbR^r \mapsto \bbR^p$, we denote $\partial_z \psi$ the weak Jacobian \wrt the first variable and $\partial_{\lambda}\psi$ the weak Jacobian \wrt the second variable.
The proximal operator of $g(\cdot, \lambda)$ can be seen as such a function $\psi$ of $\beta$ and $\lambda$ (see \Cref{table:inner} for examples)
\begin{align*}
  \bbR^p \times \bbR^r & \rightarrow \bbR^p \\
  (z,  \lambda) &\mapsto \prox_{g(\cdot, \lambda)}(z)
  = \psi(z, \lambda)
  \enspace .
\end{align*}
In this case we denote
$\partial_z  \prox_{g(\cdot, \lambda)} \eqdef \partial_z \psi$ and $\partial_{\lambda}  \prox_{g(\cdot, \lambda)} \eqdef \partial_{\lambda} \psi$.
Since we consider only separable penalties $g(\cdot, \lambda)$, $\partial_z  \prox_{g(\cdot, \lambda)}$
is a diagonal matrix, so to make notation lighter, we write $\partial_z  \prox_{g(\cdot, \lambda)}$ for its diagonal.
We thus have
\begin{align*}
  \partial_z  \prox_{g(\cdot, \lambda)}
  = (\partial_z  \prox_{g_j(\cdot, \lambda)})_{j \in [p]} & \in \bbR^p \quad (\text{by separability of $g$})\\
  \partial_{\lambda}  \prox_{g(\cdot, \lambda)} &\in \bbR^{p \times r}
  \enspace .
\end{align*}
Explicit partial derivatives formulas for usual proximal operators can be found in \Cref{table:partial_derivatives_prox}.

\begin{table}[t]
  \centering
  \caption{Partial derivatives of proximal operators used.
  }
  \scalebox{0.90}{
  \begin{tabular}{|c|c|c|c|}
    \hline
    $g_j(\beta_j, \lambda)$
    & $\prox_{g_j(\cdot, \lambda)}(z_j)$
    & $\partial_z \prox_{g_j(\cdot, \lambda)}(z_j) $
    & $\partial_{\lambda} \prox_{g_j(\cdot, \lambda)}(z_j) $ \\
  \hline
  \rule{0pt}{2.5ex}
  $e^{\lambda} \beta_j^2 / 2$
  & $z_j / (1 + e^\lambda)$
  & $1 / (1 + e^\lambda)$
  & $-z_j e^\lambda / (1 + e^\lambda)^2$
  \\[1mm]
  $e^{\lambda} |\beta_j|$
  & $\ST(z_j, e^\lambda)$
  & $|\sign(\ST(z_j, e^\lambda))|$
  & $- e^\lambda \sign(\ST(z_j, e^\lambda))$
  \\[1mm]
  $e^{\lambda_1} |\beta_j| + \tfrac{1}{2}e^{\lambda_2} \beta_j^2$
  & $\frac{\ST(z_j, e^{\lambda_1})}{ 1 + e^{\lambda_2}} $
  & $\frac{|\sign(\ST(z_j, e^{\lambda_1}))| }{1 + e^{\lambda_2}} $
  & $ \left (
      \frac{-e^{\lambda_1} \sign(\ST(z_j, e^{\lambda_1}))}{1 + e^{\lambda_2}},
      \frac{-\ST(z_j, e^{\lambda_1}) e^{\lambda_2}}{(1 + e^{\lambda_2})^2}
      \right )$
  \\[1mm]
  $\iota_{[0, e^{\lambda}]}(\beta_j)$
  & $\max(0, \min(z_j, e^{\lambda}))$
  & $\ind_{]0,  e^{\lambda}[}(z_j)  $
  & $e^{\lambda} \ind_{z_j > e^{\lambda}}  $
  \\
  \hline
  \end{tabular}
  }
  \label{table:partial_derivatives_prox}
\end{table}

%

\section{Related Work}
\label{sec:hypergrad_smooth}
The main challenge to evaluate the hypergradient $\nabla_{\lambda}\mathcal{L}(\lambda)$ is the computation of the Jacobian $\jac_{(\lambda)}$.
We first focus on the case where $ \Phi(\cdot, \lambda)$ is convex and smooth for any $\lambda$.
%

\textit{Implicit differentiation.}
We recall how the \emph{implicit differentiation}\footnote{Note that \emph{implicit} refers to the implicit function theorem, but leads to an \emph{explicit} formula for the gradient.} formula of the gradient $\nabla_{\lambda} \cL(\lambda)$ is obtained for smooth inner optimization problems.
We will provide a generalization to non-smooth optimization problems in \Cref{sub:implicit_nonsmooth}.
\begin{theorem}
    \emph{\citep{Bengio00}.}
    \label{thm:smooth_implicitdiff}
    Let $\hat{\beta}^{(\lambda)} \in \argmin_{\beta \in \bbR^p} \Phi(\beta, \lambda)
    $ be a solution of \Cref{pb:generic_inner_pb}.
    Assume that for all $\lambda>0$,
    $\Phi(\cdot, \lambda)$ is a convex smooth function,
    $\nabla_{\beta}^2 \Phi(\hat \beta^{(\lambda)}, \lambda) \succ 0$,
    and that for all $\beta \in \bbR^p$,  $\Phi(\beta, \cdot)$ is differentiable over $]0,+\infty[$.
    Then the hypergradient $\nabla_{\lambda} \cL(\lambda)$ reads
    \begin{equation} \label{eq:grad_smooth}
        \underbrace{\nabla_{\lambda}\mathcal{L}(\lambda)}_{\in \bbR^r}
        =
        \underbrace{- \nabla_{\beta, \lambda}^2
        \Phi (\hat \beta^{(\lambda)}, \lambda )}_{\in \bbR^{r \times p}}
        {\underbrace{\left ( \nabla_{\beta}^2 \Phi(\hat \beta^{(\lambda)}, \lambda) \right )}_{\in \bbR^{p \times p}}}^{-1}
        \underbrace{\nabla \mathcal{C}(\hat \beta^{(\lambda)})}_{\in \bbR^p}
        \enspace .
    \end{equation}

\end{theorem}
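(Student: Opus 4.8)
The plan is to combine the first-order optimality condition for the inner problem with the implicit function theorem, and then substitute the resulting closed form for the Jacobian $\hat\jac_{(\lambda)}$ into the chain-rule formula \eqref{eq:hypergrad}. Because $\Phi(\cdot,\lambda)$ is convex and smooth, the minimizer $\hat\beta^{(\lambda)}$ is characterized by the stationarity condition $\nabla_\beta \Phi(\hat\beta^{(\lambda)},\lambda)=0$, which holds identically in $\lambda$ on $]0,+\infty[^r$. This identity is exactly the kind of implicitly defined equation the implicit function theorem handles, with the invertibility hypothesis supplied by $\nabla_\beta^2 \Phi(\hat\beta^{(\lambda)},\lambda)\succ 0$.

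First I would invoke the implicit function theorem on the map $(\beta,\lambda)\mapsto \nabla_\beta\Phi(\beta,\lambda)$: since this map is $C^1$ near $(\hat\beta^{(\lambda)},\lambda)$ and its partial Jacobian in $\beta$, namely $\nabla_\beta^2\Phi(\hat\beta^{(\lambda)},\lambda)$, is nonsingular (indeed positive definite), there exists a differentiable local solution map $\lambda\mapsto\hat\beta^{(\lambda)}$ whose Jacobian is $\hat\jac_{(\lambda)}$. Differentiating the identity $\nabla_\beta\Phi(\hat\beta^{(\lambda)},\lambda)=0$ with respect to $\lambda$ via the chain rule yields
\begin{align*}
  \nabla_\beta^2\Phi(\hat\beta^{(\lambda)},\lambda)\,\hat\jac_{(\lambda)}
  + \bigl(\nabla_{\beta,\lambda}^2\Phi(\hat\beta^{(\lambda)},\lambda)\bigr)^\top
  = 0 \enspace,
\end{align*}
so that, using invertibility of the Hessian,
\begin{align*}
  \hat\jac_{(\lambda)}
  = -\bigl(\nabla_\beta^2\Phi(\hat\beta^{(\lambda)},\lambda)\bigr)^{-1}
  \bigl(\nabla_{\beta,\lambda}^2\Phi(\hat\beta^{(\lambda)},\lambda)\bigr)^\top \enspace.
\end{align*}

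Then I would substitute this expression into \eqref{eq:hypergrad}. Since $\nabla_\lambda\cL(\lambda)=\hat\jac_{(\lambda)}^\top\nabla\cC(\hat\beta^{(\lambda)})$, transposing the identity above and using that $\nabla_\beta^2\Phi$ is symmetric (so its inverse is symmetric too) gives
\begin{align*}
  \nabla_\lambda\cL(\lambda)
  = -\nabla_{\beta,\lambda}^2\Phi(\hat\beta^{(\lambda)},\lambda)
  \bigl(\nabla_\beta^2\Phi(\hat\beta^{(\lambda)},\lambda)\bigr)^{-1}
  \nabla\cC(\hat\beta^{(\lambda)}) \enspace,
\end{align*}
which is precisely \eqref{eq:grad_smooth}; tracking the dimensions $r\times p$, $p\times p$, $p$ confirms the formula is well-posed.

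The main obstacle is the justification of differentiability of the solution map, not the algebra, which is routine once differentiability is granted. One must check the hypotheses of the implicit function theorem with care: smoothness of $\Phi$ in $\beta$ yields $C^1$ regularity of $\nabla_\beta\Phi$ in its first argument, differentiability of $\Phi(\beta,\cdot)$ supplies regularity in $\lambda$, and the positive definiteness assumption is exactly what guarantees both local uniqueness of $\hat\beta^{(\lambda)}$ and invertibility of $\nabla_\beta^2\Phi$. A secondary subtlety is the bookkeeping of transposes: the mixed second derivative $\nabla_{\beta,\lambda}^2\Phi$ lives in $\bbR^{r\times p}$, so one must consistently track which factor is transposed when passing between the Jacobian identity and the final hypergradient expression.
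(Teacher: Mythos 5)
Your proposal is correct and takes essentially the same route as the paper: differentiate the first-order optimality condition $\nabla_\beta\Phi(\hat\beta^{(\lambda)},\lambda)=0$ with respect to $\lambda$, solve the resulting linear system for the Jacobian using invertibility of $\nabla_\beta^2\Phi(\hat\beta^{(\lambda)},\lambda)$, and substitute into the chain rule \eqref{eq:hypergrad}. The only difference is that you explicitly invoke the implicit function theorem to justify differentiability of the solution map $\lambda\mapsto\hat\beta^{(\lambda)}$, whereas the paper simply assumes this via the clause ``if $\lambda \mapsto \nabla_{\beta}\Phi(\hat \beta^{(\lambda)}, \lambda)$ is differentiable''; this is a minor strengthening of the same argument, not a different approach.
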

\begin{proof}
    For a smooth convex function $ \beta \mapsto \Phi(\beta, \lambda)$ the first-order condition writes:
    \begin{align}\label{eq:fixed_point_smooth}
        \nabla_{\beta}\Phi(\hat \beta^{(\lambda)}, \lambda) = 0 \enspace ,
    \end{align}
    for any $\hat \beta^{(\lambda)}$ solution of the inner problem.
    Moreover, if $\lambda \mapsto \nabla_{\beta}\Phi(\hat \beta^{(\lambda)}, \lambda)$ is differentiable, differentiating \Cref{eq:fixed_point_smooth} \wrt $\lambda$ leads to
    \begin{align}
        \label{eq:grad}
        \nabla_{\beta, \lambda}^{2} \Phi(\hat \beta^{(\lambda)},\lambda)
        + \hat \jac^{\top}_{(\lambda)}\nabla_{\beta}^{2}\Phi(\hat \beta^{(\lambda)}, \lambda)
        = 0
        \enspace .
    \end{align}
    The Jacobian $ \hat \jac^{\top}_{(\lambda)}$ is computed by solving the following linear system
    \begin{align}
        \hat \jac_{(\lambda)}^\top
        = - \underbrace{
            \nabla_{\beta, \lambda}^2 \Phi( \hat \beta^{(\lambda)}, \lambda )
        }_{\in \bbR^{r \times p}}
        {\underbrace{\left ( \nabla_{\beta}^2
        \Phi(\hat \beta^{(\lambda)}, \lambda) \right )}_{\in \bbR^{p \times p}}}^{-1}
        \enspace.
        \label{eq:jac}
    \end{align}
    Plugging \Cref{eq:jac} into \Cref{eq:hypergrad} yields the desired result.
\end{proof}

The computation of the gradient via implicit differentiation (\Cref{eq:grad_smooth}) involves the resolution of a $p\times p$ linear system \citep[Sec. 4]{Bengio00}.
This potentially large linear system can be solved using different algorithms such as conjugate gradient (\citealt{Hestenes_Stiefel1952}, as in \citealt{Pedregosa16}) or fixed point methods (\citealt{Lions_Mercier1979,Tseng_Yun09}, as in \citealt{Grazzi_Franceschi_Pontil_Salzo2020}).
Implicit differentiation has been used for model selection of multiple estimators with smooth regularization term: kernel-based models \citep{Chapelle_Vapnick_Bousquet_Mukherjee02,Seeger08}, weighted Ridge estimator \citep{Foo_Do_Ng08}, neural networks \citep{Lorraine_Vicol_Duvenaud2019} or meta-learning \citep{Rajeswaran_Finn_kakade_Levine2019}.
In addition to hyperparameter selection, it has been applied successfully in natural language processing \citep{Bai_Kolter_Koltun2019} and computer vision \citep{Bai_Koltun_Kolter2020}.

\Cref{pb:generic_inner_pb} is typically solved using iterative solvers.
In practice, the number of iterations is limited to reduce computation time, and also since
very precise solutions are generally not necessary for machine learning tasks.
Thus, \Cref{eq:fixed_point_smooth} is not exactly satisfied at machine precision, and consequently the linear system to solve \Cref{eq:grad_smooth} does not lead to the exact gradient $\nabla_{\lambda}\mathcal{L}(\lambda)$, see \citet{Ablin_Peyre_Moreau2020} for  quantitative convergence results.
However, \citet{Pedregosa16} showed that one can resort to \emph{approximate gradients} when the inner problem is smooth, justifying that implicit differentiation can be applied using an approximation of $\hat \beta$.
Interestingly, this approximation scheme was shown to yield significant practical speedups when solving \Cref{pb:bilevel_opt}, while preserving theoretical properties of convergence toward the optimum.
Practitioners now have access to powerful software to use \implicitfull with smooth inner optimization problems \citep{jaxopt}.
%
%
%

\textit{Iterative differentiation.}
\label{sub:iterdiff_smooth}
%
Iterative differentiation computes the gradient $\nabla_{\lambda} \cL (\lambda)$
by differentiating through the iterates of the algorithm used to solve \Cref{pb:generic_inner_pb}.
Iterative differentiation can be applied using the \forward (\citealt{Wengert1964}) or the \backward (\citealt{Linnainmaa1970}).
Both rely on the chain rule, the gradient being decomposed as a large product of matrices, computed either in a forward or backward way.
Note that \forwardandbackward are algorithm-dependent: in this section we illustrate iterative differentiation for proximal gradient descent (PGD, \citealt{Lions_Mercier1979,Combettes_Wajs05}), using the \forward (\Cref{alg:forward_pgd}), and the \backward (\Cref{alg:backward_pgd}).

The most popular method in automatic differentiation is the \backward,
a cornerstone of deep learning \citep[Chap. 8]{Goodfellow_Courville_Bengio2016}.
Iterative differentiation for hyperparameter optimization can be traced back to \citet{Domke12}, who derived (for smooth loss functions) a \backward with gradient descent, heavy ball and L-BFGS algorithms.
It first computes the solution of the optimization \Cref{pb:generic_inner_pb} using an iterative solver, but requires storing the iterates along the computation for a backward evaluation of the hypergradient (\Cref{alg:backward_pgd}).
\citet{MacLaurin_Duvenaud_Adams15} used the \backward on stochastic gradient descent to select thousands of hyperparameters.
Alternatively, the \forward computes jointly the solution along with the gradient $\nabla_{\lambda} \cL (\lambda)$.
The \forward has been applied to hyperparameter optimization with smooth inner problems by  \citet{Franceschi_Donini_Frasconi_Pontil17}.
\citet{Deledalle_Vaiter_Fadili_Peyre14} paved the way for applying it to non-smooth optimization problems.
The \forward is memory efficient (no iterates storage) but more computationally expensive when the number of hyperparameters ($r$) is large;
see \citet{Baydin_Pearlmutter_Radul_Siskind18} for a survey.

\textit{Resolution of the bilevel \Cref{pb:bilevel_opt}.}
From a theoretical point of view, solving \Cref{pb:bilevel_opt} using gradient-based methods is also challenging, and results in the literature are quite scarce.
\citet{Kunisch_Pock13} studied the convergence of a semi-Newton algorithm
where both the outer and inner problems are smooth.
\citet{Franceschi_Frasconi_Salzo_Pontil18} gave similar results with weaker assumptions to unify hyperparameter optimization and meta-learning with a bilevel point of view.
They required the inner problem to have a unique solution for all $\lambda>0$ but do not have second-order assumptions on $\Phi$.
Recent results \citep{Ghadimi_Wang2018,Ji_Yang_Liang2020} have provided quantitative convergence toward a global solution of \Cref{pb:bilevel_opt}, under the assumption that the inner problem is strongly convex and one has the exact knowledge of the hypergradient Lipschitz constant.
%
%
\begin{figure}[tb]
    \begin{minipage}[t]{0.48\linewidth}
        {\fontsize{5}{4}\selectfont
            \begin{algorithm}[H]
            \SetKwInOut{Input}{input}
            \SetKwInOut{Init}{init}
            \SetKwInOut{Parameter}{param}
            \caption{\textsc{\Forward PGD}}
            \Input{
                $
                \lambda \in \bbR^r,
                \gamma > 0,
                n_{\mathrm{iter}} \in \bbN$, $\beta^{(0)} \in \bbR^p$, $\jac^{(0)} \in \bbR^{p \times r}$
                }
            \tcp{jointly compute coef. \& Jacobian }

                \For{$k = 1,\dots, n_{\mathrm{iter}}$}{
                    \tcp*[h]{update the regression coefficients}\\
                    $z^{(k)} =
                    \beta^{(k-1)} - \gamma \nabla f(\beta^{(k-1)})$
                    \tcp*[r]{GD step}

                    $\diff z^{(k)} =
                    \jac^{(k-1)} -
                    \gamma \nabla^2 f(\beta^{(k-1)}) \jac^{(k-1)} $

                    $\beta^{(k)} = \prox_{\gamma g(\cdot, \lambda)}(z^{(k)})$
                    \tcp*[r]{prox. step}

                    \tcp*[h]{update the Jacobian}\\
                    $\jac^{(k)} =
                    \partial_z \prox_{\gamma g(\cdot, \lambda)}(z^{(k)})
                    \odot \diff z^{(k)}$

                    $\jac^{(k)} \pluseq
                    \partial_{\lambda} \prox_{\gamma g(\cdot, \lambda)}(z^{(k)})$
                    \tcp*[r]{$\bigo(pr)$}
                }
            $v = \nabla \cC (\beta^{n_\mathrm{iter}})$

            \Return{$\beta^{n_\mathrm{iter}}, \jac^{n_\mathrm{iter} \top} v$}
            \label{alg:forward_pgd}
            \end{algorithm}
            }
    \end{minipage}\hfill
    \hfill
    \begin{minipage}[t]{0.48\linewidth}
        {\fontsize{5}{4}\selectfont
        \begin{algorithm}[H]
        \SetKwInOut{Input}{input}
        \SetKwInOut{Init}{init}
        \SetKwInOut{Parameter}{param}
        \caption{\textsc{\Backward PGD}}
        \Input{\hspace{-1mm}
        $
            \lambda \in \bbR^r,
            \gamma > 0,
            n_{\mathrm{iter}} \in \bbN$, $\beta^{(0)} \in \bbR^p$
            }
        \tcp{computation of $\hbeta$}

        \For{$k = 1,\dots, n_{\mathrm{iter}} $}{

            $z^{(k)} \hspace{-0.25em}
            = \hspace{-0.25em}
            \beta^{(k-1)}  -\gamma  \nabla f(\beta^{(k-1)})$
            \tcp*[r]{GD step}

            $\beta^{(k)}  = \prox_{\gamma g(\cdot, \lambda)} \left ( z^{(k)} \right )$
            \tcp*[r]{prox. step}

            }

        \tcp{backward computation of the gradient $g$}

        $v = \nabla \cC (\beta^{ (n_{\mathrm{iter}} ) })$,
        $h = 0_{\bbR^r}$

        \For{$k = n_{\mathrm{iter}}, n_{\mathrm{iter}}-1,\dots, 1$}{
            $h \pluseq v^\top
            \partial_{\lambda}\prox_{\gamma g(\cdot, \lambda)}(z^{(k)})$
            \tcp*[r]{$\bigo(pr)$}

            $v \leftarrow
            \partial_z \prox_{\gamma g(\cdot, \lambda)}(z^{(k)})\odot v$
            \tcp*[r]{$\bigo(p)$}

            $v \leftarrow (\Id - \gamma \nabla^2 f(\beta^{(k)}))v $
            \tcp*[r]{$\bigo(np)$}
        }
        \Return{
            $\beta^{ n_{\mathrm{iter}}}, h$
            }
        \label{alg:backward_pgd}
        \end{algorithm}
        }
    \end{minipage}
\end{figure}
%


\section{Bilevel Optimization with Non-Smooth Inner Problems}\label{sec:non_smooth}
%

We recalled above how to compute hypergradients when the inner optimization problem is smooth.
In this section we tackle the bilevel optimization \Cref{pb:bilevel_opt} with non-smooth inner optimization \Cref{pb:generic_inner_pb}.
Handling non-smooth inner problems requires specific tools detailed in \Cref{sub:framework}.
We then show how to compute gradients with non-smooth inner problems using implicit differentiation (\Cref{sub:implicit_nonsmooth}) or iterative differentiation (\Cref{sub:iterdiff_nonsmooth}).
In \Cref{sub:approximate_gradient} we tackle the problem of approximate gradient for a non-smooth inner optimization problem.
Finally, we propose in \Cref{sub:bilevel_non-smooth} an algorithm to solve the bilevel optimization \Cref{pb:bilevel_opt}.
%
\subsection{Theoretical Framework}
\label{sub:framework}
%
%
\textit{Differentiability of the regularization path.}
    Before applying first-order methods to tackle \Cref{pb:bilevel_opt}, one must ensure that the regularization path $\lambda \mapsto \hat \beta^{(\lambda)}$ is almost everywhere differentiable (as in \Cref{fig:intro_reg_path}). This is the case for
    the Lasso \citep{Mairal_Yu12} and the SVM \citep{Pontil_Verri98} since solution paths are piecewise differentiable (see \Cref{fig:intro_reg_path}).
    Results for nonquadratic datafitting terms are scarcer: \citet{Friedman_Hastie_Tibshirani10} address the practical resolution of sparse logistic regression, but stay evasive regarding the differentiability of the regularization path.
    In the general case for problems of the form \Cref{pb:generic_inner_pb}, we believe it is an open question and leave it for future work.
\begin{figure}[tb]
    \centering
    \begin{subfigure}[b]{1\textwidth}
        \includegraphics[width=1\linewidth]{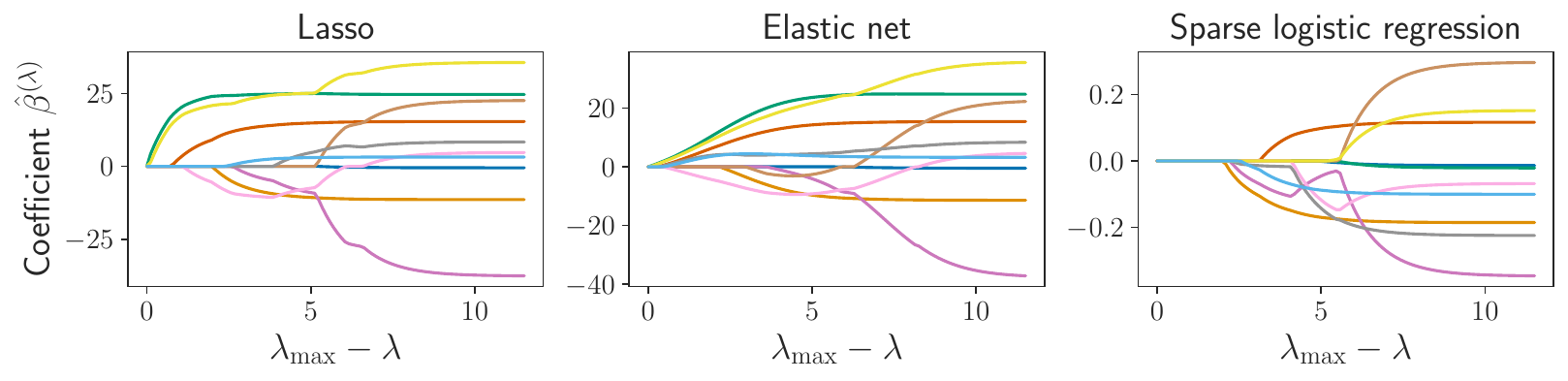}
    \end{subfigure}
    \caption{
        \emph{Regularization paths} (coefficient values as a function of $\lambda$),
        on the \emph{diabetes} and \emph{breast cancer} data sets
        for the Lasso, the elastic net and sparse logistic regression.
        This illustrates the weak differentiability of the paths.
        We used \emph{diabetes} for the Lasso and the elastic net, and the $10$ first features of \emph{breast cancer} for the sparse logistic regression.}
    \label{fig:intro_reg_path}
  \end{figure}
\emph{Differentiability of proximal operators.}
The key point to obtain an implicit differentiation formula for non-smooth inner
problems is to differentiate the fixed point equation of proximal gradient
descent.
From a theoretical point of view, ensuring this differentiability at the optimum is non-trivial: \citet[Thm. 3.8]{Poliquin_Rockafellar1996a} showed that under a \emph{twice epi-differentiability} condition the proximal operator is differentiable at optimum.
For the convergence of \forwardandbackward in the non-smooth case, one has to ensure that, after enough iterations, the updates of the algorithms become differentiable.
\citet{Deledalle_Vaiter_Fadili_Peyre14} justified (weak) differentiability of proximal operators as they are non-expansive.
However this may not be a sufficient condition, see \citet{Bolte_Pauwels2020a,Bolte_Pauwels2020b}.
In our case, we show differentiability after \emph{support identification} of the algorithms: active constraints are identified after a finite number of iterations by proximal gradient descent \citep{Liang_Fadili_Peyere14,Vaiter_Peyre_Fadili2018} and proximal coordinate descent, see \citet[Sec. 6.2]{Nutini2018} or \citet{Klopfenstein_Bertrand_Gramfort_Salmon_Vaiter20}.
Once these constraints have been identified convergence is linear towards the Jacobian (see \Cref{thm:iterdiff_linear_convergence,fig:linear_convergence_lasso,fig:linear_convergence_logreg,fig:linear_convergence_svm}).

For the rest of this paper, we consider the bilevel optimization \Cref{pb:bilevel_opt} with the following assumptions on the inner \Cref{pb:generic_inner_pb}.
\begin{assumption}
    \emph{Smoothness.}
    \label{ass:smoothness}
    The function $f:\mathbb{R}^p\rightarrow \mathbb{R}$ is a convex, differentiable function, with a $L$-Lipschitz gradient.
\end{assumption}
\begin{assumption}
    \emph{Proper, closed, convex.}
    \label{ass:proper}
    For all $\lambda \in \bbR^r$, for any $j\in [p]$, the function $g_j(\cdot, \lambda):\mathbb{R} \rightarrow \mathbb{R}$  is proper, closed and convex.
\end{assumption}
\begin{assumption}
    \emph{Non-degeneracy}
    \label{ass:non_degeneracy}
    The problem admits at least one solution
    \begin{equation*}
        \argmin_{\beta \in \bbR^p} \Phi(\beta, \lambda) \neq \emptyset
        \enspace ,
    \end{equation*}
    and, for any
    $\hat \beta $
    solution of \Cref{pb:generic_inner_pb},
    we have
    \begin{equation*}
        - \nabla f (\hat \beta)
        \in
        \ri\left(\partial_{\beta} g(\hat \beta, \lambda)\right)
        \enspace .
    \end{equation*}
\end{assumption}
To be able to extend iterative and implicit differentiation to the non-smooth case, we need to introduce the notion of generalized support.
\begin{definition}
    \emph{Generalized support, \citep[Def. 1]{Nutini_Schmidt_Hare2019}.}
    \label{def:gsupp}
    \sloppy
    For a solution $\hat{\beta}\in \argmin_{\beta \in \mathbb{R}^p} \Phi(\beta, \lambda)$, its \emph{generalized support} $\hat{S} \subseteq [p]$ is the set of indices $j \in [p]$ such that $g_j$ is differentiable at $\hat{\beta}_j$
    \begin{align*}
        \hat{S}
        \eqdef \{j \in [p]: \partial_{\beta} g_j(\hat{\beta}_j, \lambda) \text{ is a singleton}\}
        \enspace .
    \end{align*}
    An iterative algorithm is said to achieve \textit{finite support identification} if its iterates $\beta^{(k)}$ converge to $\hat \beta$, and there exists $K \geq 0$ such that for all $j \notin \hat S$, for all $k \geq K, \beta_j^{(k)} = \hat \beta_j$.
\end{definition}
\textit{Examples.}
For the $\ell_1$ norm (promoting sparsity), $g_j(\hat{\beta}_j, \lambda) = e^\lambda |\hat{\beta}_j|$,
the generalized support is $\hat{S} \eqdef \{ j \in [p]: \hat{\beta}_j \neq 0\}$.
This set corresponds to the indices of the non-zero coefficients, which is the usual support definition.
For the SVM estimator, $g_j(\hat{\beta}_j, \lambda) = \iota_{[0, e^\lambda]}(\hat{\beta_j})$.
This function is non-differentiable at $0$ and at $e^\lambda$.
The generalized support for the SVM estimator then corresponds to the set of indices such that $\hat{\beta}_j \in ] 0, e^\lambda [$.
\medskip

Finally, to prove local linear convergence of the Jacobian we assume regularity and strong convexity on the generalized support.
\begin{assumption}
    \emph{Locally $\mathcal{C}^2$ and $\mathcal{C}^3$.}
    \label{ass:taylor_expansion}
    The map $\beta \mapsto f(\beta)$ is locally $\mathcal{C}^3$ around $\hat{\beta}$.
    For all $\lambda \in \bbR^r$, for all $j \in \hat S$ the map $g_j(\cdot, \lambda)$ is locally $\mathcal{C}^2$ around $\hat{\beta}_j$.
\end{assumption}
\begin{assumption}
    \emph{Restricted injectivity.}
    \label{ass:restricted_injectivity}
    Let $\hat \beta$ be a solution of \Cref{pb:generic_inner_pb} and $\hat S$ its generalized support.
    The solution $\hat \beta$ satisfies the following restricted injectivity condition
    \begin{align*}
        \nabla^2_{\hat S, \hat S} f(\hat{\beta}) \succ 0 \enspace .
    \end{align*}
\end{assumption}
\Cref{ass:smoothness,ass:proper} are classical to ensure inner problems
can be solved using proximal algorithms.
\Cref{ass:non_degeneracy} can be seen as a generalization of constraint qualifications \citep[Sec.~1]{Hare_Lewis2007} and is crucial to ensure \emph{support identification}.
Note that \Cref{ass:non_degeneracy} is hard to verify in advance in practice, and hard to relax theoretically \citep{Fadili_Malick_Peyre2018}.
\Cref{ass:taylor_expansion,ass:restricted_injectivity} are classical for the analysis \citep{Liang_Fadili_Peyre17} and sufficient to derive rates of convergence for the Jacobian of the inner problem once the generalized support has been identified.
\Cref{ass:taylor_expansion} is met for usual quadratic and logistic losses, as well as for usual penalties ($\ell_1$, $\ell_1 + \ell_2$-squared, box constraints).
For instance for the Lasso \Cref{ass:restricted_injectivity} boils down to $X_{S}^\top X_{S} \succ 0$, which holds with probability one if the entries of $X$ are drawn from a continuous distribution \citep{Tibshirani13}.

The next lemma guarantees uniqueness of \Cref{pb:generic_inner_pb} under \Cref{ass:non_degeneracy,ass:restricted_injectivity}.
%
\begin{lemma}
    \emph{\citep[Prop. 4.1]{Liang_Fadili_Peyre17}.}
    \label{lem:unicity}
    Assume that there exists a neighborhood $\Lambda$ of $\lambda$ such that
    \Cref{ass:non_degeneracy,ass:restricted_injectivity} are satisfied for every $\lambda \in \Lambda$.
    Then for every $\lambda \in \Lambda$, \Cref{pb:generic_inner_pb} has a unique solution, and the map $\lambda \mapsto \hat \beta^{(\lambda)}$
    is well-defined on $\Lambda$.
\end{lemma}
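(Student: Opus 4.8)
The plan is to establish uniqueness pointwise: I fix $\lambda \in \Lambda$ and show that any two minimizers of $\Phi(\cdot,\lambda)$ coincide. Existence is already granted by \Cref{ass:non_degeneracy}, so uniqueness immediately makes $\lambda \mapsto \hat\beta^{(\lambda)}$ single-valued, hence well defined, on all of $\Lambda$. Suppose then that $\hat\beta^1$ and $\hat\beta^2$ are two solutions, and set $d \eqdef \hat\beta^2 - \hat\beta^1$ and $\hat\beta^t \eqdef (1-t)\hat\beta^1 + t\hat\beta^2$ for $t \in [0,1]$. Since $\Phi(\cdot,\lambda) = f + g$ is convex, its set of minimizers is convex, so every $\hat\beta^t$ is again a solution and $t \mapsto \Phi(\hat\beta^t,\lambda)$ is constant.

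First I would record an affineness lemma. On $[0,1]$ each of the maps $t \mapsto f(\hat\beta^t)$ and $t \mapsto g_j(\hat\beta^t_j,\lambda)$ is convex (a convex function precomposed with an affine map) and finite (by convexity and finiteness at the endpoints, which holds because the solutions have finite objective value). Their total sum $\Phi(\hat\beta^t,\lambda)$ is constant; writing any one of these convex terms as the constant minus the sum of the others exhibits it as both convex and concave, hence affine. In particular $t \mapsto f(\hat\beta^t)$ is affine and, for every $j$, $g_j$ is affine along the segment $[\hat\beta^1_j,\hat\beta^2_j]$.

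The crux is to show that $d$ is supported on the generalized support $\hat S$ of $\hat\beta^1$ (\Cref{def:gsupp}). By separability of $g$, \Cref{ass:non_degeneracy} decouples into $-\nabla_j f(\hat\beta^1) \in \ri\,\partial_\beta g_j(\hat\beta^1_j,\lambda)$ for every $j$. Fix $j \notin \hat S$, so that $\partial_\beta g_j(\hat\beta^1_j,\lambda)$ is a nondegenerate interval, and assume for contradiction $\hat\beta^1_j \neq \hat\beta^2_j$, say $\hat\beta^1_j < \hat\beta^2_j$. Since $g_j$ is affine with some slope $m$ on $[\hat\beta^1_j,\hat\beta^2_j]$, its right derivative at $\hat\beta^1_j$ equals $m$, so $m$ is the upper endpoint of $\partial_\beta g_j(\hat\beta^1_j,\lambda)$; the relative-interior membership then forces $-\nabla_j f(\hat\beta^1) < m$ strictly. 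On the other hand, for every $t \in (0,1)$ the point $\hat\beta^t_j$ lies in the open interval where $g_j$ is differentiable with derivative $m$, so optimality of $\hat\beta^t$ gives $-\nabla_j f(\hat\beta^t) = m$; letting $t \to 0^+$ and using continuity of $\nabla f$ yields $-\nabla_j f(\hat\beta^1) = m$, a contradiction. Hence $d_j = 0$ for all $j \notin \hat S$, i.e. $\operatorname{supp}(d) \subseteq \hat S$.

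Finally I would invoke restricted injectivity. Because $t \mapsto f(\hat\beta^t)$ is affine and $f$ admits a Hessian at $\hat\beta^1$ (the very object appearing in \Cref{ass:restricted_injectivity}, whose existence is guaranteed by \Cref{ass:taylor_expansion}), the second derivative of this affine map vanishes, giving $d^\top \nabla^2 f(\hat\beta^1) d = 0$. Since $d$ is supported on $\hat S$, this reads $d_{\hat S}^\top \nabla^2_{\hat S,\hat S} f(\hat\beta^1)\, d_{\hat S} = 0$, and positive definiteness of $\nabla^2_{\hat S,\hat S} f(\hat\beta^1)$ (\Cref{ass:restricted_injectivity}) forces $d_{\hat S} = 0$, hence $d = 0$ and $\hat\beta^1 = \hat\beta^2$. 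I expect the delicate part to be the support step: one must carefully pair the \emph{strict} inequality coming from the relative-interior (non-degeneracy) condition with the equality forced along the open segment, and close the gap using continuity of $\nabla f$. The convexity bookkeeping of the affineness lemma and the concluding quadratic-form argument are comparatively routine.
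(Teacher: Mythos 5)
The paper itself gives no proof of this lemma: it is imported wholesale from \citet[Prop.~4.1]{Liang_Fadili_Peyre17}, so there is no internal argument to compare yours against. Your proof is correct and self-contained, which makes it a genuine alternative to the citation. Each step holds: the solution set of the convex inner problem is convex, so the whole segment between two solutions consists of minimizers; constancy of the (finite) sum $f + \sum_j g_j$ along the segment forces each convex summand to be affine there; for $j \notin \hat S$ the slope $m$ of $g_j$ on the segment is the relevant endpoint of the non-singleton interval $\partial_\beta g_j(\hat\beta^1_j,\lambda)$, so the relative-interior condition of \Cref{ass:non_degeneracy} gives the strict inequality $-\nabla_j f(\hat\beta^1) \neq m$, while first-order optimality at interior points of the segment (where $g_j$ is differentiable with derivative $m$) combined with continuity of $\nabla f$ forces equality in the limit $t \to 0^+$; this contradiction confines $d$ to $\hat S$. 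Finally, affineness of $t \mapsto f(\hat\beta^t)$ gives $d_{\hat S}^\top \nabla^2_{\hat S,\hat S} f(\hat\beta^1)\, d_{\hat S} = 0$, and \Cref{ass:restricted_injectivity} kills $d$.

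Two comparative remarks. First, your argument leans on separability of $g$ at every turn (one-dimensional subdifferentials, coordinate-wise decoupling of the relative interior), whereas the cited proposition is established by \citet{Liang_Fadili_Peyre17} in the general framework of partial smoothness, where the role of $\hat S$ is played by an active $\mathcal{C}^2$ manifold; your route is more elementary and matches exactly the separable setting of this paper, while theirs also covers non-separable penalties. Second, beyond \Cref{ass:non_degeneracy,ass:restricted_injectivity} named in the statement, you also use \Cref{ass:smoothness,ass:proper} (continuity of $\nabla f$, convexity, both built into the definition of \Cref{pb:generic_inner_pb}) and the existence of the Hessian at $\hat\beta^1$ via \Cref{ass:taylor_expansion}; since \Cref{ass:restricted_injectivity} already presupposes that $\nabla^2_{\hat S, \hat S} f(\hat\beta^1)$ is well defined, this is a presentational point rather than a gap, but the lemma as stated is formally silent about it.
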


We first show how implicit and iterative differentiation can be used with a non-smooth inner problem.
\citet{Peyre_Fadili11} proposed to smooth the inner optimization problem,
\citet{Ochs_Ranftl_Brox_Pock15,Frecon_Salzo_Pontil2018} relied on the \forward combined with Bregman iterations to get differentiable steps.
For non-smooth optimization problems, implicit differentiation has been considered for (constrained) convex optimization problems \citep{Gould_Fernando_Cherian_Anderson_Cruz_Guo2016,Amos_Kolter17,Agrawal_Amos_Barratt_Boyd_Diamond_Kolter19}, Lasso-type problems \citep{Mairal_Bach_Ponce12,Bertrand_Klopfenstein_Blondel_Vaiter_Gramfort_Salmon2020}, total variation penalties \citep{Cherkaoui_Sulam_Moreau2020} and generalized to strongly monotone operators \citep{Winston_Kolter2020}.

\subsection{Hypergradient Computation: Implicit Differentiation}
\label{sub:implicit_nonsmooth}
%
The exact proof of \Cref{thm:smooth_implicitdiff} cannot be applied when $\beta \mapsto \Phi(\beta, \lambda)$ is non-smooth, as \Cref{eq:grad,eq:fixed_point_smooth} no longer hold.
Nevertheless, instead of the optimality condition of smooth optimization,
\Cref{eq:fixed_point_smooth}, one can leverage the fixed point iteration of
proximal gradient descent, which we will see in \Cref{eq:fixed_point_pgd}.
The main theoretical challenge is to show the differentiability of the function
$\beta \mapsto \prox_{\gamma g} (\beta - \gamma \nabla f(\beta))$.
Besides, taking advantage of the generalized sparsity of the regression coefficients $\hat \beta^{(\lambda)}$, one can show that the Jacobian $\hat \jac$ is row-sparse, leading to substantial computational benefits when computing the hypergradient $\nabla_\lambda \cL(\lambda)$) for \Cref{pb:generic_inner_pb},
%
%
%
%
%
%
\begin{theorem}
    \emph{Non-smooth implicit formula}
    \label{thm:non-smooth_implicitdiff}
    \sloppy
    Suppose \Cref{ass:smoothness,ass:proper,ass:taylor_expansion} hold.
    Let $0 < \gamma \leq 1 / L$, where $L$ is the Lipschitz constant of $\nabla f$.
    Let $\lambda \in \bbR^r$, $\Lambda$ be a neighborhood of $\lambda$, and $\Gamma^{\Lambda} \eqdef  \condset{\hat \beta^{(\lambda)} - \gamma \nabla f (\hat \beta^{(\lambda)}) }{ \lambda \in \Lambda }$.
    In addition,
    \begin{enumerate}[label=(H\arabic*)]\setlength\itemsep{1pt}
        \item  \label{hyp:uniqueness} Suppose \Cref{ass:non_degeneracy,ass:restricted_injectivity} hold on $\Lambda$.
        \item  \label{hyp:diff_path} Suppose $\lambda \mapsto \hat \beta^{(\lambda)}$ is continuously differentiable on $\Lambda$.
        \item \label{hyp:prox_c1} Suppose for all
        $z \in \Gamma^{\Lambda}$,
        $\lambda \mapsto  \prox_{\gamma g(\cdot, \lambda)}(z) $
        is continuously differentiable on $\Lambda$.
        \item \label{hyp:prox_lip} Suppose $\partial_z \prox_{\gamma g(\cdot, \lambda)}$ and $\partial_{\lambda} \prox_{\gamma g(\cdot, \lambda)}$ are Lipschitz continuous on $ \Gamma^{\Lambda} \times \Lambda$.
    \end{enumerate}
    Let $\hat \beta \eqdef \hat \beta^{(\lambda)}$ be the solution of \Cref{pb:generic_inner_pb}, $\hat S$ its generalized support of cardinality $\hat{s}$.
    Then the Jacobian $\hat \jac$ of the inner \Cref{pb:generic_inner_pb} is given by the following formula,

    $\hat{z}
        =
        \hat \beta -
         \gamma \nabla f(\hat \beta )$,
    and
    $A \eqdef \Id_{\hat s}
    -
    \partial_z \prox_{\gamma g(\cdot, \lambda)}(\hat z)_{\hat S} \odot
    \left ( \Id_{\hat s} - \gamma \nabla^2_{\hat S,\hat S} f(\hat{\beta})\right )$
    \begin{align}
            \hat{\jac}_{\hat{S}^c:}
            &=
            \partial_{\lambda}\prox_{\gamma g(\cdot, \lambda)}\left(\hat{z}\right)_{\hat{S}^c} \enspace ,
            \label{eq:linear_system_nonsmooth_complement}
            \\
            \hat \jac_{\hat S:}
            &= A^{-1}
            \left (
                \partial_{\lambda} \prox_{\gamma g(\cdot, \lambda)}(\hat z)_{\hat S}
            - \gamma \partial_z \prox_{\gamma g(\cdot, \lambda)}(\hat z)_{\hat S}
            \odot \nabla^2_{\hat S,\hat S^c} f(\hat{\beta})
            \hat \jac_{\hat S^c}
            \right )
            \enspace .
            \label{eq:linear_system_nonsmooth}
    \end{align}
\end{theorem}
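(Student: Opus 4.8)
The plan is to differentiate the fixed point equation of proximal gradient descent, which plays the role that the stationarity condition $\nabla_\beta \Phi = 0$ played in the smooth \Cref{thm:smooth_implicitdiff}. Since $\hat \beta$ minimizes $\Phi(\cdot, \lambda)$ and $0 < \gamma \leq 1/L$, it satisfies
\begin{align*}
  \hat \beta = \prox_{\gamma g(\cdot, \lambda)}\bigl(\hat \beta - \gamma \nabla f(\hat \beta)\bigr) \enspace ,
\end{align*}
and writing $\hat z = \hat \beta - \gamma \nabla f(\hat \beta)$ this relation holds for every $\lambda$ in the neighborhood $\Lambda$, seen as a function of $\lambda$ through $z(\lambda) \eqdef \hat \beta^{(\lambda)} - \gamma \nabla f(\hat \beta^{(\lambda)})$.

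Before differentiating, I would establish that both sides are continuously differentiable in $\lambda$ so that the chain rule applies. The left-hand side is differentiable by \ref{hyp:diff_path}. The right-hand side is the composition $\lambda \mapsto \prox_{\gamma g(\cdot, \lambda)}(z(\lambda))$, where \ref{hyp:diff_path} together with \Cref{ass:taylor_expansion} (which makes $\nabla f$ of class $\mathcal{C}^1$) give differentiability of $z(\cdot)$, \ref{hyp:prox_c1} gives differentiability in the $\lambda$-slot, and differentiability in the $z$-slot at $\hat z$ follows from \Cref{ass:taylor_expansion} on the support (each $g_j$, $j \in \hat S$, is $\mathcal{C}^2$ near $\hat\beta_j$, so $\prox_{\gamma g_j}$ is differentiable there) together with local constancy of the prox off the support. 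I expect this to be the main obstacle: guaranteeing genuine differentiability of the proximal step at a non-smooth optimum is exactly the delicate point flagged before the statement, and it is what forces the interplay of support identification, \Cref{ass:taylor_expansion}, and the Lipschitz hypothesis \ref{hyp:prox_lip}.

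Differentiating the fixed point relation in $\lambda$, using $z'(\lambda) = (\Id - \gamma \nabla^2 f(\hat\beta))\hat\jac$ and the separability of the prox (so $\partial_z \prox$ acts as a row-wise rescaling $\odot$), the chain rule gives
\begin{align*}
  \hat\jac = \partial_z \prox_{\gamma g(\cdot, \lambda)}(\hat z) \odot \bigl[(\Id - \gamma \nabla^2 f(\hat\beta))\hat\jac\bigr] + \partial_\lambda \prox_{\gamma g(\cdot, \lambda)}(\hat z) \enspace .
\end{align*}
The key structural fact is that for $j \in \hat S^c$ the coordinate $\hat\beta_j$ sits at a non-differentiability point of $g_j$; by the non-degeneracy condition granted by \ref{hyp:uniqueness} (\Cref{ass:non_degeneracy}), $\hat z_j$ lies strictly interior to the region mapped to $\hat\beta_j$, so $\prox_{\gamma g_j}$ is locally constant there and $\partial_z \prox_{\gamma g_j}(\hat z_j) = 0$. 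Reading the displayed identity row by row, every row indexed by $\hat S^c$ loses its first term and collapses to \Cref{eq:linear_system_nonsmooth_complement}. For the rows in $\hat S$ I would expand $[\nabla^2 f(\hat\beta)\hat\jac]_{\hat S:} = \nabla^2_{\hat S, \hat S} f\, \hat\jac_{\hat S:} + \nabla^2_{\hat S, \hat S^c} f\, \hat\jac_{\hat S^c:}$, substitute, and move the terms carrying $\hat\jac_{\hat S:}$ to the left-hand side; this produces exactly $A\, \hat\jac_{\hat S:}$ on the left and yields \Cref{eq:linear_system_nonsmooth} after transferring the $\hat\jac_{\hat S^c:}$ contribution to the right.

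It remains to verify that $A$ is invertible, so that \Cref{eq:linear_system_nonsmooth} determines $\hat\jac_{\hat S:}$. Write $A = \Id_{\hat s} - D M$ with $D = \partial_z \prox_{\gamma g(\cdot, \lambda)}(\hat z)_{\hat S}$ diagonal and $M = \Id_{\hat s} - \gamma \nabla^2_{\hat S, \hat S} f(\hat\beta)$. Firm non-expansiveness of the proximal operator places the diagonal entries of $D$ in $[0,1]$, hence $\|D\|_2 \leq 1$; \Cref{ass:restricted_injectivity} gives $\nabla^2_{\hat S, \hat S} f(\hat\beta) \succ 0$, and since $\gamma \leq 1/L$ bounds its spectrum inside $(0, L]$ by interlacing, the eigenvalues of $M$ lie in $[0,1)$, so $\|M\|_2 < 1$. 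Therefore $\|D M\|_2 \leq \|D\|_2 \|M\|_2 < 1$, the Neumann series converges, and $A = \Id_{\hat s} - D M$ is invertible, which closes the argument.
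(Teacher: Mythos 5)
Your proposal is correct and takes essentially the same route as the paper's proof: differentiate the proximal gradient descent fixed point equation, use non-degeneracy to get $\partial_z \prox_{\gamma g_j(\cdot, \lambda)}(\hat z_j) = 0$ for $j \in \hat S^c$ (the paper delegates this local constancy of the prox, and its differentiability on $\hat S$, to \Cref{lemma:diff_prox}), collapse those rows, restrict to the support, and invert $A$ via the norm bound $\normin{D}_2 \normin{M}_2 < 1$ from non-expansiveness and \Cref{ass:restricted_injectivity}. The only differences are presentational: you re-derive the supporting lemma's content inline and phrase the invertibility through a Neumann series, which matches the paper's argument step for step.
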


\begin{proof}
    According to \Cref{lem:unicity}, \Cref{ass:restricted_injectivity,ass:non_degeneracy} ensure \Cref{pb:generic_inner_pb} has a unique minimizer and $\lambda \mapsto \hat \beta^{(\lambda)}$ is well-defined on $\Lambda$.
    We consider the proximal gradient descent fixed point equation:
    \begin{align}\label{eq:fixed_point_pgd}
        \hat \beta^{(\lambda)}
        =
        \prox_{\gamma g_{(\cdot, \lambda)}}
        \left(
            \hat \beta^{(\lambda)}
            - \gamma  \nabla f(\hat \beta^{(\lambda)})
        \right)
        \enspace .
    \end{align}
    \sloppy
    Together with the conclusion of \Cref{lem:unicity},
    \Cref{ass:smoothness,ass:taylor_expansion},
    and given \ref{hyp:diff_path}, \ref{hyp:prox_c1} and \ref{hyp:prox_lip},
    we have that
    $\lambda
    \mapsto
    \psi \left (
        \beta^{(\lambda)} - \gamma \nabla f (\hat \beta^{(\lambda)}), \lambda \right)
        \eqdef \prox_{\gamma g(\cdot, \lambda)}
    \left(\hat \beta^{(\lambda)} - \gamma \nabla f(\hat \beta^{(\lambda)})\right)$
    is differentiable at $\lambda$.
    One can thus differentiate \Cref{eq:fixed_point_pgd} \wrt $\lambda$, which leads to
    \begin{align}\label{eq:update_jac_pgd}
        \hat{\jac}
        =
        \partial_z \prox_{\gamma g(\cdot, \lambda)}(\hat z)
        \odot
        \left( \Id
            - \gamma \nabla^2 f(\hat{\beta}) \right) \hat{\jac}
            + \partial_{\lambda} \prox_{\gamma g(\cdot, \lambda)}
            \left(
                \hat z
            \right)
        \enspace ,
    \end{align}
    with $ \hat z = \hat{\beta} - \gamma  \nabla f(\hat{\beta})$.
    In addition to $0 < \gamma < 1 / L \leq 1 / L_j$, the separability of $g$ and \Cref{ass:smoothness,ass:proper,ass:taylor_expansion,ass:non_degeneracy} ensure (see \Cref{{lemma:diff_prox}}) that for any $j \in \hat{S}^c$,
    \begin{align}\label{eq:constant_prox_supp_c}
        \partial_z \prox_{\gamma g_j(\cdot, \lambda)} \left(\hat{\beta}_j - \gamma \nabla_j f(\hat{\beta})\right) = 0 \enspace .
    \end{align}
    Plugging \Cref{eq:constant_prox_supp_c} into \Cref{eq:update_jac_pgd} ensures \Cref{eq:linear_system_nonsmooth_complement} for all $j \in \hat{S}^c$
    \begin{align} \label{eq:value_jac_supp_c}
        \hat{\jac}_{j:}
        =
        \partial_{\lambda} \prox_{\gamma g_j(\cdot, \lambda)}
        \left(
            \hat{\beta}_j - \gamma \nabla_j f(\hat{\beta})
        \right) \enspace .
    \end{align}
    Plugging \Cref{eq:constant_prox_supp_c,eq:value_jac_supp_c} into \Cref{eq:update_jac_pgd} shows that the Jacobian restricted on the generalized support $\hat{S}$ satisfies the following linear system
    \begin{align}
        \left(
            \text{Id}_{\hat s} -
            \partial_z \prox_{\gamma g(\cdot, \lambda)}
            \left(\hat z\right)_{\hat{S}}
            \odot \big(\text{Id}_{\hat s} - \gamma  \nabla^2_{\hat S, \hat S} f(\hat{\beta})
            \big)
        \right)
        &\hat{\jac}_{\hat{S}:}
        =
        \nonumber
        \\
        - \gamma \partial_z \prox_{\gamma g(\cdot, \lambda)}(\hat z)_{\hat{S}} \odot
        &\nabla^2_{\hat S, \hat S^c} f(\hat{\beta})
        \hat \jac_{\hat S^c :}
        +
        \partial_{\lambda} \prox_{g}(\hat z)_{\hat S :}
        \enspace .
        \nonumber
    \end{align}
    Since $0 < \gamma \leq 1 / L$,
    \begin{align}
        \normin{\partial_z \prox_{\gamma g(\cdot, \lambda)}(\hat z)_{\hat S} \odot
         ( \Id_{\hat s} - \gamma  \nabla^2_{\hat S, \hat S} f( \hat{\beta}) )}_2
        \nonumber
        &\leq
        \normin{\partial_z \prox_{\gamma g(\cdot, \lambda)}(\hat z)_{\hat S}}
        \cdot
        \normin{
         \Id_{\hat s} - \gamma  \nabla^2_{\hat S, \hat S} f( \hat{\beta}) }_2
        \\
        & < 1
        \enspace .
        \label{eq:norm_smaller_one}
    \end{align}
    Since \Cref{eq:norm_smaller_one} holds,
    $A \eqdef \Id_{\hat s}
    -
    \partial_z \prox_{\gamma g(\cdot, \lambda)}(\hat z)_{\hat S} \odot
    ( \Id_{\hat s} - \gamma  \nabla^2_{\hat S, \hat S} f( \hat{\beta}) )$
    is invertible, which leads to \Cref{eq:linear_system_nonsmooth}.
\end{proof}
\begin{remark}
    In the smooth case a $p \times p$ linear system is needed to compute the Jacobian in \Cref{eq:jac}.
    For non-smooth problems this is reduced to an $\hat s \times \hat s$ linear system ($\hat s \leq p$ being the size of the generalized support, \eg the number of non-zero coefficients for the Lasso).
    This leads to significant speedups in practice, especially for very sparse vector $\hat \beta^{(\lambda)}$.
\end{remark}
\begin{remark}
    To obtain \Cref{thm:non-smooth_implicitdiff} we differentiated the fixed point equation of proximal gradient descent, though one could differentiate other fixed point equations (such as the one from proximal coordinate descent).
    The value of the Jacobian $\hat \jac$ obtained with different fixed point equations would be the same, yet the associated systems could have different numerical stability properties.
    We leave this analysis to future work.
\end{remark}
%
%
\subsection{Hypergradient Computation: Iterative Differentiation}
\label{sub:iterdiff_nonsmooth}
Instead of implicit differentiation, it is also possible to use iterative differentiation on proximal solvers.
In section \Cref{sub:iterdiff_smooth} we presented \forwardandbackward differentiation of proximal gradient descent (\Cref{alg:forward_pgd,alg:backward_pgd}).
In this section we study the iterative differentiation of proximal coordinate descent (\Cref{alg:forward_pcd,alg:backward_pcd}).
To instantiate algorithms easily on problems such as the Lasso, partial derivatives of usual proximal operators can be found in \Cref{table:partial_derivatives_prox}.

For coordinate descent, the computation of the iterative Jacobian in a forward way involves differentiating the following update
\begin{align*}
    z_j
    &
    \leftarrow \beta_j - \gamma_j \nabla_j f(\beta)\\
    \beta_j
    &
    \leftarrow
    \prox_{\gamma_j g_j}
    \left(
        \beta_j - \gamma_j \nabla_j f(\beta)
        \right)\\
%
%
    \jac_{j:}
    &
    \leftarrow
    \underbrace{\partial_z \prox_{\gamma_j g_j(\cdot, \lambda)}(z_j)}_{\in \bbR}
        \underbrace{\left (
            \jac_{j:} - \gamma_j \nabla_{j :}^2 f(\beta)\jac
        \right)}_{\in \bbR^p}
        +
        \underbrace{\partial_{\lambda} \prox_{\gamma_j g_j(\cdot, \lambda)}(z_j)}_{\in \bbR^p} \enspace .
\end{align*}
%
We address now the convergence of the iterative Jacobian scheme, a question which remained open in \citet[Section 4.1]{Deledalle_Vaiter_Fadili_Peyre14}.
We show next that the \forward converges to the Jacobian in the non-smooth separable setting of this paper. Moreover, we prove that the iterative Jacobian convergence is locally linear after support identification.
%
\begin{figure}[tb]
    \begin{minipage}[t]{0.485\linewidth}
    {\fontsize{5}{4}\selectfont
    \begin{algorithm}[H]
    \SetKwInOut{Input}{input}
    \SetKwInOut{Init}{init}
    \SetKwInOut{Parameter}{param}
    \caption{\textsc{\Forward PCD }}
    \Input{
        $X \in \bbR^{n \times p},
        y \in \bbR^{n},
        \lambda \in \bbR^r,
        n_{\mathrm{iter}} \in \bbN$, $\beta \in \bbR^p$, $\jac \in \bbR^{p \times r}, \gamma_1, \dots, \gamma_p$
        }
    \tcp{jointly compute coef. \& Jacobian }

        \For{$k = 1,\dots, n_{\mathrm{iter}}$}{
            \For{$j = 1, \hdots, p$}{

                \tcp*[h]{update the regression coefficients}\\
                $z_j \leftarrow
                \beta_j - \gamma_j  \nabla_j f( \beta)$
                \tcp*[l]{CD step}

                    $\diff z_j \leftarrow
                    \jac_{j:} -
                    \gamma_j \nabla^2_{j:} f(\beta) \jac$

                $\beta_j \leftarrow \prox_{\gamma_j g_j(\cdot, \lambda)}(z_j)$
                \tcp*[l]{prox. step}

                \tcp*[h]{update the Jacobian}\\
                \tcp*[h]{diff. \wrt $\lambda$}\\
                $\jac_{j:}
                \leftarrow
                \partial_z \prox_{\gamma_j g_j(\cdot, \lambda)}(z_j) \diff z_j$

                $\jac_{j:} \pluseq
                \partial_{\lambda} \prox_{\gamma_j g_j(\cdot, \lambda)}(z_j)$
            }
        $\beta^{(k)} = \beta$

        $\jac^{(k)} = \jac$
        }
    $v = \nabla C (\beta)$

    \Return{$\beta^{ n_{\mathrm{iter}}}, \jac^\top v$}
    \label{alg:forward_pcd}
    \end{algorithm}
    }
    \end{minipage}\hfill
    \begin{minipage}[t]{0.49\linewidth}
        {\fontsize{5}{4}\selectfont
        \begin{algorithm}[H]
        \SetKwInOut{Input}{input}
        \SetKwInOut{Init}{init}
        \SetKwInOut{Parameter}{param}
        \caption{\textsc{\Backward PCD }}
        \Input{
            $X \in \bbR^{n \times p},
            y \in \bbR^{n},
            \lambda \in \bbR^r,
            n_{\mathrm{iter}} \in \bbN$, $\beta \in \bbR^p,
            \gamma_1, \dots, \gamma_p$
            }
    \tcp{compute coef.}
    \For{$k = 1,\dots, n_{\mathrm{iter}}$}{
            \For{$j = 1, \hdots, p$}{

                \tcp*[h]{update the regression coefficients}\\
                $z_j \leftarrow
                \beta_j - \gamma_j \nabla_j f(\beta)$
                \tcp*[l]{CD step}

                $\beta_j \leftarrow \prox_{\gamma_j g_j(\cdot, \lambda)}(z_j)$
                \tcp*[l]{prox. step}

                $\beta^{(k, j)} = \beta; z_j^{(k)} = z_j$
                \hspace{-0.5em}\tcp*[l]{store iterates}
                }
            }

        \tcp{compute gradient $g$ in a backward way}

        $v = \nabla C (\beta^{ n_{\mathrm{iter}}})$,
        $h = 0_{\bbR^r}$

        \For{$k = n_{\mathrm{iter}}, n_{\mathrm{iter}} - 1, \dots, 1$}{
            \For{$j = p, \dots, 1$}{
                $h \minuseq \gamma_j v_j
                \partial_{\lambda} \prox_{\gamma_j g_j(\cdot, \lambda)}
                \big( z_j^{(k)} \big) $

                $v_j \timeseq \partial_z \prox_{\gamma_j g_j(\cdot, \lambda)}\big( z_j^{(k)} \big) $

                $v
                \minuseq \gamma_j v_j \nabla_{j:}^2 f(\beta^{(k, j)})$
                \tcp*[l]{$\bigo(np)$}
            }

        }
        \Return{
            $\beta^{ n_{\mathrm{iter}}}, h$
            }
        \label{alg:backward_pcd}
        \end{algorithm}
        }
    \end{minipage}
\end{figure}
\begin{restatable}{theorem}{iterdifflinconv}
    \emph{Local linear convergence of the Jacobian.}
    \label{thm:iterdiff_linear_convergence}
    Let $0 < \gamma \leq 1 / L$.
    Suppose \Cref{ass:smoothness,ass:proper,ass:taylor_expansion}
    hold.
    Let $\lambda \in \bbR^r$, $\Lambda$ be a neighborhood of $\lambda$, and $\Gamma^{\Lambda} \eqdef  \condset{\hat \beta^{(\lambda)} - \gamma \nabla f (\hat \beta^{(\lambda)}) }{ \lambda \in \Lambda }$.
    In addition, suppose hypotheses \ref{hyp:uniqueness} to \ref{hyp:prox_lip} from \Cref{thm:non-smooth_implicitdiff} are satisfied and the sequence $(\beta^{(k)})_{k \in \bbN}$ generated by \Cref{alg:forward_pgd} (respectively by \Cref{alg:forward_pcd}) converges toward $\hat \beta$.

    Then, the sequence of Jacobians $(\jac^{(k)})_{k\geq 0}$ generated by the \forward differentiation of proximal gradient descent (\Cref{alg:forward_pgd})  (respectively by  \forward differentiation of proximal coordinate descent, \Cref{alg:forward_pcd}) converges locally linearly towards $\hat \jac$.
\end{restatable}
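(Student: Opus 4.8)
The plan is to reduce the analysis to the regime \emph{after finite support identification}, where the forward Jacobian recursion becomes an asymptotically contractive affine iteration whose fixed point is exactly the implicit formula of \Cref{thm:non-smooth_implicitdiff}. First I would invoke support identification: under \ref{hyp:uniqueness} (in particular \Cref{ass:non_degeneracy,ass:restricted_injectivity}) and the assumed convergence $\beta^{(k)} \to \hat\beta$, proximal gradient descent (resp. proximal coordinate descent) identifies the generalized support in finite time \citep{Liang_Fadili_Peyere14,Vaiter_Peyre_Fadili2018,Nutini2018,Klopfenstein_Bertrand_Gramfort_Salmon_Vaiter20}: there is $K$ with $\beta^{(k)}_{\hat S^c} = \hat\beta_{\hat S^c}$ for all $k \geq K$. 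The same hypotheses place us in the setting of \citet{Liang_Fadili_Peyre17}, so $\beta^{(k)} \to \hat\beta$ is in fact \emph{linear}; hence, using \Cref{ass:smoothness}, $z^{(k)} \eqdef \beta^{(k-1)} - \gamma \nabla f(\beta^{(k-1)}) \to \hat z \eqdef \hat\beta - \gamma\nabla f(\hat\beta)$ linearly as well.

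For the off-support rows, fix $j \in \hat S^c$ and $k \geq K$: Equation \ref{eq:constant_prox_supp_c} (through \Cref{lemma:diff_prox}) gives $\partial_z \prox_{\gamma g_j(\cdot,\lambda)}(z^{(k)}_j) = 0$, so the forward update of \Cref{alg:forward_pgd} collapses to $\jac^{(k)}_{\hat S^c:} = \partial_\lambda \prox_{\gamma g(\cdot,\lambda)}(z^{(k)})_{\hat S^c}$. Since $z^{(k)} \to \hat z$ linearly and $\partial_\lambda \prox$ is Lipschitz by \ref{hyp:prox_lip}, these rows converge linearly to $\partial_\lambda \prox_{\gamma g(\cdot,\lambda)}(\hat z)_{\hat S^c} = \hat\jac_{\hat S^c:}$, recovering \ref{eq:linear_system_nonsmooth_complement}.

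For the on-support rows, collecting the indices $j \in \hat S$ of the forward update yields an affine recursion $\jac^{(k)}_{\hat S:} = M^{(k)} \jac^{(k-1)}_{\hat S:} + b^{(k)}$ with iteration matrix $M^{(k)} = \partial_z \prox_{\gamma g(\cdot,\lambda)}(z^{(k)})_{\hat S} \odot (\Id_{\hat s} - \gamma \nabla^2_{\hat S,\hat S} f(\beta^{(k-1)}))$ and offset $b^{(k)}$ gathering $\partial_\lambda \prox_{\gamma g(\cdot,\lambda)}(z^{(k)})_{\hat S}$ and the coupling term $-\gamma\,\partial_z \prox_{\gamma g(\cdot,\lambda)}(z^{(k)})_{\hat S} \odot \nabla^2_{\hat S,\hat S^c} f(\beta^{(k-1)}) \jac^{(k-1)}_{\hat S^c:}$. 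Using \ref{hyp:prox_lip}, the local $\mathcal{C}^3$-smoothness of $f$ (\Cref{ass:taylor_expansion}, which makes $\nabla^2 f$ locally Lipschitz), and the linear convergence of $\beta^{(k)}$, $z^{(k)}$ and of the off-support rows, both $M^{(k)} \to M^\infty \eqdef \partial_z \prox_{\gamma g(\cdot,\lambda)}(\hat z)_{\hat S} \odot (\Id_{\hat s} - \gamma \nabla^2_{\hat S,\hat S} f(\hat\beta)) = \Id_{\hat s} - A$ and $b^{(k)} \to b^\infty$ linearly. By \ref{eq:norm_smaller_one}, $\normin{M^\infty}_2 < 1$, and the fixed point $(\Id_{\hat s} - M^\infty)^{-1} b^\infty = A^{-1} b^\infty$ is precisely $\hat\jac_{\hat S:}$ from \ref{eq:linear_system_nonsmooth}. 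Setting $E^{(k)} \eqdef \jac^{(k)}_{\hat S:} - \hat\jac_{\hat S:}$ and subtracting the fixed-point identity gives $E^{(k)} = M^{(k)} E^{(k-1)} + (M^{(k)} - M^\infty)\hat\jac_{\hat S:} + (b^{(k)} - b^\infty)$. Since $\normin{M^{(k)}}_2 \leq \nu < 1$ for $k$ large and the two perturbations decay linearly, a standard lemma on perturbed contractions yields $\normin{E^{(k)}} \to 0$ linearly (the arbitrary initialization $\jac^{(0)}$ enters only through $E^{(k-1)}$, whose influence is damped); together with the off-support case this proves local linear convergence of $\jac^{(k)}$ to $\hat\jac$ for \Cref{alg:forward_pgd}.

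The proximal coordinate descent case (\Cref{alg:forward_pcd}) follows the same template, with one epoch now being a Gauss--Seidel sweep, so the limiting one-epoch iteration matrix is a \emph{product} of the coordinate-wise updates rather than the single Jacobi-type matrix $M^\infty$. I expect the main obstacle to be showing that this limiting coordinate-descent iteration matrix has spectral radius strictly below $1$ --- the analogue of \ref{eq:norm_smaller_one}, which there followed from a clean operator-norm bound but here must be extracted from the restricted strong convexity $\nabla^2_{\hat S,\hat S} f(\hat\beta) \succ 0$ (\Cref{ass:restricted_injectivity}) through the structure of the sweep. Because such a matrix is generally non-symmetric, the contraction should be measured in an adapted norm, equivalently controlled by the spectral radius $\rho(\cdot)$ up to arbitrarily small slack, so the perturbed-recursion lemma must be phrased in terms of $\rho(\cdot)$ rather than $\normin{\cdot}_2$; this is the only place where the argument genuinely departs from the gradient-descent analysis.
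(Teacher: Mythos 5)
Your proposal follows the same route as the paper's proof: finite support identification, after which the off-support rows of the Jacobian are handled directly and the on-support rows obey an affine recursion whose limiting matrix is a contraction, concluded by a perturbed-recursion (asymptotic vector autoregressive) argument; your ``standard lemma on perturbed contractions'' is exactly the paper's \Cref{lemma:asymptotic_var}, which is deliberately stated in terms of $\rho(A)<1$ (via the bound $\normin{A^k}_2 \leq c\,(\rho(A)+\delta)^k$) so that it also covers the coordinate-descent case, as you anticipate. For proximal gradient descent your write-up is complete, and in one respect more careful than the paper's: you keep the cross term $-\gamma\, \partial_z \prox_{\gamma g(\cdot,\lambda)}(z^{(k)})_{\hat S} \odot \nabla^2_{\hat S,\hat S^c} f(\beta^{(k-1)})\, \jac^{(k-1)}_{\hat S^c:}$ inside $b^{(k)}$, so that your fixed point $A^{-1}b^{\infty}$ matches \Cref{eq:linear_system_nonsmooth}, whereas the recursion displayed in the paper's PGD proof omits it.

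The one genuine gap is in the coordinate-descent half. You correctly reduce an epoch to a Gauss--Seidel sweep $\jac^{(k+1)}_{\hat S:} = A_{\hat s}\cdots A_1 \,\jac^{(k)}_{\hat S:} + b + \epsilon^{(k)}$ and correctly identify that everything hinges on $\rho(A_{\hat s}\cdots A_1) < 1$, but you leave that bound as an expectation (``the main obstacle'') rather than establishing it. This is not a formality: the product is non-symmetric, its operator norm can exceed one, and extracting $\rho < 1$ from \Cref{ass:restricted_injectivity} through the structure of the sweep is a genuine argument. The paper closes this step by citation: $\rho(A_{\hat s}\cdots A_1)<1$ is \citet[Lemma 8]{Klopfenstein_Bertrand_Gramfort_Salmon_Vaiter20}, and the local linear convergence of the coordinate-descent iterates $(\beta^{(k)})_{k\in\bbN}$ (which you need so that $\epsilon^{(k)}$ decays linearly; \citet{Liang_Fadili_Peyere14} only covers PGD) is Theorem 2 of that same reference --- the one you already invoke for support identification. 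With those two ingredients supplied, your proof is complete and coincides with the paper's.
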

Proof of \Cref{thm:iterdiff_linear_convergence} can be found in \Cref{app:proof:nonsmooth_iterdiff}.
\def \figsize {1}
%
%
\begin{figure*}[tb]
    \centering
    \begin{subfigure}[b]{1\textwidth}
        \centering
        \includegraphics[width=0.42\linewidth]{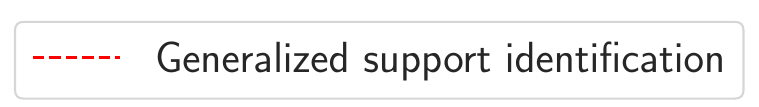}
        \includegraphics[width=\figsize\linewidth]{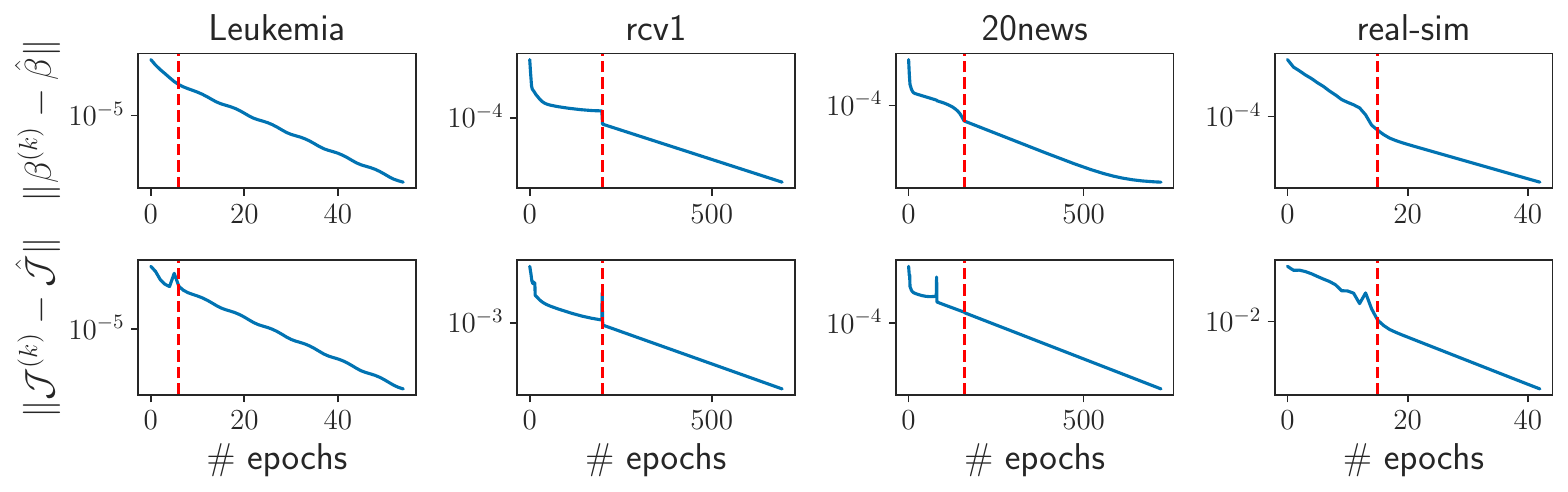}
    \end{subfigure}
    \caption{
        \emph{Local linear convergence of the Jacobian for the SVM.}
    Distance to optimum for the coefficients $\beta$ (top) and the Jacobian $\jac$ (bottom) of the \forward differentiation of proximal coordinate descent (\Cref{alg:forward_pcd}) on multiple data sets. One epoch corresponds to one
    pass over the data, \ie one iteration with proximal gradient descent.}
    \label{fig:linear_convergence_svm}
\end{figure*}


%
\textit{Comments on \Cref{fig:linear_convergence_svm}.}
We illustrate the results of \Cref{thm:iterdiff_linear_convergence} on SVM (for the Lasso and sparse logistic regression, see \Cref{fig:linear_convergence_lasso,fig:linear_convergence_logreg} in \Cref{app:sec:additional_xp}) for multiple data sets (\emph{leukemia}, \emph{rcv1}, \emph{news20} and \emph{real-sim}\footnote{Data available on the \emph{libsvm} website: \url{https://www.csie.ntu.edu.tw/~cjlin/libsvmtools/datasets/}}).
The values of the hyperparameters $\lambda$ are summarized in \Cref{table:setting_linear_cv}.
Regression coefficients $\hat \beta^{(\lambda)}$ were computed to machine
precision (up to duality gap smaller than $10^{-16}$) using  a state-of-the-art
coordinate descent solver implemented in \texttt{Lightning} \citep{Blondel_Pedregosa2016}.
The exact Jacobian was computed via implicit differentiation (\Cref{eq:linear_system_nonsmooth}).
Once these quantities were obtained, we used the \forward differentiation of proximal coordinate descent (\Cref{alg:forward_pcd}) and monitored the distance between the iterates of the regression coefficients $\beta^{(k)}$ and the exact solution $\hat \beta$.
We also monitored the distance between the iterates of the Jacobian $\jac^{(k)}$ and the exact Jacobian $\hat \jac$.
The red vertical dashed line represents the iteration number where support identification happens.
Once the support is identified, \Cref{fig:linear_convergence_lasso,fig:linear_convergence_logreg,fig:linear_convergence_svm} illustrate the linear convergence of the Jacobian.
However, the behavior of the iterative Jacobian before support identification is more erratic and not even monotone.

%
\subsection{Hypergradient Computation with Approximate Gradients}
\label{sub:approximate_gradient}
%
%
As mentioned in \Cref{sec:hypergrad_smooth}, relying on iterative algorithms to solve \Cref{pb:generic_inner_pb}, one only has access to an approximation of $\hat \beta^{(\lambda)}$:
this may lead to numerical errors when computing the gradient in \Cref{thm:non-smooth_implicitdiff}.
Extending the result of \citet[Thm. 1]{Pedregosa16}, which states that hypergradients can be computed approximately,
we give a stability result for the computation of approximate hypergradients in the case of non-smooth inner problems.
For this purpose we need to add several assumptions to the previous framework.
\begin{restatable}{theorem}{approxgrad}
    \emph{Bound on the error of approximate hypergradient.}
    \label{thm:approximate_gradient}
    For $\lambda \in \bbR^r$, let $\hat \beta^{(\lambda)} \in \bbR^p$ be the exact solution of the inner \Cref{pb:generic_inner_pb}, and $\hat S$ its generalized support.
    Suppose \Cref{ass:smoothness,ass:proper,ass:taylor_expansion}
    hold.
    Let $\Lambda$ be a neighborhood of $\lambda$, and $\Gamma^{\Lambda} \eqdef  \condset{\hat \beta^{(\lambda)} - \gamma \nabla f (\hat \beta^{(\lambda)}) }{ \lambda \in \Lambda }$.
    Suppose hypotheses \ref{hyp:uniqueness} to \ref{hyp:prox_lip} from \Cref{thm:non-smooth_implicitdiff} are satisfied.
    In addition suppose
    \begin{enumerate}[label=(H\arabic*)]\setlength\itemsep{1pt}
        \setcounter{enumi}{4}
        \item \label{hyp:lip_Hessian} The application $\beta \mapsto \nabla^{2} f(\beta)$ is Lipschitz continuous.
        \item \label{hyp:lip_criterion} The criterion
        $\beta \mapsto \nabla \mathcal{C}(\beta)$
        is Lipschitz continuous.
        \item \label{hyp:espilon_sol} Both optimization problems in \Cref{alg:implicit} are solved up to precision $\epsilon$ with support identification:
        $\normin{\beta^{(\lambda)} - \hat \beta^{(\lambda)} }
        \leq \epsilon$, $A^\top$ is invertible, and
        $ \normin{A^{-1 \top} \nabla_{\hat S} \cC(\beta^{(\lambda)}) - v} \leq \epsilon $.
    \end{enumerate}
    Then the error on the approximate hypergradient $h$ returned by \Cref{alg:implicit} is of the order of magnitude of the error $\epsilon$ on $\beta^{(\lambda)}$ and $v$
    \begin{equation*}
        \normin{\nabla \mathcal{L}(\lambda) - h}
        =
        \bigo (\epsilon)
        \enspace .
    \end{equation*}
\end{restatable}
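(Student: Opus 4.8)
The plan is to treat the hypergradient $h$ returned by \Cref{alg:implicit} as the \emph{exact} implicit-differentiation formula of \Cref{thm:non-smooth_implicitdiff}, but evaluated at the \emph{approximate} inputs, and then to control the discrepancy by a local Lipschitz/perturbation argument on a \emph{fixed} support. Concretely, using $\hat\jac_{\hat S:} = A^{-1} B$ with $B \eqdef \partial_{\lambda}\prox_{\gamma g(\cdot,\lambda)}(\hat z)_{\hat S} - \gamma\, \partial_z\prox_{\gamma g(\cdot,\lambda)}(\hat z)_{\hat S}\odot \nabla^2_{\hat S,\hat S^c}f(\hat\beta)\,\hat\jac_{\hat S^c:}$ and $\hat\jac_{\hat S^c:} = \partial_{\lambda}\prox_{\gamma g(\cdot,\lambda)}(\hat z)_{\hat S^c}$, I would rewrite the exact hypergradient as $\nabla\mathcal{L}(\lambda) = B^\top \hat v + \hat\jac_{\hat S^c:}^\top \nabla_{\hat S^c}\mathcal{C}(\hat\beta)$, where $\hat v \eqdef A^{-\top}\nabla_{\hat S}\mathcal{C}(\hat\beta)$. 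The returned $h$ has exactly this structure but with $\hat\beta$ replaced by $\beta^{(\lambda)}$ throughout (so $\hat z$ by $z = \beta^{(\lambda)} - \gamma\nabla f(\beta^{(\lambda)})$, $B$ by an approximate $\tilde B$, the off-support block by $\tilde\jac_{\hat S^c:}$, and $\hat v$ by the inexact solve $v$). The key enabling fact is that the support-identification clause of \ref{hyp:espilon_sol} forces both the exact and the approximate runs to use the \emph{same} index set $\hat S$; this is what converts a potentially discontinuous comparison into a smooth perturbation on a fixed block structure.

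First I would propagate the inner error $\normin{\beta^{(\lambda)} - \hat\beta} \le \epsilon$. Since $\nabla f$ is $L$-Lipschitz (\Cref{ass:smoothness}), $\normin{z - \hat z} \le (1 + \gamma L)\epsilon = \bigo(\epsilon)$. Then \ref{hyp:prox_lip} (Lipschitzness of $\partial_z\prox$ and $\partial_{\lambda}\prox$), \ref{hyp:lip_Hessian} (Lipschitzness of $\nabla^2 f$), and continuity of the finitely many blocks entering $B$ and $\hat\jac_{\hat S^c:}$ yield $\normin{\tilde B - B} = \bigo(\epsilon)$ and $\normin{\tilde\jac_{\hat S^c:} - \hat\jac_{\hat S^c:}} = \bigo(\epsilon)$. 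Next I would control the linear-system error: from \eqref{eq:norm_smaller_one} the matrix $A$ is invertible with $\normin{A^{-1}}$ locally bounded, and \ref{hyp:lip_criterion} makes $\nabla\mathcal{C}$ Lipschitz, so splitting $\hat v - v = A^{-\top}\big(\nabla_{\hat S}\mathcal{C}(\hat\beta) - \nabla_{\hat S}\mathcal{C}(\beta^{(\lambda)})\big) + \big(A^{-\top}\nabla_{\hat S}\mathcal{C}(\beta^{(\lambda)}) - v\big)$ bounds the first summand by $\normin{A^{-1}}\cdot \mathrm{Lip}(\nabla\mathcal{C})\cdot\epsilon$ and the second by $\epsilon$ via \ref{hyp:espilon_sol}; hence $\normin{\hat v - v} = \bigo(\epsilon)$.

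Finally I would assemble the estimate. Writing $\nabla\mathcal{L}(\lambda) - h = \big(B^\top(\hat v - v) + (B - \tilde B)^\top v\big) + \big(\hat\jac_{\hat S^c:}^\top\nabla_{\hat S^c}\mathcal{C}(\hat\beta) - \tilde\jac_{\hat S^c:}^\top\nabla_{\hat S^c}\mathcal{C}(\beta^{(\lambda)})\big)$, each summand is a product of a factor that stays bounded near $\hat\beta$ (by continuity, and by $v$ remaining close to $\hat v$) and a factor that is $\bigo(\epsilon)$, so the triangle inequality gives $\normin{\nabla\mathcal{L}(\lambda) - h} = \bigo(\epsilon)$. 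The main obstacle is not any individual bound but the bookkeeping of the \emph{composition}: both $B$ and $\hat\jac_{\hat S^c:}$ depend on $\hat\beta$ through several nested maps (the prox Jacobians evaluated at $\hat z$, the Hessian blocks $\nabla^2_{\hat S,\hat S^c}f$, and the off-support Jacobian), so I must check that each factor is simultaneously locally bounded and locally Lipschitz on the fixed support $\hat S$. The point worth making explicit is that this whole smooth-perturbation reasoning is licensed only by support identification: without it, the set $\hat S$ used to build $A$ and to restrict $\nabla\mathcal{C}$ could differ between the exact and approximate runs, and the error would no longer be continuous in $\epsilon$.
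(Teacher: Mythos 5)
Your overall architecture matches the paper's: the same split of the hypergradient into an on-support term involving $v$ and an off-support term, the same three error sources, and the same Lipschitz ingredients. But there is one genuine gap, and it sits exactly where the paper has to work hardest: your treatment of the linear system conflates two different matrices. In your notation, $A$ is the \emph{exact} matrix of \Cref{thm:non-smooth_implicitdiff}, built from $\hat z$ and $\nabla^2_{\hat S,\hat S}f(\hat\beta)$, and you accordingly set $\hat v = A^{-\top}\nabla_{\hat S}\cC(\hat\beta)$. However, the matrix appearing in \ref{hyp:espilon_sol} is the one constructed inside \Cref{alg:implicit}, i.e.\ built from the \emph{approximate} quantities $z = \beta^{(\lambda)} - \gamma\nabla f(\beta^{(\lambda)})$ and $\nabla^2_{\hat S,\hat S}f(\beta^{(\lambda)})$. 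So \ref{hyp:espilon_sol} controls $\normin{A_{\mathrm{approx}}^{-\top}\nabla_{\hat S}\cC(\beta^{(\lambda)}) - v}$, not $\normin{A^{-\top}\nabla_{\hat S}\cC(\beta^{(\lambda)}) - v}$, and your claim that the second summand in your splitting of $\hat v - v$ is ``bounded by $\epsilon$ via \ref{hyp:espilon_sol}'' does not follow as written.

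Closing this requires two additional steps, which are precisely the core of the paper's argument: (i) a matrix perturbation bound $\normin{A_{\mathrm{approx}} - A}_2 = \bigo(\epsilon)$, obtained by writing both matrices in the form $\Id - CD$ and combining non-expansiveness of $\partial_z \prox_{\gamma g(\cdot,\lambda)}$ with \ref{hyp:prox_lip} and \ref{hyp:lip_Hessian} — note that this is where \ref{hyp:lip_Hessian} does its main work, whereas you only invoke it for the $B$ block; and (ii) a perturbation result for the solutions of the two linear systems: the paper invokes \citet[Thm 7.2]{Higham2002}, but a Neumann-series argument also works, provided you have a uniform bound on $\normin{A^{-1}}$ (available from \Cref{eq:norm_smaller_one}) together with a smallness condition of the form $\normin{A^{-1}}\cdot\normin{A_{\mathrm{approx}} - A}_2 \leq \rho < 1$ so that $A_{\mathrm{approx}}^{-\top}$ stays uniformly bounded. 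With these two steps inserted, your assembly of the final bound goes through; the remainder of your proposal — propagation of the inner error to $z$, $\tilde B$, and the off-support Jacobian block, where you are in fact slightly more careful than the paper, which silently identifies the computed off-support block with the exact one — is sound.
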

Proof of \Cref{thm:approximate_gradient} can be found in \Cref{app:proof_approx_grad}.
Following the analysis of \citet{Pedregosa16}, two sources of approximation errors arise when computing the hypergradient:
one from the inexact computation of $\hat \beta$, and another from the approximate resolution of the linear system.
\Cref{thm:approximate_gradient} states that
if the inner optimization problem and the linear system are solved up to precision
$\epsilon$, \ie $\normin{\hat \beta^{(\lambda)} - \beta^{(\lambda)}} \leq \epsilon$
and
$\normin{A^{-1 \top } \nabla_S \cC(\beta^{(\lambda)}) - v} \leq \epsilon$,
then the approximation on the hypergradient is also of the order of $\epsilon$.

\begin{remark}
    The Lipschitz continuity of the proximity operator with respect to $\lambda$ \ref{hyp:prox_lip} is satisfied for usual proximal operators, in particular all the operators in \Cref{table:partial_derivatives_prox}.
   The Lipschitz continuity of the Hessian and the criterion, hypotheses \ref{hyp:lip_Hessian} and \ref{hyp:lip_criterion}, are satisfied for usual machine learning loss functions and criteria, such as the least squares and the logistic loss.
\end{remark}
\begin{remark}
    To simplify the analysis, we used the same tolerance for the resolution of the inner \Cref{pb:generic_inner_pb} and the resolution of the linear system.
    \Cref{thm:approximate_gradient} gives intuition on the fact that the inner problem does not need to be solved at high precision to lead to good hypergradients estimation.
    Note that in practice one does not easily control the distance between the approximate solution and the exact one $\normin{\beta^{(k)} - \hat \beta}$:
    most softwares provide a solution up to a given duality gap (sometimes even other criteria), not $\normin{\beta^{(k)} - \hat \beta}$.
\end{remark}
%
%
\subsection{Proposed Method for Hypergradient Computation}
\label{subsub:proposed_methods}

We now describe our proposed method to compute the hypergradient of \Cref{pb:bilevel_opt}.
In order to take advantage of the sparsity induced by the generalized support, we propose an implicit differentiation algorithm for non-smooth inner problem that can be found in \Cref{alg:implicit}.
First, we compute a solution of the inner \Cref{pb:generic_inner_pb} using a solver identifying the generalized support \citep{Liang_Fadili_Peyere14,Klopfenstein_Bertrand_Gramfort_Salmon_Vaiter20}.
Then, the hypergradient is computed by solving the linear system in \Cref{eq:linear_system_nonsmooth}.
This linear system, as mentioned in \Cref{sec:hypergrad_smooth}, can be solved using multiple algorithms, including conjugate gradient or fixed point methods.
\Cref{tab:summary_costs} summarizes the computational complexity in space and time of the described algorithms.

{\centering
\begin{table}[h]
  \caption{Cost in time and space for each method:
  $p$ is the number of features, $n$ the number of samples, $r$ the number of hyperparameters, and $\hat s$ is the size of the generalized support (\Cref{def:gsupp}, $\hat s \leq p$ and usually $\hat s \ll p$).
  The number of iterations of the inner solver is noted
  $\ninner$, the number of iterations of the solver of the linear system is noted  $n_{\text{sys}}$.}
  \label{tab:summary_costs}
  \centering
  \begin{tabular}{lc|cc}
    \toprule
     Differentiation
     & Algorithm
     & Space
     & Time\\
    \midrule
    \Forward PGD
    & \Cref{alg:forward_pgd}
    & $\bigo(p \, r)$
    & $\bigo(n \, p \, r \, \ninner)$
    \\
    \Backward PGD
    & \Cref{alg:backward_pgd}
    & $\bigo(p \, \ninner )$
    & $\bigo(n \, p \, \ninner + n \, p \, \ninner)$
    \\
    \Forward PCD
    & \Cref{alg:forward_pcd}
    & $\bigo(p \, r)$
    & $\bigo(n \, p \, r \, \ninner)$
    \\
    \Backward PCD
    & \Cref{alg:backward_pcd}
    & $\bigo(p \, \ninner )$
    & $\bigo(n \, p \, \ninner + n \, p^2 \, \ninner)$
    \\
    \Implicitfull
    & \Cref{alg:implicit}
    &  $\bigo(p + \hat s )$
    & $\bigo(n \, p \, \ninner + n \, \hat s \, \nlinsys)$
    \\
    \bottomrule
  \end{tabular}
\end{table}
}

\subsection{Resolution of the Bilevel Optimization \Cref{pb:bilevel_opt}}
\label{sub:bilevel_non-smooth}
%
From a practical point of view, once the hypergradient has been computed, first-order methods require the definition of a step size to solve the non-convex \Cref{pb:bilevel_opt}.
As the Lipschitz constant is not available for the outer problem, first-order methods need to rely on other strategies, such as:
\begin{itemize}
    \item Gradient descent with manually adjusted fixed step sizes \citep{Frecon_Salzo_Pontil2018,Ji_Yang_Liang2020}.
    The main disadvantage of this technique is that it requires a careful tuning of the step size for each experiment.
    In addition to being potentially tedious, it does not lead to an automatic procedure.
    \item L-BFGS (as in \citealt{Deledalle_Vaiter_Fadili_Peyre14}). L-BFGS is a quasi-Newton algorithm that exploits past iterates to approximate the Hessian and propose a better descent direction, which is combined with some line search~\citep{Nocedal_Wright06}.
    Yet, due to the approximate gradient computation, we observed that L-BFGS did not always converge.
    \item ADAM \citep{Kingma_Ba2014}. It turned out to be inappropriate to the present setting. ADAM was very sensitive to the initial step size and required a careful tuning for each experiment.
    \item Iteration specific step sizes obtained by line search \citep{Pedregosa16}. While the approach from \citet{Pedregosa16} requires no tuning, we observed that it could diverge when close to the optimum.
    The
    normalized gradient strategy \citep[Sec. 3.9]{Watt_Borhani_Katsaggelos2020}\footnote{\url{https://jermwatt.github.io/machine_learning_refined/notes/3_First_order_methods/3_9_Normalized.html}}
    proposed in \Cref{alg:bilevel_approx}, used in all the experiments, turned out to be robust and efficient across problems and data sets.
\end{itemize}
\begin{remark} \emph{Uniqueness.}
    The solution of \Cref{pb:generic_inner_pb} may be non-unique, leading to a multi-valued regularization path $\lambda \mapsto \hat \beta^{(\lambda)}$ \citep{Liu_Mu_Yuan_Zeng_Zhang2020} and requiring tools such as \emph{optimistic gradient} \citep[Chap. 3.8]{Dempe_Kalashnikov_Perez-Valdes_Kalashnykova15}.
    Though it is not possible to ensure uniqueness in practice, we did not face experimental issues due to potential non-uniqueness.
    For the Lasso, this experimental observation can be theoretically justified \citep{Tibshirani13}: when the design matrix is sampled from a continuous distribution, the solution of the Lasso is almost surely unique.
\end{remark}
\begin{remark}\emph{Initialization and warm start.}
    \sloppy
    One advantage of the non-smooth case with the $\ell_1$ norm is that one can find a good initialization point:
    there exists a value $\lambda_{\max}$ (see \Cref{table:inner}) such that the solution of \Cref{pb:generic_inner_pb} vanishes for $\lambda \geq \lambda_{\max}$.
    Hence, a convenient and robust initialization value can be chosen as $e^\lambda = e^{\lambda_{\max}} / 100$.
    This is in contrast with the smooth case, where finding a good initialization heuristic is hard: starting in flat zones can lead to poor performance for gradient-based methods \citep{Pedregosa16}.
    \Cref{alg:implicit} is called multiple times in \Cref{alg:bilevel_approx}: several inner optimization problems and linear systems which are "similar" are solved successively.
    That is why we use warm-start to solve these problems.
\end{remark}
%
%
\begin{figure}[tb]
    \begin{minipage}[t]{0.5\linewidth}

{\fontsize{5}{4}\selectfont
\begin{algorithm}[H]
\SetKwInOut{Input}{input}
\SetKwInOut{Init}{init}
\SetKwInOut{Parameter}{param}
\caption{\textsc{\Implicitfull}}
\Input{
$
\lambda \in \bbR,
\epsilon > 0
$}
\Init{$\gamma > 0$}

\tcp*[l]{compute the solution of inner problem}
Find $\beta$ such that:
$\Phi(\beta, \lambda) - \Phi(\hat \beta, \lambda) \leq \epsilon$

\tcp*[l]{compute the gradient}

Compute the generalized support $S$ of $\beta$,

$ z = {\beta} - \gamma \nabla f(\beta)$

$\jac_{S^c :}
=
\partial_{\lambda} \prox_{\gamma g(\cdot, \lambda)}( z)_{S^c}$

$s=|S|$

$A \hspace{-1mm} = \hspace{-1mm} \Id_s
-
\partial_z \prox_{\gamma g(\cdot, \lambda)}(z)_{S} \odot
( \Id_{s} - \gamma \nabla^2_{S, S} f(\beta)
  )$

Find $v\in\bbR^s$ s.t.
$\normin{A^{-1 \top } \nabla_S \cC(\beta) - v} \leq \epsilon$

$B =
\partial_{\lambda} \prox_{\gamma g(\cdot, \lambda)}( z)_{ S} \\
 \hspace*{4mm} - \gamma \partial_z \prox_{\gamma g(\cdot, \lambda)}( z)_{ S} \odot
\nabla^2_{S, S^c} f(\beta)
\jac_{ S^c}$

$\nabla \cL(\lambda)
= \jac_{S^c :}^\top \nabla_{S^c} \cC(\beta)
+ v^\top B
$

\Return{
    $\cL(\lambda) \eqdef \cC(\beta), \nabla \cL(\lambda)$
    }
\label{alg:implicit}
\end{algorithm}
}
    \end{minipage}\hfill
    \begin{minipage}[t]{0.49\linewidth}
        {\fontsize{5}{4}\selectfont
        \begin{algorithm}[H]
        \SetKwInOut{Input}{input}
        \SetKwInOut{Init}{init}
        \SetKwInOut{Parameter}{param}
        \caption{
            \textsc{Gradient descent with approximate gradient} }
        \Input{
            $
            \lambda \in \bbR^r,
            (\epsilon_i)$ }
        \Init{$\textrm{use\_adaptive\_step\_size} = \textrm{True}$}
            \For{$i = 1,\dots, \mathrm{iter}$}{
                $\lambda^{\mathrm{old}} \leftarrow \lambda $

                \tcp*[h]{compute the value and the gradient}
                $\cL(\lambda), \nabla \cL(\lambda) \leftarrow {\rm \Cref{alg:implicit}}(X, y, \lambda, \epsilon_i) $

                \If{$\mathrm{use\_adaptive\_step\_size}$}{
                    $\alpha = 1 / \normin{\nabla \cL(\lambda)} $
                }

                $\lambda \minuseq \alpha \nabla \cL(\lambda)$
                \tcp*[l]{gradient step}

                \If{$\cL(\lambda) > \cL( \lambda^{\mathrm{old}} ) $}{
                $\mathrm{use\_adaptive\_step\_size} = \mathrm{False}$

                $\alpha \diveq 10$
                    }
            }

        \Return{
            $\lambda$
            }
        \label{alg:bilevel_approx}
        \end{algorithm}
        }
    \end{minipage}
\end{figure}
    \begin{remark}\emph{Role of the step size $\gamma$ in \Cref{alg:implicit}.}
        In all the convex penalties we used ($\ell_1$-norm, $\ell_1 + \ell_2$-squared norm, indicator function) the step size $\gamma$ simplifies and does not appear in the \implicitfull formula.
        Instantiations of \Cref{alg:implicit} for the Lasso, the elastic net, the weighted Lasso and the dual of the SVM can be found in \Cref{app:algos_implicit_diff}.
    \end{remark}

%
\section{Experiments}
\label{sec:experiments}
%

In this section, we illustrate the benefits of our proposed
\Cref{alg:implicit} to compute hypergradients and \Cref{alg:bilevel_approx} to solve \Cref{pb:bilevel_opt}.
Our package, \texttt{sparse-ho}, is implemented in \texttt{Python}.
It relies on \texttt{Numpy} \citep{numpy}, \texttt{Numba} \citep{Lam_Pitrou_Seibert15} and \texttt{SciPy} \citep{scipy}.
Figures were plotted using \texttt{matplotlib} \citep{matplotlib}.
The package is available under BSD3 license at \url{https://github.com/qb3/sparse-ho}, with documentation and examples available at \url{https://qb3.github.io/sparse-ho/}.
Online code includes scripts to reproduce all figures and experiments of the paper.
\begin{table}[t!]
    \centering
    \caption{
        Characteristics of the data sets used for the experiments.}
    \begin{tabular}{ccccc}

    \hline
    name
    & $\# \text{ samples } n$
    & $\# \text{ features } p$
    & $\# \text{ classes } q$
    & density \\
    \hline
    \emph{breast cancer}
    & $569$
    & $\num{30}$
    & $-$
    & $1$ \\
    \emph{diabetes}
    & $442$
    & $\num{10}$
    & $-$
    & $1$ \\
    \emph{leukemia}
    & $72$
    & $\num{7129}$
    & $-$
    & $1$ \\
    \emph{gina agnostic}
    & $\num{3468}$
    & $\num{970}$
    & $-$
    & $1$ \\
    \emph{rcv1}
    & $\num{20242}$
    & $\num{19960}$
    & $-$
    & $3.7\times 10^{-3}$
    \\
    \emph{real-sim}
    & $\num{72309}$
    & $\num{20958}$
    & $-$
    & $2.4 \times 10^{-3}$ \\
    \emph{news20}
    & $\num{19996}$
    & $\num{632983}$
    & $-$
    & $6.1\times 10^{-4}$ \\
    \emph{mnist}
    & $\num{60,000}$
    & $\num{683}$
    & $10$
    & $2.2\times 10^{-1}$ \\
    \emph{usps}
    & $\num{7291}$
    & $256$
    & $10$
    & $1$ \\
    \emph{rcv1 (multiclass)}
    & $\num{15564}$
    & $\num{16245}$
    & $53$
    & $4.0 \times 10^{-3}$ \\
    \emph{aloi}
    & $\num{108000}$
    & $128$
    & $\num{1000}$
    & $2.4 \times 10^{-1}$ \\
    \hline
    \end{tabular}
\end{table}
\subsection{Hypergradient computation}
\label{sub:expe_hypergrad_computation}
%
%
\textit{Comparison with alternative approaches (\Cref{fig:hypergradient_cvxpy}).}
%
\begin{figure*}[tb]
    \centering
    \begin{subfigure}[b]{1\textwidth}
        \centering
        \includegraphics[width=0.75\linewidth]{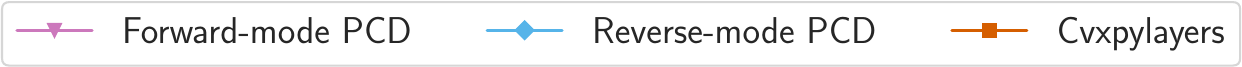}
        \includegraphics[width=\figsize\linewidth]{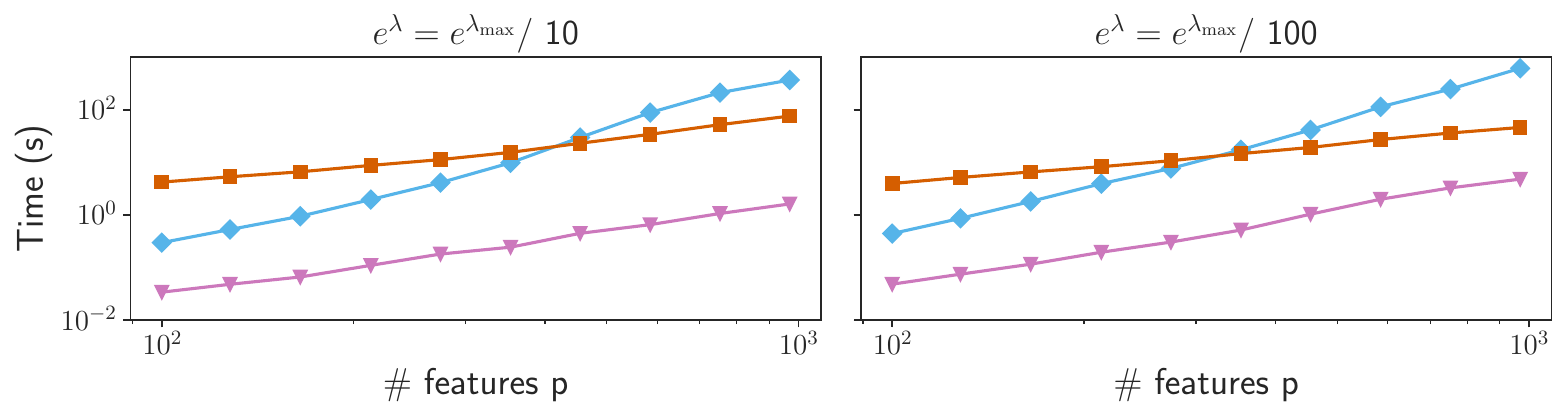}
    \end{subfigure}
    \caption{
        \emph{Lasso with hold-out criterion:}
        time comparison on the gina data set to compute a single hypergradient as
        a function of the number of features, for two values of $\lambda$,
        $e^\lambda = e^{\lambda_{\max}} / 10$ (left) and
        $e^{\lambda} = e^{\lambda_{\max}} / 100$ (right).
    }
    \label{fig:hypergradient_cvxpy}
\end{figure*}
First, we compare different methods to compute the hypergradient:
\begin{itemize}
    \setlength\itemsep{0pt}
    \item \Forward differentiation of proximal coordinate descent (\Cref{alg:forward_pcd}).
    \item \Backward differentiation of proximal coordinate descent (\Cref{alg:backward_pcd}).
    \item \texttt{cvxpylayers}
    \citep{Agrawal_Amos_Barratt_Boyd_Diamond_Kolter19}, a software based on \texttt{cvxpy}
    \citep{cvxpy}, solving \emph{disciplined parametrized programming} and providing derivatives with respect to the parameters of the program.
    It is thus possible to use \texttt{cvxpylayers} to compute gradients with respect to the regularization parameters.
\end{itemize}
\Cref{fig:hypergradient_cvxpy} compares the time taken by multiple methods to compute a single hypergradient $\nabla \cL(\lambda)$ for the Lasso (see \Cref{table:inner}), for multiple values of $\lambda$.
It shows the time taken to compute the regression coefficients and the hypergradient, as a function of the number of columns, sampled from the design matrix from the \emph{gina} data set.
The columns were selected at random and $10$ repetitions were performed for each point of the curves.
In order to aim for good numerical precision, problems were solved up to a duality gap of $10^{-6}$ for the \forward and the \backward.
\texttt{cvxpylayers} relies on \texttt{cvxpy}, solving  \Cref{pb:generic_inner_pb} using a splitting conic solver \citep{scs}.
Since the termination criterion of the splitting conic solver is not exactly the duality gap \citep[Sec. 3.5]{Donoghue_Chu_Parikh_Boyd16}, we used the default tolerance of $10^{-4}$.
The hypergradient $\nabla \cL(\lambda)$ was computed for hold-out mean squared error (see \Cref{table:criterion}).

The \forward differentiation of proximal coordinate descent is one order of magnitude faster than \texttt{cvxpylayers} and two orders of magnitude faster than the \backward differentiation of proximal coordinate descent.
The larger the value of $\lambda$, the sparser the coefficients $\beta$ are, leading to significant speedups in this regime.
This performance is in accordance with the lower time cost of the forward mode in \Cref{tab:summary_costs}.

\textit{Combining implicit differentiation with state-of-the art solvers
(\Cref{fig:hypergradient_computation,fig:hypergradient_computation_svm}).}
We now compare the different approaches described in \Cref{sec:non_smooth}:
\begin{itemize}
    \setlength\itemsep{0pt}
    \item \Forward differentiation of proximal coordinate descent (\Cref{alg:forward_pcd}).
    \item \Implicitfull (\Cref{alg:implicit}) with proximal coordinate descent to solve the inner problem.
    For efficiency, this solver was coded in \texttt{Numba} \citep{Lam_Pitrou_Seibert15}.
    \item \Implicitfull (\Cref{alg:implicit}) with state-of-the-art algorithm to solve the inner problem: we used \texttt{Celer} \citep{Massias_Vaiter_Gramfort_Salmon20} for the Lasso, and \texttt{Lightning} \citep{Blondel_Pedregosa2016} for the SVM.
\end{itemize}
\begin{figure*}[tb]
    \centering
    \begin{subfigure}[b]{1\textwidth}
        \centering
        \includegraphics[width=0.82\linewidth]{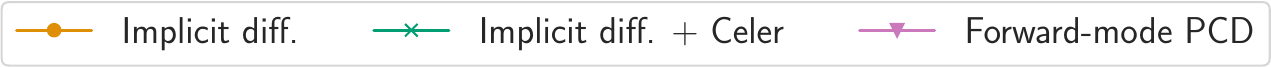}
        \includegraphics[width=\figsize\linewidth]{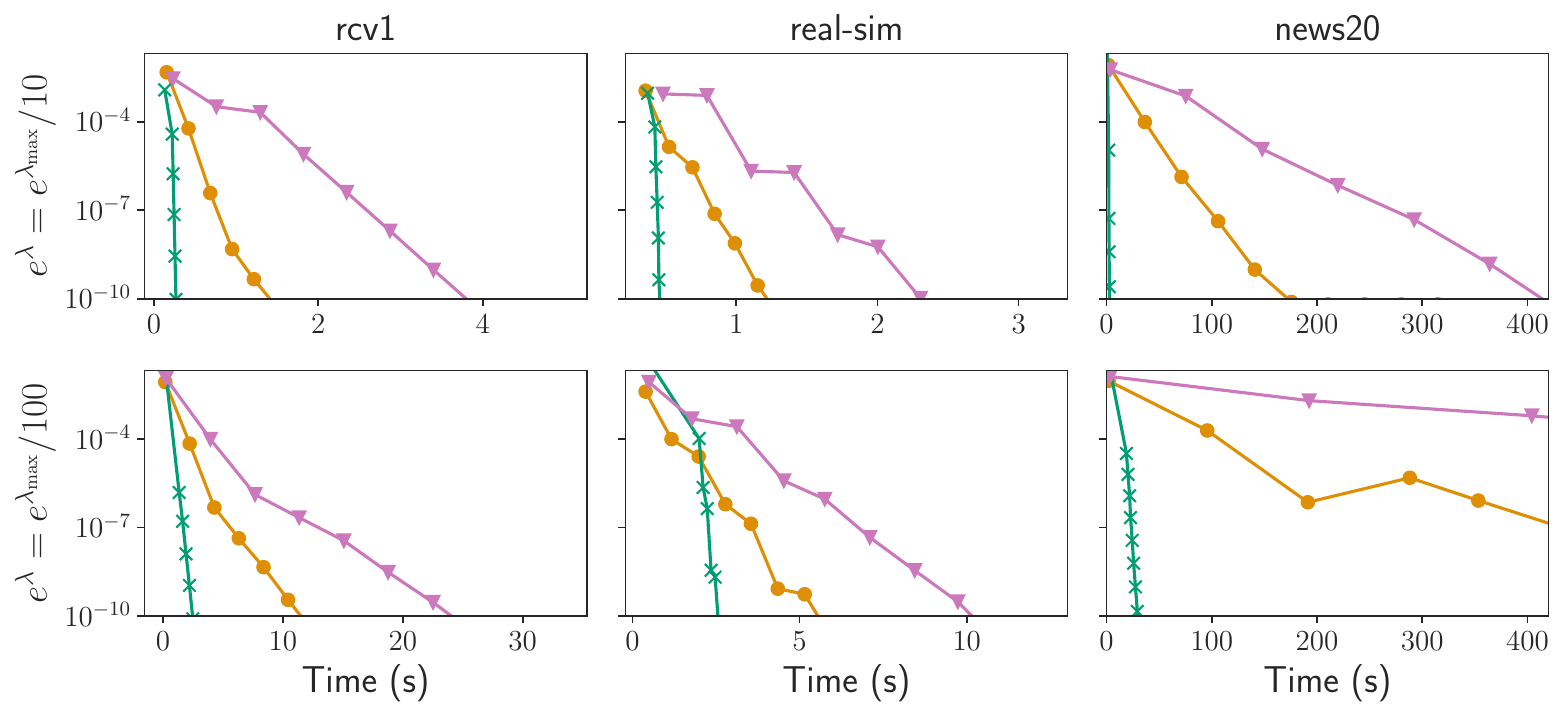}
    \end{subfigure}
    \caption{
        \emph{Lasso with hold-out criterion:}
        absolute difference between the exact hypergradient (using $\hat \beta$) and the iterate hypergradient (using $\beta^{(k)}$) of the Lasso as a function of time. Results are for three data sets and two different regularization parameters.
        ``Implicit diff. + \texttt{Celer})'' uses \texttt{Celer}
        \citep{Massias_Vaiter_Gramfort_Salmon20} instead of our proximal
    coordinate descent implementation.}
    \label{fig:hypergradient_computation}
\end{figure*}
\begin{figure*}[tb]
    \centering
    \begin{subfigure}[b]{1\textwidth}
        \centering
        \includegraphics[width=0.87\linewidth]{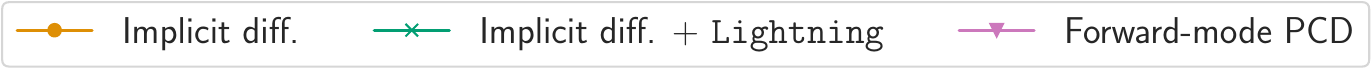}
        \includegraphics[width=\figsize\linewidth]{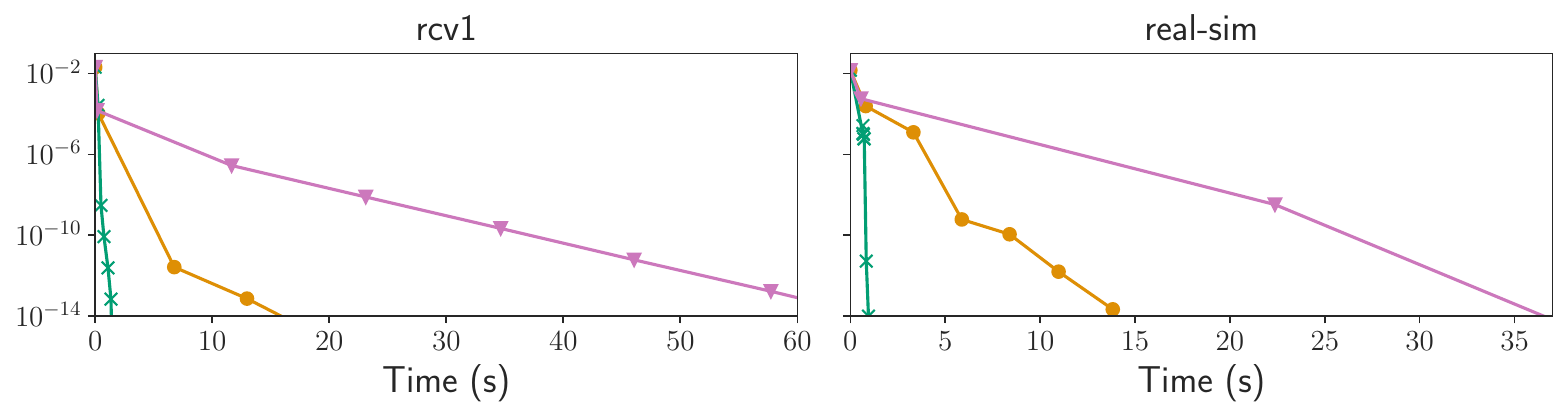}
    \end{subfigure}
    \caption{
        \emph{SVM with hold-out criterion:}
        absolute difference between the exact hypergradient (using $\hat \beta$) and the iterate hypergradient (using $\beta^{(k)}$) of the SVM as a function of time.
        ``Implicit diff. + \texttt{Lightning}'' uses \texttt{Lightning}
\citep{Blondel_Pedregosa2016}, instead of our proximal coordinate descent implementation.}
    \label{fig:hypergradient_computation_svm}
\end{figure*}
\Cref{fig:hypergradient_computation} shows for three data sets and two values of regularization parameters the absolute difference between the exact hypergradient and the approximate hypergradient obtained via multiple algorithms as a function of time.
\Cref{fig:hypergradient_computation_svm} reports similar results for the SVM, on the same data sets, except \emph{news20}, which is not well suited for SVM, due to limited number of samples.

First, it demonstrates that \implicitfull methods are faster than the \forward of proximal coordinate descent (pink).
This illustrates the benefits of restricting the gradient computation to the support of the Jacobian, as described in \Cref{subsub:proposed_methods}.
Second, thanks to the flexibility of our approach, we obtain additional speed-ups by combining \implicitfull with a state-of-the-art solver, \texttt{Celer}.
The resulting method (orange) significantly improves over \implicitfull using a vanilla proximal coordinate descent (green).
%
%
%
\subsection{Resolution of the Bilevel Optimization Problem}
\label{sub:expe_bilevel_optimization}
%
In this section we compare multiple methods to find the optimal hyperparameters for the Lasso, elastic net and multiclass sparse logistic regression.
The following methods are compared:
\begin{itemize}
    \setlength\itemsep{-1pt}
    \item \textit{Grid-search}: for the Lasso and the elastic net, the number of hyperparameters is small, and grid-search is tractable.
    For the Lasso we chose a grid of $100$ hyperparameters $\lambda$, uniformly spaced between $\lambda_{\max} - \ln(10^4)$ and $\lambda_{\max}$.
    For the elastic net we chose for each of the two hyperparameters a grid of 10 values uniformly spaced between $\lambda_{\max}$ and $\lambda_{\max} - \ln(10^4)$.
    The product grid thus has $10^2$ points.
    \item \textit{Random-search}: we chose $30$ values of $\lambda$ sampled uniformly between $\lambda_{\max}$ and $\lambda_{\max} - \ln(10^4)$ for each hyperparameter.
    For the elastic net we chose $30$ points sampled uniformly in
    $[\lambda_{\max} - \ln(10^4), \lambda_{\max}]\times [\lambda_{\max} - \ln(10^4), \lambda_{\max}]$.
    \item \textit{SMBO}:
    this algorithm is SMBO using
    as criterion expected improvement (EI) and the Tree-structured Parzen Estimator (TPE) as model.
    First it evaluates $\cL$ using $5$ values of $\lambda$, chosen uniformly at random between $\lambda_{\max}$ and $\lambda_{\max} - \ln(10^4)$.
    Then a TPE model is fitted on the data points $(\lambda^{(1)}, \cL(\lambda^{(1)})), \dots, (\lambda^{(5)}, \cL(\lambda^{(5)}) )$.
    Iteratively, the EI is used to choose the next point to evaluate $\cL$ at, and this value is used to update the model.
    We used the \texttt{hyperopt} implementation \citep{Bergstra13}.
    \item \textit{1st order}: first-order method with exact gradient (\Cref{alg:bilevel_approx} with constant tolerances $\epsilon_i=10^{-6}$), with $\lambda_{\max} - \ln(10^2)$ as a starting point.
    \item \textit{1st order approx}: a first-order method using approximate gradient (\Cref{alg:bilevel_approx} with tolerances $\epsilon_i$, geometrically decreasing from $10^{-2}$ to $10^{-6}$), with $\lambda_{\max} - \ln(10^2)$ as a starting point.
\end{itemize}
\textit{Outer criterion.} In the Lasso and elastic net experiments, we pick a $K$-fold CV loss as outer criterion\footnote{In our experiments the default choice is $K=5$.}.
Hence, the data set $(X, y)$ is partitioned into $K$ hold-out data sets
$(X^{\text{train}_k}, y^{\text{train}_k}),
(X^{\text{val}_k}, y^{\text{val}_k})$.
The bilevel optimization problems then write
\def \figsize {0.95}
\begin{figure}[!tb]
    \centering
    \begin{subfigure}[b]{1\textwidth}
        \centering
        \includegraphics[width=\figsize\linewidth]{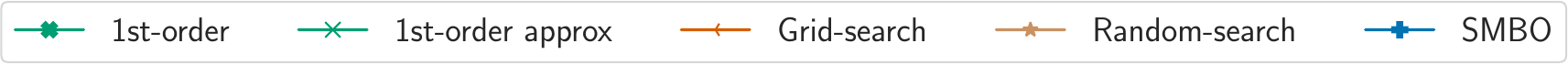}
        \includegraphics[width=\figsize\linewidth]{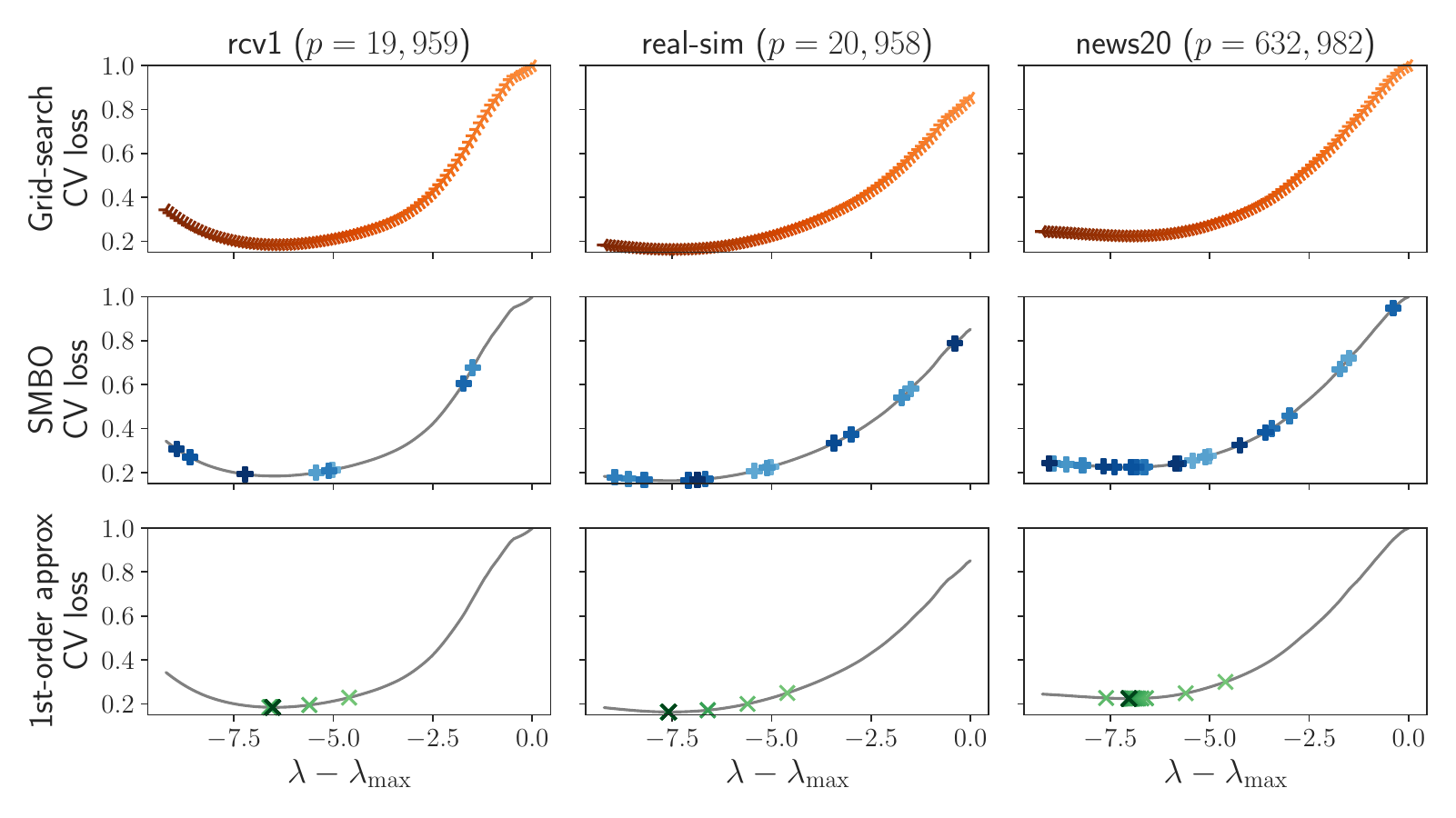}
        \includegraphics[width=\figsize\linewidth]{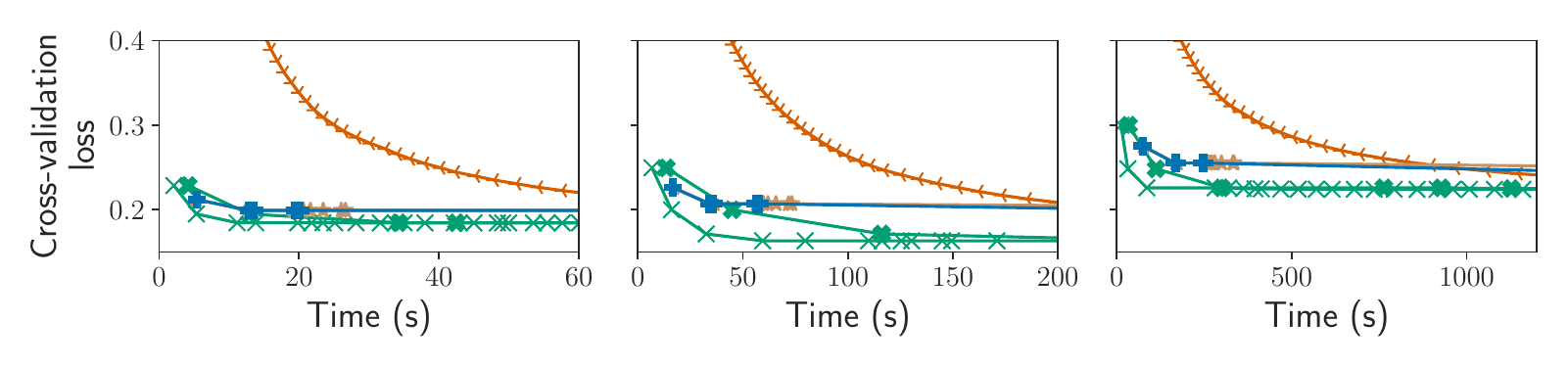}
     \end{subfigure}
    \caption{
        \emph{Lasso with cross-validation criterion:}
        cross-validation loss as a function of $\lambda$ (black line, top) and as a function of time (bottom).
        Lighter markers correspond to earlier iterations of the algorithm.}
    \label{fig:ho_lasso}
\end{figure}
\begin{equation}\label[pb_multiline]{pb:bilevel_enet_crosval}
    \begin{aligned}
    &
    \argmin_{\lambda = (\lambda_1, \lambda_2) \in \bbR^2}
    \cL(\lambda)
    =
    \frac{1}{K}
    \sum_{k=1}^K \normin{
        y^{\text{val}_k} - X^{\text{val}_k}
        \hat{\beta}^{(\lambda, k)}
        }^2_2
    \\
    &
    \st
    \hat \beta^{(\lambda, k)}
    \in
    \argmin_{\beta\in \bbR^p}
    \tfrac{1}{2n}
    \norm{y^{\text{train}_k}
    - X^{\text{train}_k}\beta}^2_2
    + e^{\lambda_1} \normin{\beta}_1
    + \frac{e^{\lambda_2}}{2} \normin{\beta}_2^2, \quad \forall k \in [K]
     \enspace,
    \end{aligned}
\end{equation}
while Lasso CV is obtained taking $\lambda_2\to -\infty$ in the former.
By considering an extended variable $\beta \in \bbR^{K \times p}$, cross-validation can be cast as an instance of \Cref{pb:bilevel_opt}.

\Cref{fig:ho_lasso} represents the cross-validation loss in Lasso CV
as a function of the regularization parameter $\lambda$ (black curve, three top rows) and as a function of time (bottom).
Each point corresponds to the evaluation of the cross-validation criterion for one $\lambda$ value.
The top rows show cross-validation loss as a function of $\lambda$, for the grid-search, the SMBO optimizer and the first-order method.
The lightest crosses correspond to the first iterations of the algorithm and the darkest, to the last ones.
For instance, Lasso grid-search starts to evaluate the cross-validation function with $\lambda=\lambda_{\max}$ and then decreases to $\lambda=\lambda_{\max} - \ln(10^4)$.
On all the data sets, first-order methods are faster to find the optimal regularization parameter, requiring only $5$ iterations.

\def \figsize {1}
\begin{figure}[tb]
    \centering
    \begin{subfigure}[b]{0.9\textwidth}
        \centering
        \includegraphics[width=\figsize\linewidth]{lasso_val_legend}
        \includegraphics[width=\figsize\linewidth]{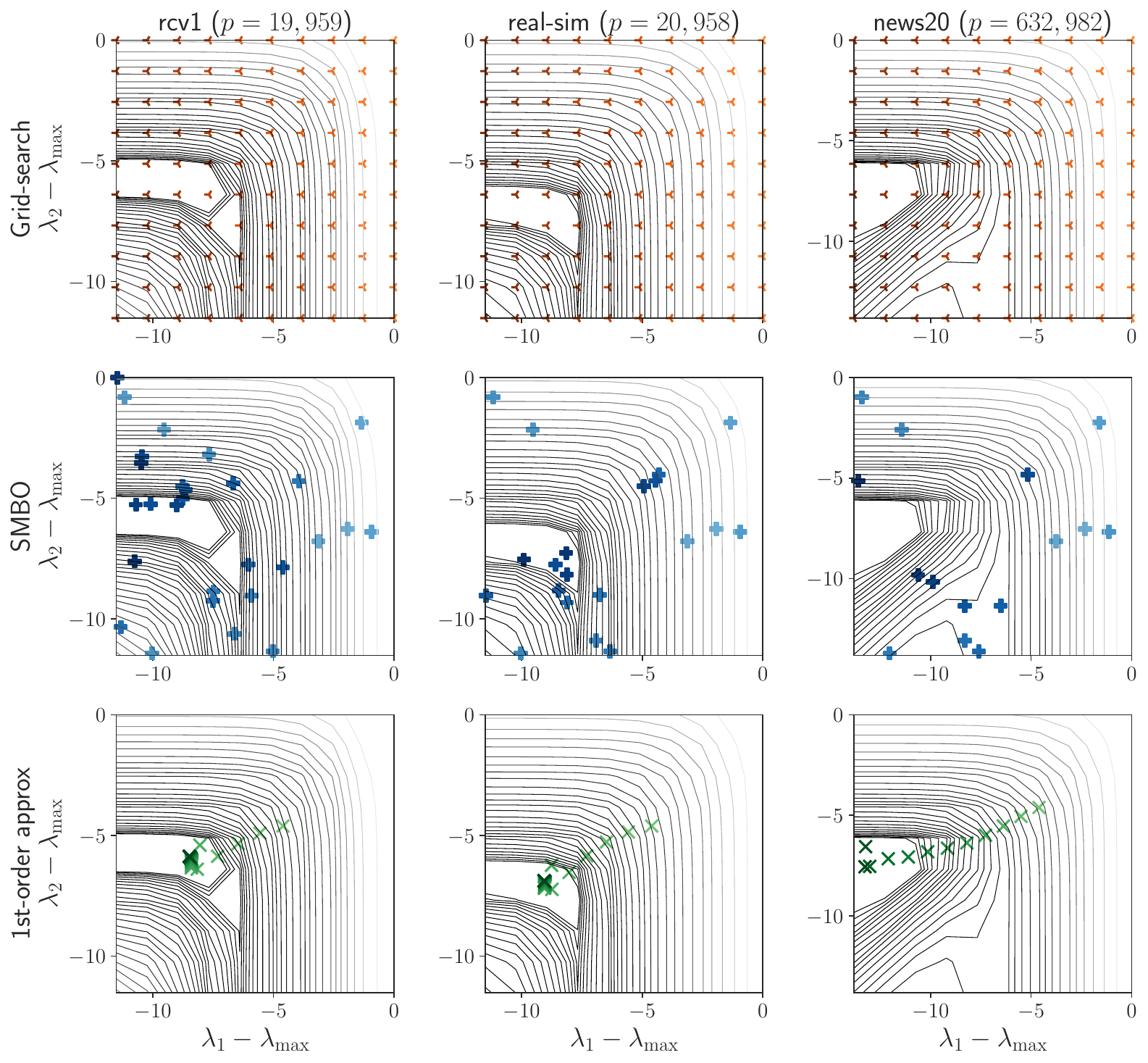}
        \includegraphics[width=\figsize\linewidth]{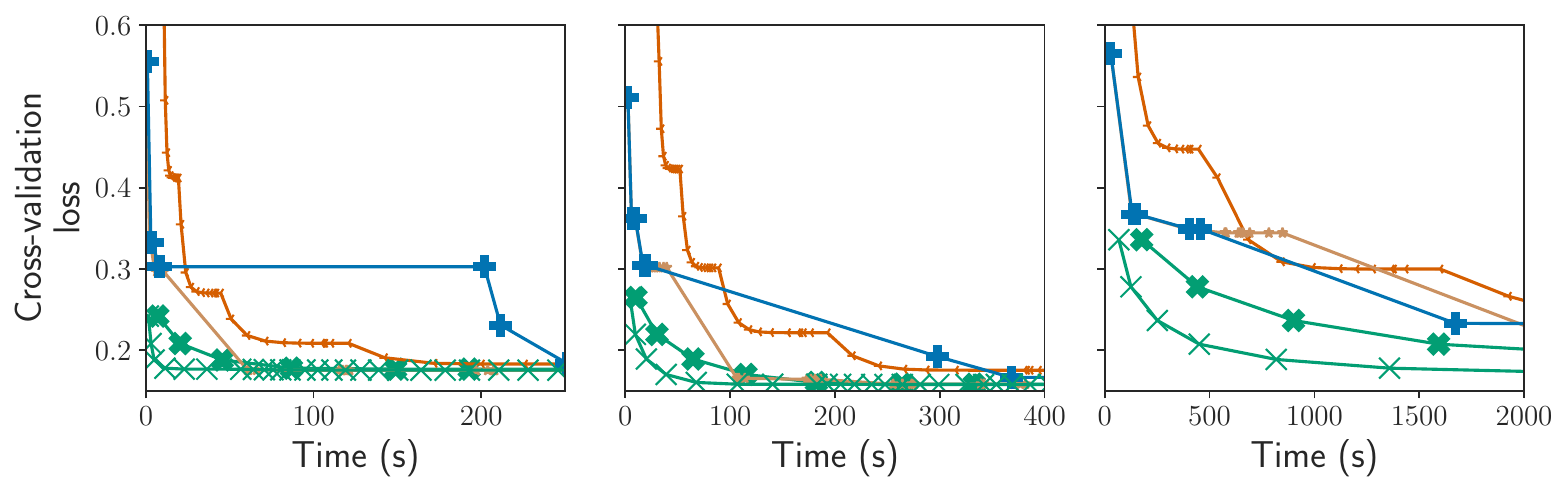}
     \end{subfigure}
    \caption{
        \textbf{Elastic net cross-validation, time comparison ($2$ hyperparameters).}
        Level sets of the cross-validation loss (black lines, top) and cross-validation loss as a function of time (bottom) on
     \emph{rcv1}, \emph{real-sim} and \emph{news20} data sets.}
    \label{fig:ho_enet}
\end{figure}
%

%
\Cref{fig:ho_enet} represents the level sets of the cross-validation loss for the elastic net 
(three top rows) and the cross-validation loss as a function of time (bottom).
One can see that after $5$ iterations the SMBO algorithm (blue crosses) suddenly slows down (bottom) as the hyperparameter suggested by the algorithm leads to a costly optimization problem to solve, while first-order methods converge quickly as for Lasso CV. In the present context, inner problems are slower to solve for low values of the regularization parameters.

\textit{Multiclass sparse logistic regression ($\#$ classes hyperparameters, \Cref{fig:ho_multiclass_logreg}).}
We consider a multiclass classification problem with $q$ classes.
The design matrix is noted $X \in \bbR^{n \times p}$, and the target variable
$y \in \{1, \dots, q \}^n$.
We chose to use a one-versus-all model with $q$ regularization parameters.
%
%
We use a binary cross-entropy for the inner loss
\begin{align}
    \psi^k (\beta, \lambda_k ; X, y)
    \eqdef
    -\frac{1}{n}\sum_{i=1}^{n}
    \left (
        \mathbbm{1}_{y_i = k}
        \ln (\sigma(X_{i:} \beta)) +
        (1 - \mathbbm{1}_{y_i = k})
        \ln(1-\sigma(X_{i:} \beta))
    \right )
     + e^{\lambda_k} \normin{\beta}_1
     \enspace ,
    \nonumber
\end{align}
and a multiclass cross-entropy for the outer criterion
\begin{align}\label{eq:criterion_multiclass}
    \cC
    \left (
        \hat \beta^{(\lambda_1)},
        \dots,
        \hat \beta^{(\lambda_q)}
        ; X, y
    \right)
    \eqdef
    - \sum_{i=1}^n \sum_{k=1}^q \ln
    \left (
        \frac{e^{X_{i:} \hat \beta^{(\lambda_k)}}}{
            \sum_{l=1}^q e^{X_{i:} \hat \beta^{(\lambda_l)}}
        }
    \right) \mathbbm{1}_{y_i = k}
    \enspace .
\end{align}
With a single train/test split, the bilevel problem to solve writes:
\begin{equation}\label[pb_multiline]{pb:bilevel_opt_multiclass}
    \begin{aligned}
    \argmin_{\lambda \eqdef (\lambda_1, \dots, \lambda_q) \in \bbR^q}
    &
    \cC
        \left (
            \hat \beta^{(\lambda_1)},
            \dots,
            \hat \beta^{(\lambda_q)} ;
            X^\mathrm{test}, y^\mathrm{test}
        \right)
    \\
    \st
    &\hat \beta^{(\lambda_k)} \in \argmin_{\beta \in \bbR^p}
      \psi^k(\beta, \lambda_k ; X^\mathrm{train}, y^\mathrm{train} )    \quad \forall k \in [q]
     \enspace.
    \end{aligned}
\end{equation}
%
%
\begin{figure*}[tb]
        \centering
        \includegraphics[width=0.62\linewidth]{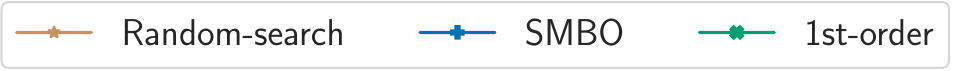}
        \includegraphics[width=\figsize\linewidth]{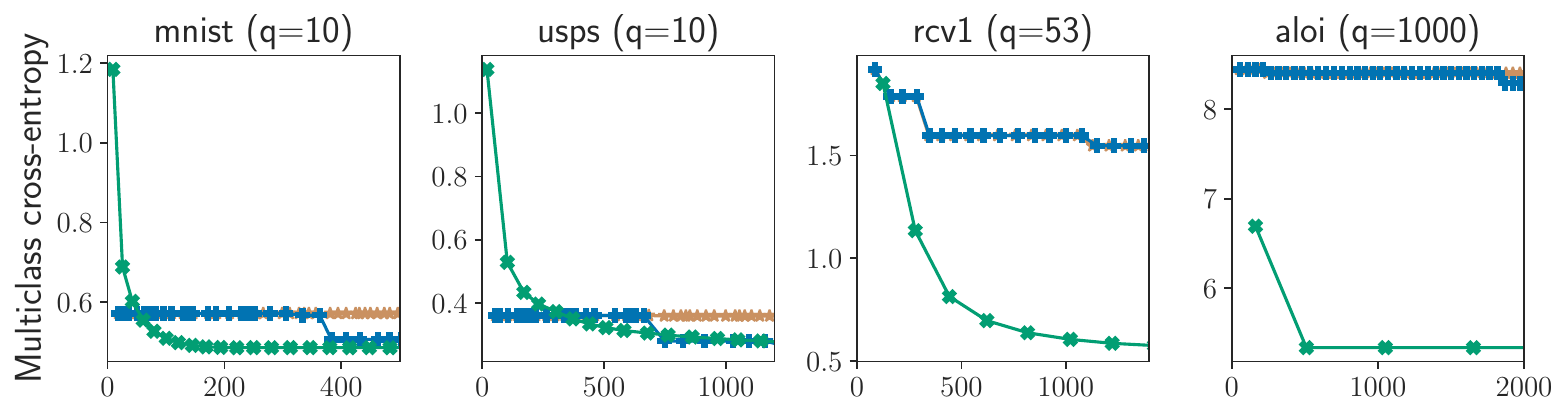}
        \includegraphics[width=\figsize\linewidth]{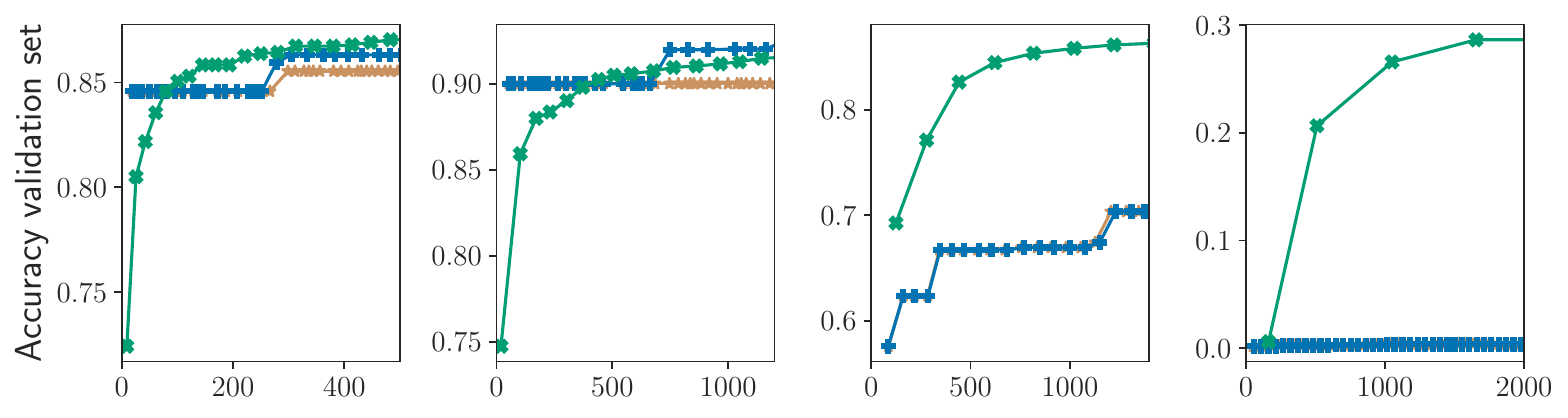}
        \includegraphics[width=\figsize\linewidth]{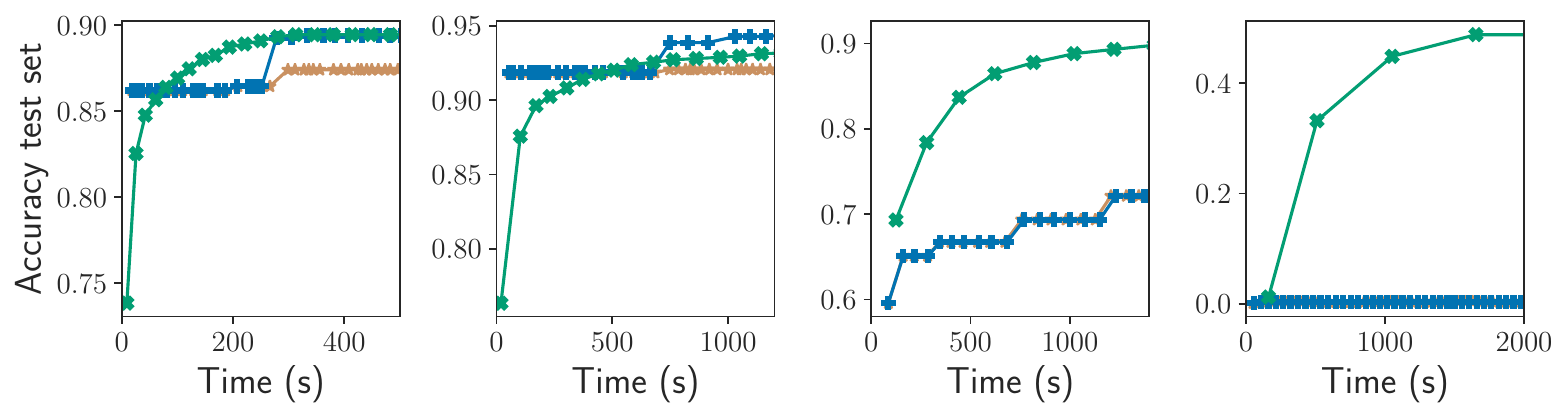}
    \caption{
        \textbf{Multiclass sparse logistic regression hold-out, time comparison ($\#$ classes hyperparameters).}
        Multiclass cross-entropy (top), accuracy on the validation set (middle), and accuracy on the test set (bottom) as a function of time on \emph{mnist}, \emph{usps} ($q=10$ classes), \emph{rcv1} ($q=53$ classes), \emph{aloi} ($q=1000$ classes).}
    \label{fig:ho_multiclass_logreg}
\end{figure*}
%
\Cref{fig:ho_multiclass_logreg} represents the multiclass cross-entropy (top), the accuracy on the validation set (middle) and the accuracy on the test set (unseen data, bottom).
When the number of hyperparameter is moderate ($q=10$, on \emph{mnist} and \emph{usps}), the multiclass cross-entropy reached by SMBO and random techniques is as good as first-order techniques.
This is expected and follows the same conclusion as \citet{Bergstra_Bengio12,Frazier2018}: when the number of hyperparameters is moderate, SMBO and random techniques can be used efficiently. %
However, when the number of hyperparameters increases (\emph{rcv1}, $q=53$ and \emph{aloi}, $q=1000$), the hyperparameter space is too large: zero-order solvers simply fail.
On the contrary, first-order techniques manage to find hyperparameters leading to significantly better accuracy.

    \begin{remark}
        On the data used in \Cref{fig:ho_multiclass_logreg}, the model with one hyperparameter per class did not yield significantly better test accuracy compared to a multiclass logistic regression with only one regularization hyperparameter for all the classes.
        This may mean that the model with one hyperparameter per class is not well suited for this data.
        It can also be due to the fact that in this case, the bilevel optimization problem becomes highly non-convex, and only converges toward a poor local minima.
        We want to emphasize that we provide an efficient way to compute the hypergradient $\nabla_\lambda \cL(\lambda)$. Besides, to our knowledge, the perfect resolution of the full bilevel optimization problem with a non-smooth inner problem remains an open question.
    \end{remark}

%
\section{Conclusion}
In this work we considered the problem of hyperparameter optimization to select the regularization parameter of linear models with non-smooth objective.
Casting this problem as a bilevel optimization problem, we proposed to use first-order methods.
We showed that the usual automatic differentiation techniques, implicit differentiation, \forwardandbackward, can be used to compute the hypergradient, despite the non-smoothness of the inner problem.
Experimentally, we showed the interest of first-order techniques to solve bilevel optimization on a wide range of estimators ($\ell_1$ penalized methods, SVM, etc.) and data sets.
The presented techniques could also be extended to more general bilevel optimization problems, in particular implicit differentiation could be well suited for meta-learning problems, with a potentially large number of hyperparameters.
Another important future direction would be to extend the work on stochastic hypergradients \citep{Grazzi_Pontil_Salzo2021} in the non-smooth case.

\acks{This work was partially funded by the ERC Starting Grant SLAB ERC-StG-676943, the ANR BrAIN ANR-20-CHIA-0016, the ANR CaMeLOt ANR-20-CHIA-0001-01, and the ANR grant GraVa ANR-18-CE40-0005.
Part of this work has been carried out at the Machine Learning Genoa (MaLGa) center, Università di Genova (IT).
M. M. acknowledges the financial support of the European Research Council (grant SLING 819789).
}

\clearpage
\bibliography{references_all}
\clearpage
\renewcommand{\theHsection}{A\arabic{section}}  
\appendix
\onecolumn
\section{Implicit Differentiation Examples}
\label{app:algos_implicit_diff}
For exposition purpose we provide instantiations of the \implicitfull (\Cref{alg:implicit}) for multiple optimization problems.
For the Lasso (\Cref{alg:implicit_lasso})
\begin{align*}
    \hat \beta
    \in
    \argmin_{\beta \in \bbR^p}
    \frac{1}{n} \normin{y - X \beta}^2
    +
    e^\lambda \normin{\beta}_1
    \enspace ,
\end{align*}
the elastic net (\Cref{alg:implicit_enet})
\begin{align*}
    \hat \beta
    \in
    \argmin_{\beta \in \bbR^p}
    \frac{1}{n} \normin{y - X \beta}^2
    +
    e^{\lambda_1} \normin{\beta}_1
    +
    e^{\lambda_2} \normin{\beta}_2^2
    \enspace ,
\end{align*}
the weighted Lasso (\Cref{alg:implicit_wlasso})
\begin{align*}
    \hat \beta
    \in
    \argmin_{\beta \in \bbR^p}
    \frac{1}{n} \normin{y - X \beta}^2
    +
    \sum_{j=1}^p e^{ \lambda_j} |\beta_j|
    \enspace ,
\end{align*}
and the dual of the SVM
\begin{align*}
    \hat w
    \in
    \argmin_{w \in \bbR^n}
    \frac{1}{2}
    w^\top (y \odot X) (y \odot X)^\top w^\top
    - w^\top \ind
    +
    \sum_{i=1}^n \iota_{0 \leq w_i \leq e^\lambda}
    \enspace .
\end{align*}
%

%
\begin{figure}[H]
    \begin{minipage}[t]{0.5\linewidth}

{\fontsize{5}{4}\selectfont
\begin{algorithm}[H]
\SetKwInOut{Input}{input}
\SetKwInOut{Init}{init}
\SetKwInOut{Parameter}{param}
\caption{\textsc{Lasso \implicitfull}}
\Input{
$
\lambda \in \bbR,
\epsilon > 0
$}

\tcp*[l]{compute the solution of inner problem}
Find $\beta$ such that:
$\Phi(\beta, \lambda) - \Phi(\hat \beta, \lambda) \leq \epsilon$

\tcp*[l]{compute the gradient}

$S = \condset{j \in [p]}{\beta_j \neq 0 }$;
$A = \frac{1}{n} X_{:, S}^\top X_{:, S} $

Find $v\in\bbR^{|S|}$ s.t.
$\normin{A^{-1} \nabla_S \cC(\beta) - v} \leq \epsilon$

$B = - e^\lambda \sign(\beta_S) \in \bbR^{|S|}$

$\nabla \cL(\lambda)
= v^\top B \in \bbR
$

\Return{
    $\cL(\lambda) \eqdef \cC(\beta), \nabla \cL(\lambda)$
    }
\label{alg:implicit_lasso}
\end{algorithm}
}
    \end{minipage}\hfill
    \begin{minipage}[t]{0.49\linewidth}
        {\fontsize{5}{4}\selectfont
        \begin{algorithm}[H]
        \SetKwInOut{Input}{input}
        \SetKwInOut{Init}{init}
        \SetKwInOut{Parameter}{param}
        \caption{\textsc{Elastic net \implicitfull}}
        \Input{
        $
        \lambda_1, \lambda_2 \in \bbR,
        \epsilon > 0
        $}

        \tcp*[l]{compute the solution of inner problem}
        Find $\beta$ such that:
        $\Phi(\beta, \lambda) - \Phi(\hat \beta, \lambda) \leq \epsilon$

        \tcp*[l]{compute the gradient}

        $S = \condset{j \in [p]}{\beta_j \neq 0 }$;
        $A = \frac{1}{n} X_{:, S}^\top X_{:, S} $

        Find $v\in\bbR^{|S|}$ s.t.
        $\normin{A^{-1} \nabla_S \cC(\beta) - v} \leq \epsilon$

        $B = - [e^{\lambda_1} \sign(\beta_{S}), e^{\lambda_2} \beta_S] \in \bbR^{|S| \times 2}$

        $\nabla \cL(\lambda)
        = v^\top B \in \bbR^2
        $

        \Return{
            $\cL(\lambda) \eqdef \cC(\beta), \nabla \cL(\lambda)$
            }
        \label{alg:implicit_enet}
        \end{algorithm}
        }
    \end{minipage}

    \begin{minipage}[t]{0.5\linewidth}

{\fontsize{5}{4}\selectfont
\begin{algorithm}[H]
\SetKwInOut{Input}{input}
\SetKwInOut{Init}{init}
\SetKwInOut{Parameter}{param}
\caption{\textsc{Weighted Lasso \implicitfull}}
\Input{
$
\lambda \in \bbR^p,
\epsilon > 0
$}

\tcp*[l]{compute the solution of inner problem}
Find $\beta$ such that:
$\Phi(\beta, \lambda) - \Phi(\hat \beta, \lambda) \leq \epsilon$

\tcp*[l]{compute the gradient}

$S = \condset{j \in [p]}{\beta_j \neq 0 }$;
$A = \frac{1}{n} X_{:, S}^\top X_{:, S} $

Find $v\in\bbR^{|S|}$ s.t.
$\normin{A^{-1} \nabla_S \cC(\beta) - v} \leq \epsilon$

$B = - \diag(e^{\lambda_j} \sign(\beta_j))  \in \bbR^{|S| \times |S|}$

$\nabla \cL(\lambda)
= v^\top B \in \bbR^p
$

\Return{
    $\cL(\lambda) \eqdef \cC(\beta), \nabla \cL(\lambda)$
    }
\label{alg:implicit_wlasso}
\end{algorithm}
}
    \end{minipage}\hfill
    \begin{minipage}[t]{0.49\linewidth}
        {\fontsize{5}{4}\selectfont
        \begin{algorithm}[H]
        \SetKwInOut{Input}{input}
        \SetKwInOut{Init}{init}
        \SetKwInOut{Parameter}{param}
        \caption{\textsc{SVM dual \implicitfull}}
        \Input{
        $
        \lambda \in \bbR,
        \epsilon > 0
        $}
        \Init{$\jac = 0_{\bbR^p}$}

        \tcp*[l]{compute the solution of inner problem}
        Find $w$ such that:
        $\Phi(w, \lambda) - \Phi(\hat w, \lambda) \leq \epsilon$

        \tcp*[l]{compute the gradient}

        $S_{0} \eqdef \condset{i \in [n]}{w_i = 0}$;
        $\jac_{S_{0}} = 0$

        $S_{\lambda} \eqdef \condset{i \in [n]}{w_i = e^\lambda}$;
        $\jac_{S_{\lambda}} = e^\lambda$

        $S = \condset{i \in [n]}{w_i \neq 0 \text{ and } w_i \neq e^\lambda}$

        $A =(y \odot X)_{S :} (y \odot X)_{S :}^\top $

        Find $v\in\bbR^{|S|}$ s.t.
        $\normin{A^{-1 \top } \nabla_S \cC(w) - v} \leq \epsilon$

        $B =
        (y \odot X)_{S :} (y \odot X)_{S_\lambda :}^\top
       \jac_{ S_\lambda}$

        $\nabla \cL(\lambda)
        = \jac_{S_{\lambda} :}^\top \nabla_{S_{\lambda}} \cC(\beta)
        + v^\top B$

        \Return{
            $\cL(\lambda) \eqdef \cC(\beta), \nabla \cL(\lambda)$
            }
        \label{alg:implicit_svm}
        \end{algorithm}
        }
    \end{minipage}
\end{figure}

\section{Additional Lemmas}
\label{app:sec_preliminaries}
%
\subsection{Differentiability of the Proximal Operator}
Here we recall results on the differentiability of the proximal operator at the optimum.
%
\begin{lemma}[\citealt{Klopfenstein_Bertrand_Gramfort_Salmon_Vaiter20}, Lemmas 2 and 3]\label{lemma:diff_prox}
    Let $0 < \gamma_j \leq 1 / L_j$.
    Let $\lambda \in \bbR^r$ and $\Lambda$ a neighborhood of $\lambda$.
    Consider a solution $\hat \beta\in \argmin_{\beta\in\bbR^p} \Phi(\beta, \lambda)$ and $\hat{S}$ its generalized support. Suppose
    \begin{enumerate}\setlength\itemsep{1pt}
        \item \Cref{ass:smoothness,ass:proper,ass:taylor_expansion} hold.
        \item \Cref{ass:non_degeneracy} hold on $\Lambda$.
    \end{enumerate}
    Then, for all $j\in \hat{S}$, the map $\beta \mapsto \prox_{\gamma_j g_j(\cdot, \lambda)}$ is differentiable at $\hat{\beta}_{\hat{S}}$.
    Moreover, for all $j \in \hat{S}^c$, $\prox_{\gamma_j g_j(\cdot, \lambda)}$ is constant around $\hat{\beta}_j-\gamma_j \nabla_j f(\hat{\beta})$. Thus, $\beta \mapsto \prox_{\gamma_j g_j(\cdot, \lambda)}(\beta_j- \gamma_j \nabla_j f(\beta))$ is differentiable at $\hat{\beta}$ with gradient $0$.
\end{lemma}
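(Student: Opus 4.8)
The plan is to treat the two parts separately, in both cases exploiting the subdifferential characterization of the proximal operator: $w = \prox_{\gamma_j g_j(\cdot,\lambda)}(z)$ if and only if $(z - w)/\gamma_j \in \partial_\beta g_j(w, \lambda)$. Evaluating the proximal gradient fixed-point \Cref{eq:fixed_point_pgd} coordinatewise gives $\hat\beta_j = \prox_{\gamma_j g_j(\cdot,\lambda)}(\hat z_j)$ with $\hat z_j \eqdef \hat\beta_j - \gamma_j \nabla_j f(\hat\beta)$, and \Cref{ass:non_degeneracy} rewrites exactly as $(\hat z_j - \hat\beta_j)/\gamma_j = -\nabla_j f(\hat\beta) \in \ri\big(\partial_\beta g_j(\hat\beta_j,\lambda)\big)$. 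This relative-interior membership is the key mechanism driving both cases.

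First I would handle the complement $j \in \hat S^c$. By definition of the generalized support, $\partial_\beta g_j(\hat\beta_j, \lambda)$ is not a singleton; since $g_j(\cdot,\lambda)$ is proper closed convex on $\mathbb{R}$ (\Cref{ass:proper}), its subdifferential at $\hat\beta_j$ is a closed interval with nonempty interior, whose relative interior is the corresponding open interval. Because $-\nabla_j f(\hat\beta)$ lies strictly inside this open interval and $z \mapsto (z - \hat\beta_j)/\gamma_j$ is continuous with value $-\nabla_j f(\hat\beta)$ at $z = \hat z_j$, for all $z$ in a small enough neighborhood of $\hat z_j$ we still have $(z - \hat\beta_j)/\gamma_j \in \partial_\beta g_j(\hat\beta_j,\lambda)$. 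This is exactly the optimality condition for $w = \hat\beta_j$, so $\prox_{\gamma_j g_j(\cdot,\lambda)}(z) = \hat\beta_j$ on that neighborhood, i.e. the proximal operator is locally constant. Since $f$ is smooth (\Cref{ass:smoothness}), the inner map $\beta \mapsto \beta_j - \gamma_j \nabla_j f(\beta)$ is continuous and sends a neighborhood of $\hat\beta$ into a neighborhood of $\hat z_j$, so the composition $\beta \mapsto \prox_{\gamma_j g_j(\cdot,\lambda)}(\beta_j - \gamma_j \nabla_j f(\beta))$ is constant near $\hat\beta$, hence differentiable there with gradient $0$; this gives \Cref{eq:constant_prox_supp_c}.

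Next I would treat the support $j \in \hat S$. Here $g_j(\cdot,\lambda)$ is differentiable at $\hat\beta_j$, and by \Cref{ass:taylor_expansion} it is locally $\mathcal{C}^2$ around $\hat\beta_j$, so $g_j'(\cdot,\lambda)$ is $\mathcal{C}^1$ on a neighborhood. There the optimality condition reads $z = F(w)$ with $F(w) \eqdef w + \gamma_j g_j'(w, \lambda)$, so the prox coincides locally with $F^{-1}$. Convexity gives $g_j''(\cdot,\lambda) \geq 0$, whence $F'(w) = 1 + \gamma_j g_j''(w,\lambda) \geq 1 > 0$; thus $F$ is a local $\mathcal{C}^1$ diffeomorphism near $\hat\beta_j$ and, by the inverse function theorem, $z \mapsto \prox_{\gamma_j g_j(\cdot,\lambda)}(z) = F^{-1}(z)$ is differentiable at $\hat z_j = F(\hat\beta_j)$, with derivative $1/(1 + \gamma_j g_j''(\hat\beta_j,\lambda))$. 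To justify the localization rigorously I would invoke non-expansiveness of the proximal operator: $|\prox_{\gamma_j g_j}(z) - \hat\beta_j| \leq |z - \hat z_j|$ guarantees that $\prox_{\gamma_j g_j}(z)$ stays inside the $\mathcal{C}^2$ neighborhood for $z$ close to $\hat z_j$, so the $F^{-1}$ representation is legitimate.

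The main obstacle is the complement case: one must show that the proximal operator is not merely differentiable but \emph{exactly} constant on a full neighborhood, and this is precisely where the strictness in the relative-interior condition of \Cref{ass:non_degeneracy} is indispensable. Were $-\nabla_j f(\hat\beta)$ to lie on the relative boundary of the subdifferential, arbitrarily small perturbations of $z$ could leave the subdifferential set and select a different minimizer $w$, destroying both local constancy and differentiability; the relative interior supplies the margin that annihilates the hypergradient contribution off the support.
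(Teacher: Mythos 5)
Your proof is correct. Note that the paper itself does not prove this lemma---it imports it by citation from \citet{Klopfenstein_Bertrand_Gramfort_Salmon_Vaiter20} (their Lemmas 2 and 3)---so there is no in-paper proof to compare against; your argument is self-contained and follows the same standard route as that reference: the subdifferential characterization $w = \prox_{\gamma_j g_j(\cdot,\lambda)}(z) \iff (z-w)/\gamma_j \in \partial_\beta g_j(w,\lambda)$ combined with the stability of relative-interior membership under small perturbations (giving local constancy off the generalized support, where \Cref{ass:non_degeneracy} is exactly what is needed), and the inverse function theorem applied to $w \mapsto w + \gamma_j g_j'(w,\lambda)$ on the support, with non-expansiveness used to keep $\prox_{\gamma_j g_j(\cdot,\lambda)}(z)$ inside the locally $\mathcal{C}^2$ neighborhood guaranteed by \Cref{ass:taylor_expansion}.
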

%

\subsection{Linear Convergence}
We now detail the following result: an asymptotic vector autoregressive sequence, with an error term vanishing linearly to $0$, converges linearly to its limit.
In a more formal way:
\begin{lemma}\label{lemma:asymptotic_var}
    Let $A \in \bbR^{p \times p}, \textcolor{blue}{b \in \bbR^p}$ with $\rho(A) < 1$.
    Let $(\jac^{(k)})_{k \in \bbN}$ be a sequence of $\bbR^p$ such that
    \begin{equation}\label{eq:recursion_jac}
        \jac^{(k+1)} = A \jac^{(k)} + b + \epsilon^{(k)} \enspace ,
    \end{equation}
    with $(\epsilon^{(k)})_{k \in \bbN}$ a sequence which converges linearly to $0$,
    then $(\jac^{(k)})_{k \in \bbN}$ converges linearly to its limit $\hat \jac \eqdef  (\Id - A)^{-1} b$.
\end{lemma}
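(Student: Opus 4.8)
The plan is to reduce the inhomogeneous recursion to a purely homogeneous error recursion and then exploit $\rho(A) < 1$ to obtain a geometric bound. First I would note that since $\rho(A) < 1$ the matrix $\Id - A$ is invertible (no eigenvalue of $A$ equals $1$), so $\hat \jac = (\Id - A)^{-1} b$ is well-defined and is the unique fixed point of the map $\jac \mapsto A\jac + b$. Introducing the error $e^{(k)} \eqdef \jac^{(k)} - \hat \jac$ and subtracting the fixed-point identity $\hat \jac = A\hat \jac + b$ from \eqref{eq:recursion_jac} cancels the constant term and yields
\begin{equation*}
    e^{(k+1)} = A e^{(k)} + \epsilon^{(k)} \enspace .
\end{equation*}
Unrolling this recursion gives the closed form $e^{(k)} = A^k e^{(0)} + \sum_{i=0}^{k-1} A^{k-1-i} \epsilon^{(i)}$, which I would bound termwise after fixing a submultiplicative matrix norm $\normin{\cdot}$ (e.g.\ the operator norm induced by the Euclidean norm).

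The decisive ingredient is the geometric decay of $\normin{A^k}$. Since $\rho(A) < 1$, Gelfand's formula (equivalently, the existence of a norm adapted to $A$ in which $\normin{A} \leq \nu$) guarantees that for every $\nu > \rho(A)$ there is a constant $C$ with $\normin{A^j} \leq C \nu^j$ for all $j \geq 0$. Writing the linear convergence of $\epsilon^{(k)}$ as $\normin{\epsilon^{(k)}} \leq C_\epsilon \mu^k$ for some $\mu \in (0,1)$, I would then \emph{choose} the target rate $\nu$ strictly between $\max(\rho(A), \mu)$ and $1$; this is possible precisely because both $\rho(A)$ and $\mu$ are $< 1$.

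With this choice the convolution sum is controlled cleanly: bounding $\normin{A^{k-1-i}} \leq C \nu^{k-1-i}$ and $\normin{\epsilon^{(i)}} \leq C_\epsilon \mu^i$ gives
\begin{equation*}
    \Bignorm{\sum_{i=0}^{k-1} A^{k-1-i} \epsilon^{(i)}} \leq C C_\epsilon \nu^{k-1} \sum_{i=0}^{k-1} (\mu/\nu)^i \leq \frac{C C_\epsilon}{\nu - \mu}\, \nu^k \enspace ,
\end{equation*}
where the last step uses $\nu > \mu$ to sum the geometric series. Combining with $\normin{A^k e^{(0)}} \leq C \normin{e^{(0)}} \nu^k$ yields $\normin{e^{(k)}} \leq C' \nu^k$ for a constant $C'$ depending on $e^{(0)}, C, C_\epsilon, \nu, \mu$, which is exactly linear convergence of $\jac^{(k)}$ to $\hat \jac$ at rate $\nu$.

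The main obstacle is the very first decay estimate: $\normin{A^k}$ need not be $\leq 1$ despite $\rho(A) < 1$, since for a non-normal $A$ the powers can transiently grow, so one cannot naively write $\normin{A^k} \leq \rho(A)^k$. The clean resolution is the spectral-radius/Gelfand bound $\normin{A^k} \leq C \nu^k$ for any $\nu > \rho(A)$. Once this is available, the remaining two-rate bookkeeping is routine, and choosing $\nu > \max(\rho(A), \mu)$ is exactly what sidesteps the degenerate case $\nu = \mu$, which would otherwise contribute a spurious $k\,\nu^k$ factor rather than a clean geometric rate.
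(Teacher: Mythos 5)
Your proof is correct and follows essentially the same route as the paper's: subtract the fixed-point identity $\hat\jac = A\hat\jac + b$, unroll the resulting error recursion, and control $\normin{A^k}$ by the spectral-radius bound $\normin{A^k}\leq C\,\nu^k$ for any $\nu > \rho(A)$ (the paper cites Polyak for this fact, you cite Gelfand --- same ingredient). The only difference is the final bookkeeping of the convolution sum: the paper splits it at $k/2$ and obtains the rates $\sqrt{\nu}$ and $\sqrt{\rho(A)+\delta}$, whereas you dominate both geometric rates by a single $\nu > \max(\rho(A), \mu)$ and sum the series directly, which is cleaner and gives a slightly sharper rate, but it is the same underlying argument.
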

\begin{proof}
    Assume $(\epsilon^{(k)})_{k \in \bbN}$ converges linearly. Then, there exists
    $c_1 > 0, 0 < \nu < 1$ such that
    \begin{equation}
        \normin{\epsilon^{(k)}} \leq c_1 \nu^k
        \enspace .
        \nonumber
    \end{equation}
   Applying a standard result on spectral norms (see \citealt[Chapter 2, Lemma 1]{Polyak87}) yields a bound on $\normin{A^k}_2$.
   More precisely, for every $\delta > 0$ there is a constant $c_2(\delta) = c_2$ such that
    \begin{align}
        \normin{A^k}_2 \leq c_2 (\rho(A) + \delta)^k
        \enspace .
        \nonumber
    \end{align}
    Without loss of generality, we consider from now on a choice of $\delta$ such that $\rho(A) + \delta<1$.
    Since $\hat \jac = (\Id - A)^{-1} b$ the limit $\hat \jac$ of the sequence satisfies
    \begin{equation}
        \hat \jac = A \hat \jac + b \enspace. \label{eq:limit_jac}
    \end{equation}
    Taking the difference between \Cref{eq:recursion_jac,eq:limit_jac} yields:
    \begin{equation} \label{eq:recursion_diff}
        \jac^{(k+1)} - \hat \jac
        = A (\jac^{(k)} - \hat \jac )
          + \epsilon^{(k)} \enspace .
    \end{equation}
    Unrolling \Cref{eq:recursion_diff} yields $\jac^{(k+1)} - \hat \jac =
        A^{k+1} (\jac^{(0)} - \jac) + \sum_{k'=0}^k A^{k'} \epsilon^{(k-k')}$. Taking the norm on both sides and using the triangle inequality leads to
   \allowdisplaybreaks
    \begin{align*}
        \normin{\jac^{(k+1)} - \hat \jac}_2
        \leq &
        ~\normin{A^{k+1} (\jac^{(0)} - \jac)}_2
        + \sum_{k'=0}^k \normin{ A^{k'} }_2 \normin{ \epsilon^{(k-k')} }
        \\
        \leq &
        ~\normin{A^{k+1}}_2 \cdot \normin{\jac^{(0)} - \hat \jac}_2
        + c_1 \sum_{k'=0}^k \normin{A^{k'}}_2 \cdot \nu^{k-k'}
        \\
        \leq &
        ~c_2(\rho(A) + \delta)^{k+1} \cdot \normin{\jac^{(0)} - \hat \jac}_2
        + c_1 \sum_{k'=0}^k  c_2(\rho(A) + \delta)^{k'} \nu^{k-k'}
    \end{align*}
    We can now split the last summand in two parts and obtain the following bound, reminding that $\rho(A) + \delta<1$
    \begin{align*}
        \normin{\jac^{(k+1)} - \hat \jac}_2
        \leq &
        ~c_2(\rho(A) + \delta)^{k+1}
        \cdot \normin{\jac^{(0)} - \hat \jac}_2\\
        & + c_1 c_2 \left(\sum_{k'=0}^{k/2} (\rho(A) + \delta)^{k'} \nu^{k-k'}
        + \sum_{k'=k/2}^{k} (\rho(A) + \delta)^{k'} \nu^{k-k'} \right)\\
        \leq &  ~c_2(\rho(A) + \delta)^{k+1}  \cdot \normin{\jac^{(0)} - \hat \jac}_2 + \frac{c_1 c_2 (\rho(A) + \delta)}{1 - \rho(A) - \delta} \sqrt{\nu}^{k}\\
        & + \frac{c_1 c_2 \nu}{1 - \nu}  \sqrt{(\rho(A) + \delta)}^{k} \enspace .
    \end{align*}
    Thus, $(\jac^{(k)})_{k \in \mathbb{N}}$ converges linearly towards its limit $\hat{\jac}$.
\end{proof}
%
\section{Proof of \Cref{thm:iterdiff_linear_convergence}}
\label{app:proof:nonsmooth_iterdiff}
%
\iterdifflinconv*

\begin{proof}
    We first prove \Cref{thm:iterdiff_linear_convergence} for proximal gradient descent.

    \textit{Proximal gradient descent case.}
    Solving \Cref{pb:generic_inner_pb} with proximal gradient descent leads to the following updates:
    \begin{align}\label{eq:forward_backward_update}
        \beta^{(k+1)}
        =
        \prox_{\gamma g(\cdot, \lambda)}
        (\underbrace{\beta^{(k)} - \gamma \nabla f(\beta^{(k)})}_{z^{(k)}})
        \enspace .
    \end{align}
    Consider the following sequence $(\jac^{(k)})_{k \in \bbN}$ defined by
    \begin{align}\label{eq:forward_backward_diff}
        \jac^{(k+1)}
        =
        \partial_z \prox_{\gamma g(\cdot, \lambda)}(z^{(k)}) \odot
        \left(\Id - \gamma \nabla^2 f(\beta^{(k)}) \right)
        \jac^{(k)}
        + \partial_{\lambda} \prox_{\gamma g(\cdot, \lambda)}(z^{(k)}) \enspace .
    \end{align}
    Note that if $\prox_{\gamma g(\cdot, \lambda)}$ is not differentiable with respect to the first variable at $z^{(k)}$ (respectively with respect to the second variable $\lambda$), any weak Jacobian can be used.
    When \ref{hyp:prox_c1} holds, differentiating \Cref{eq:forward_backward_update} \wrt $\lambda$ yields exactly
    \Cref{eq:forward_backward_diff}.

    \Cref{ass:smoothness,ass:proper,ass:non_degeneracy,ass:taylor_expansion} and the convergence of $(\beta^{(k)})$ toward $\hat \beta$ ensure proximal gradient descent algorithm has finite identification property \citep[Thm. 3.1]{Liang_Fadili_Peyere14}: we note $K$ the iteration when identification is achieved.
    As before, the separability of $g$, \Cref{ass:smoothness,ass:proper,ass:taylor_expansion,ass:non_degeneracy} ensure (see \Cref{{lemma:diff_prox}})
    $\partial_z \prox_{\gamma g(\cdot, \lambda)}(z^k)_{\hat{S}^c} = 0$, for all $k\geq K$.
    Thus, for all $k\geq K$,
    \begin{align*}
    \jac^{(k)}_{\hat{S}^c :}
    =
    \hat{\jac}_{\hat{S}^c :}
    = \partial_{\lambda} \prox_{\gamma g(\cdot, \lambda)}(z^{(k)})_{\hat{S}^c :}
    \enspace .
    \end{align*}
    %
    The updates of the Jacobian then become
    \begin{align*}
        \jac^{(k+1)}_{\hat{S} :}
        =
        \partial_z \prox_{\gamma g(\cdot, \lambda)}(z^{(k)})_{\hat{S}} \odot
        \left(
            \Id - \gamma  \nabla^2_{\hat S, \hat S} f(\beta^{(k)}) \right)
        \jac_{\hat{S} :}^{(k)}
        + \partial_{\lambda} \prox_{\gamma g(\cdot, \lambda)}(z^{(k)})_{\hat{S} :} \enspace .
    \end{align*}
    From \Cref{ass:taylor_expansion}, we have that $f$ is locally $\mathcal{C}^3$ at $\hat{\beta}$, $g( \cdot, \lambda)$ is locally $\mathcal{C}^2$ at $\hat{\beta}$ hence
    $\prox_{g( \cdot, \lambda)}$ is locally $\mathcal{C}^2$.
    The function $ \beta \mapsto \partial_z \prox_{\gamma g(\cdot, \lambda)}(\beta - \gamma \nabla f(\beta))_{\hat{S}} \odot
    (\Id - \gamma  \nabla^2_{\hat S, \hat S} f(\beta) )$
    is differentiable at $\hat \beta$.
    Using \ref{hyp:prox_lip} we have that
    $\beta \mapsto \partial_{\lambda} \prox_{\gamma g(\cdot, \lambda)}( \beta - \gamma \nabla f(\beta) )_{\hat{S} :}$ is also differentiable at $\hat \beta$.
    Using the Taylor expansion of the previous functions yields
    \begin{align}\label{eq:taylor_PGD}
        \jac^{(k+1)}_{\hat{S} :}
        =
        \underbrace{\partial_z \prox_{\gamma g(\cdot, \lambda)}(\hat{z})_{\hat{S}}
        \odot
        \left(
            \Id - \gamma \nabla^2_{\hat S, \hat S} f(\hat{\beta}) \right)}_{A}
        \jac_{\hat{S} :}^{(k)}
        +
        \underbrace{
            \partial_{\lambda} \prox_{\gamma g(\cdot, \lambda)}(\hat{z})_{\hat{S} :}}_{b}
            + \underbrace{o(\normin{\beta^{(k)} - \hat{\beta}})}_{\epsilon^{(k)}} \enspace .
    \end{align}
    Thus, for
    $0 < \gamma \leq 1 / L$,
    \begin{align}
        \label{eq:spec_A}
        \rho(A)
        \leq \normin{A}_2
        & \leq
        \underbrace{\normin{\partial_z \prox_{\gamma g(\cdot, \lambda)}(\hat{z})_{\hat{S}}}}_{
            \leq 1 \text{ (non-expansiveness)}}
        \,\cdot \hspace{-1.5cm} \underbrace{
            \normin{\Id - \gamma\nabla_{\hat S, \hat S}^2 f(\hat{\beta}) }_2}_{
                \quad \quad\quad \quad <1 \text{ (\Cref{ass:restricted_injectivity} and $0 < \gamma \leq 1 / L$)}}
        < 1 \enspace .
    \end{align}
    The inequality on the derivative of the proximal operator comes from the non-expansiveness of proximal operators.
    The second inequality comes from \Cref{ass:restricted_injectivity} and $0 < \gamma \leq 1 / L$.

    \Cref{ass:smoothness,ass:proper,ass:non_degeneracy,ass:taylor_expansion,ass:restricted_injectivity} and the convergence of $(\beta^{(k)})$ toward $\hat \beta$ ensure
    $(\beta^{(k)})_{k \in \bbN}$ converges locally linearly \citep[Thm. 3.1]{Liang_Fadili_Peyere14}.
    The asymptotic autoregressive sequence in \Cref{eq:taylor_PGD},  $\rho(A) < 1$, and  the local linear convergence of $(\epsilon^{(k)})_{k \in \bbN}$, yield our result using \Cref{lemma:asymptotic_var}.

    We now prove \Cref{thm:iterdiff_linear_convergence} for proximal coordinate descent.

    \textit{Proximal coordinate descent.}
    Compared to proximal gradient descent,
    the analysis of coordinate descent requires studying functions defined as a the composition of $p$ applications, each of them only modifying one coordinate.

    Coordinate descent updates read as follows
    \begin{align}\label{eq:CD_update}
        \beta_j^{(k, j)}
        =
        \prox_{\gamma_j g_j(\cdot, \lambda)}
        \underbrace{\left(
            \beta_j^{(k,j-1)} - \gamma_j  \nabla_j f(\beta^{(k, j-1)})
        \right)}_{\eqdef z_j^{(k, j-1)} }
        \enspace .
    \end{align}
    We consider the following sequence
    \begin{align}\label{eq:CD_jac_update}
        \jac^{(k, j)}_{j:}
         = ~ &
        \partial_z \prox_{\gamma_j g_j(\cdot, \lambda)}
        ( z_j^{(k, j-1)} )
        \left(
            \jac^{(k, j-1)}_{j:}
            - \gamma_j  \nabla^2_{j:} f(\beta^{(k,j-1)})  \jac^{(k, j-1)}
        \right)
        \nonumber\\
        & + \partial_{\lambda} \prox_{\gamma_j g_j(\cdot, \lambda)}
        (z_j^{(k,j-1)}) \enspace .
    \end{align}
    Note that if $\prox_{\gamma g(\cdot, \lambda)}$ is not differentiable with respect to the first variable at $z^{(k)}$ (respectively with respect to the second variable $\lambda$), any weak Jacobian can be used.
    When \ref{hyp:prox_c1} holds, differentiating \Cref{eq:CD_update} \wrt $\lambda$ yields exactly
    \Cref{eq:CD_jac_update}.

    \Cref{ass:proper,ass:smoothness,ass:non_degeneracy,ass:taylor_expansion} and the convergence of $(\beta^{(k)})_{k \in \bbN}$ toward $\hat \beta$ ensure
    proximal coordinate descent has finite identification property \citep[Thm. 1]{Klopfenstein_Bertrand_Gramfort_Salmon_Vaiter20}: we note $K$ the iteration when identification is achieved.
    Once the generalized support $\hat{S}$ (of cardinality $\hat{s}$) has been identified, we have that for all $k\geq K$, $\beta_{\hat{S}^c}^{(k)} =\hat{\beta}_{\hat{S}}$ and for any $j \in \hat{S}^c$, $\partial_z \prox_{\gamma_j g_j(\cdot, \lambda)}(z_{j}^{(k, j-1)})=0$.
    Thus $\jac_{j:}^{(k, j)} = \partial_{\lambda}\prox_{\gamma_j g_j(\cdot, \lambda)}(z_j^{(k, j-1)})$.
    Then, we have that for any $j\in \hat{S}$ and for all $k\geq K$:
    \begin{align}
        \jac^{(k, j)}_{j:}
        & =
        \partial_z \prox_{\gamma_j g_j(\cdot, \lambda)}
        (z_j^{(k, j-1)})
        \left(
            \jac^{(k, j-1)}_{j:}
            - \gamma_j
            \nabla^2_{j, \hat S} f(\beta^{(k,j-1)}) \jac^{(k, j-1)}_{\hat{S}:}
        \right)
        \nonumber\\
        & \hspace{1em}
        + \partial_{\lambda} \prox_{\gamma_j g_j(\cdot, \lambda)}
        ( z_j^{(k,j-1)})
        - \gamma_j \partial_z \prox_{\gamma_j g_j(\cdot, \lambda)}
        (z_j^{(k, j-1)})
         \nabla^2_{j, \hat S^c} f(\beta^{(k,j-1)}) \jac^{(k, j-1)}_{\hat{S}^c :}
        \enspace .
        \nonumber
    \end{align}
    Let $e_1, \dots, e_{\hat s}$ be the vectors of the canonical basis of $\bbR^{\hat s}$.
    We can consider the applications
    \begin{align}
        \bbR^p & \rightarrow \bbR^{\hat s}
        \nonumber
        \\
        \beta & \mapsto
         \partial_z \prox_{\gamma_j g_j(\cdot, \lambda)}
        \left(
            \beta_j - \gamma_j \nabla_j f(\beta)
        \right)
        \left(
            e_j - \gamma_j \nabla^2_{j, \hat S} f(\beta)
        \right)
        \enspace ,
        \nonumber
    \end{align}
    and
    \begin{align*}
        \bbR^p & \rightarrow \bbR^{\hat s \times r}
        \\
        \beta  & \mapsto
        \partial_{\lambda} \prox_{\gamma_j g_j(\cdot, \lambda)}
        \left(
            \beta_j -\gamma_j \nabla_j f(\beta)
        \right)
       - \gamma_j\partial_z \prox_{\gamma_j g_j(\cdot, \lambda)}
        \left(
            \beta_j -\gamma_j \nabla_j f(\beta)
        \right)
        \nabla^2_{j, \hat S^c} f(\beta) \hat{\jac}_{\hat{S}^c:}
        \enspace ,
    \end{align*}
    which are both differentiable at $\hat \beta$ using \Cref{ass:taylor_expansion} and \ref{hyp:prox_lip}.
    The Taylor expansion of the previous functions yields:
    \begin{align}
        \label{eq:expansion_process}
        \jac^{(k, j)}_{j:}
        & =
        \partial_z \prox_{\gamma_j g_j(\cdot, \lambda)}
        \left(
            \hat{z}_j
        \right)
        \left(
            e_j - \gamma_j \nabla^2_{j,\hat S} f(\hat{\beta})
        \right)
        \jac^{(k, j-1)}_{\hat{S} :}
        \nonumber\\
        & \hspace{1em}
        + \partial_{\lambda} \prox_{\gamma_j g_j(\cdot, \lambda)}
        \left(
            \hat{z}_j
        \right)
        - \gamma_j \partial_z \prox_{\gamma_j g_j(\cdot, \lambda)}
        \left(
            \hat{z}_j
        \right)
        \nabla^2_{j, \hat S^c} f(\hat{\beta})
        \jac^{(k, j-1)}_{\hat{S}^c :}\nonumber\\
        & \hspace{1em} + o(||\beta^{(k, j-1)} - \hat{\beta}||)
        \enspace .
        \nonumber
    \end{align}
    Let $j_1, \dots, j_{\hat s}$ be the indices of the generalized support of $\hat \beta$.
    When considering a full epoch of coordinate descent, the Jacobian is obtained as the product of matrices of the form
    \begin{align}
        A_{s}^\top =
        \left (
            \begin{array}{c|c|c|c|c|c|c}
                e_1 & \hdots & e_{s-1} & v_{j_s} & e_{s+1} & \hdots & e_{\hat{s}}
            \end{array}
        \right)
        \in \bbR^{\hat s \times \hat s}
        \enspace ,
        \nonumber
    \end{align}
    where $v_{j_s} = \partial_z \prox_{\gamma_{j_s} g_{j_s}}
    \left(
        \hat{z}_{j_s}
    \right)
    \left(
        e_s - \gamma_{j_s} \nabla^2_{j_s,\hat{S}}f(\hat{\beta})
    \right) \in \bbR^{\hat s}.$
    A full epoch can then be written
    \begin{align*}
        \jac^{(k+1)}_{\hat{S} :}
        =
        \underbrace{A_{\hat{s}} A_{\hat{s}-1} \hdots A_1}_{A} \jac^{(k)}_{\hat{S} :}
        + b
        + \epsilon^{(k)}
        \enspace ,
    \end{align*}
    for a certain $b \in \bbR^{\hat s}$.

    The spectral radius of $A$ is strictly bounded by $1$ \citep[Lemma 8]{Klopfenstein_Bertrand_Gramfort_Salmon_Vaiter20}: $\rho(A) < 1$.
    \Cref{ass:proper,ass:smoothness,ass:non_degeneracy,ass:taylor_expansion} and the convergence of $(\beta^{(k)})_{k \in \bbN}$ toward $\hat \beta$ ensure
    local linear convergence of $(\beta^{(k)})_{k \in \bbN}$
    \citep[Thm. 2]{Klopfenstein_Bertrand_Gramfort_Salmon_Vaiter20}.
   Hence, we can write the update for the Jacobian after an update of the coordinates from $1$ to $p$
    \begin{equation}
        \label{eq:varcd}
        \jac_{\hat S :}^{(k+1)}
        = A \jac_{\hat S :}^{(k)} + b + \epsilon^{(k)}
        \enspace ,
    \end{equation}
   with $(\epsilon^{(k)})_{k \in \bbN}$ converging linearly to 0.

    Recalling $\rho(A) < 1$, \Cref{lemma:asymptotic_var} and the last display yield our result using.
\end{proof}


\subsection{Proof of Theorem \ref{thm:approximate_gradient}
(Approximate
Hypergradients)}\label{app:proof_approx_grad}

\approxgrad*

\begin{proof}
    \textit{Overview of the proof.}
    Our goal is to bound the error between the approximate hypergradient $h$ returned by \Cref{alg:implicit} and the true hypergradient $\nabla \cL(\lambda)$.
    Following the analysis of \citet{Pedregosa16}, two sources of approximation errors arise when computing the hypergradient:
    \begin{itemize}
        \item Approximation errors from the inexact computation of $\hat \beta$.
        Dropping the dependency \wrt $\lambda$, we denote $\beta$ the approximate solution and suppose the problem is solved to precision $\epsilon$ with support identification \ref{hyp:espilon_sol}
        \begin{align*}
            \begin{cases}
            &\beta_{\hat S^c}
            = \hat \beta_{\hat S^c}
            \\
            &\normin{\beta_{\hat S} - \hat \beta_{\hat S} }
            \leq \epsilon
            \enspace .
            \end{cases}
        \end{align*}
        \item Approximation errors from the approximate resolution of the linear system, using \ref{hyp:espilon_sol} yields:
        \begin{align*}
            \normin{A^{-1 \top } \nabla_{\hat S} \cC(\beta) - v} \leq \epsilon
            \enspace .
        \end{align*}
        The exact solution of the exact linear system $\hat v$ satisfies
        \begin{align*}
            \hat v = \hat A^{-1 \top} \nabla_{\hat S} \cC(\hat \beta)
            \enspace ,
        \end{align*}
        with
        \begin{align}
            A
            &\eqdef
            \Id_{|\hat{S}|}-\underbrace{\partial_z\prox_{\gamma g(\cdot, \lambda)}\left (\beta-\gamma \nabla f(\beta)\right )_{\hat{S}}}_{\eqdef C}
            \underbrace{\left (\Id_{|\hat{S}|} - \gamma \nabla^2_{\hat{S}, \hat S} f(\beta) \right )}_{\eqdef  D}
            \nonumber
            \enspace , \\
            \hat{A}
            &\eqdef
            \Id_{|\hat{S}|}
            - \underbrace{
                \partial_z\prox_{\gamma g(\cdot, \lambda)}\left (
                    \hat{\beta}
                    - \gamma \nabla f(\hat{\beta} )
                \right )_{\hat{S}}}_{ \eqdef \hat{C} }\underbrace{
                \left (
                    \Id_{|\hat{S}|}
                    - \gamma \nabla^2_{\hat{S}, \hat S} f(\hat{\beta} )
                \right )}_{ \eqdef \hat{D} }
            \nonumber
            \enspace .
        \end{align}
    \item Using the last two points, the goal is to bound the difference between the exact hypergradient and the approximate hypergradient, $\normin{\nabla \mathcal{L}(\lambda) - h}$.
    Following \Cref{alg:implicit}, the exact hypergradient reads
    \begin{align*}
        \nabla \mathcal{L}(\lambda)
    =
    \hat{B}\hat{v}
    +  \hat{\jac}^\top_{\hat S^c :} \nabla_{\hat{S}^c}\mathcal{C}(\hat{\beta})
    \enspace ,
    \end{align*}
    and similarly for the approximate versions
    \begin{align*}
        h
    =
    B v
    +  \jac^\top_{\hat S^c :} \nabla_{\hat{S}^c}\mathcal{C}(\beta)
    \enspace ,
    \end{align*}
    with
    \begin{align}
        B
        & \eqdef
        \partial_{\lambda}\prox_{\gamma g(\cdot, \lambda)}
        \left (\beta
            - \gamma \nabla f(\beta ) \right )_{\hat{S} :}  \nonumber
              - \gamma  \partial_z\prox_{\gamma g(\cdot, \lambda)}\left (
                \beta
                 - \gamma \nabla f(\beta )
            \right )_{\hat{S}} \odot \left (
             \nabla^2_{\hat{S}, \hat S^c} f(\beta)
            \right ) \hat{\jac}_{\hat{S}^c :}
        \nonumber
       \\
        \hat{B}
        & \eqdef
        \partial_{\lambda}\prox_{\gamma g(\cdot, \lambda)}
        \left (
            \hat{\beta} -\gamma \nabla f(\hat{\beta} )
            \right )_{\hat{S} :} \nonumber
        \\
        & \quad   - \gamma  \partial_z\prox_{\gamma g(\cdot, \lambda)}\left (
                \hat{\beta}
                - \gamma \nabla f(\hat{\beta} )
            \right )_{\hat{S}} \odot \left (
             \nabla^2_{\hat{S}, \hat S^c} f(\hat{\beta} )
            \right ) \hat{\jac}_{\hat{S}^c :}
            \nonumber
            \enspace.
    \end{align}
    We can exploit these decompositions to bound the difference between the exact hypergradient and the approximate hypergradient
    \begin{align}
         \normin{\nabla \mathcal{L}(\lambda) - h}
        & =
        \normin{
            \hat{B}\hat{v}
            - B v
            +  \hat{\jac}^\top_{\hat S^c :} \nabla_{\hat{S}^c}\mathcal{C}(\hat{\beta})
            - \hat{\jac}^\top_{\hat S^c :} \nabla_{\hat{S}^c}\mathcal{C}(\beta)}
        \nonumber
        \\
        & \leq
        \normin{
            \hat{B}\hat{v}
            - Bv} +
            \normin{\hat{\jac}^\top_{\hat S^c :} \nabla_{\hat{S}^c}\mathcal{C}(\hat{\beta})
            - \hat{\jac}^\top_{\hat S^c :} \nabla_{\hat{S}^c}\mathcal{C}(\beta)}
        \nonumber
        \\
        & \leq
        \normin{
            \hat{B}\hat{v}
            - B\hat{v}
            + B\hat{v}
            - Bv}
            +  \normin{\hat{\jac}^\top_{\hat S^c :}(\nabla_{\hat{S}^c}\mathcal{C}(\hat{\beta}) - \nabla_{\hat{S}^c}\mathcal{C}(\beta))}
        \nonumber
        \\
        & \leq
        \normin{\hat{v}} \cdot \normin{\hat{B} - B}
            + \normin{B} \cdot \normin{\hat{v} - v}
            +  L_{\cC} \normin{\hat{\jac}^\top_{\hat S^c :}}  \cdot \normin{\beta - \hat{\beta}}
        \enspace . \label{eq:bound_hypergrad}
    \end{align}
\end{itemize}
    Bounding $\normin{\hat{v} - v}$ and $\normin{\hat{B} - B}$ in \Cref{eq:bound_hypergrad} yields the desired result which is bounding the difference between the exact hypergradient and the approximate hypergradient $\normin{\nabla \mathcal{L}(\lambda) - h}$.

\textit{Bound on $ \normin{\hat{v} - v}$.}
We first prove that $\normin{A - \hat{A}} = \bigo(\epsilon)$.
Let $L_H$ be the Lipschitz constant of the application $\beta \mapsto \nabla^2 f(\beta)$, then we have
    \begin{align}
        \normin{A - \hat{A}}_2
        & = \normin{CD - \hat{C}\hat{D}}_2
        \nonumber
        \\
        & \leq \normin{CD - C\hat{D}}_2
        + \normin{C\hat{D} - \hat{C}\hat{D}}_2
        \nonumber
        \\
        &
        \leq
        \underbrace{\norm{C}_2}_{\leq 1 \text{ (non-expansiveness)}}
        \underbrace{\normin{D - \hat{D}}_2 }_{ \leq L_H\normin{\beta - \hat{\beta}} \text{ using \ref{hyp:lip_Hessian}} }
        + \underbrace{\normin{\hat{D}}_2 }_{\leq 1}
        \underbrace{\normin{C-\hat{C}}_2 }_{
            \bigo (\normin{\beta - \hat{\beta}} )
            \text{ using \ref{hyp:prox_lip}} }
        \nonumber
        \\
        & \leq L_H \normin{\beta - \hat{\beta}} + \bigo (\normin{\beta - \hat{\beta}})
        \nonumber
        \\
        & = \bigo (\normin{\beta - \hat{\beta}})
        \label{eq:stability_A}
        \enspace .
    \end{align}
    Let $ \tilde{v}$ be the exact solution of the approximate system $ A^\top \tilde{v} \eqdef \nabla_{\hat{S}} \cC( \beta)$.
    The following conditions are met:
    \begin{itemize}
        \item $\hat{v}$ is the exact solution of the exact linear system and $\tilde{v}$ is the exact solution of the approximate linear system
        \begin{align*}
            \hat{A}^\top \hat{v}
            & \eqdef \nabla_{\hat{S}} \cC(\hat \beta)
            \\
            A^\top \tilde{v}
            & \eqdef \nabla_{\hat{S}} \cC( \beta)
            \enspace .
        \end{align*}
        \item One can control the difference between the exact matrix in the linear system $\hat A$ and the approximate matrix $A$
        \begin{align*}
            \normin{A - \hat{A}}_2
            \leq
            \delta \normin{\beta - \hat{\beta}}
            \enspace,
        \end{align*}
        for a certain $\delta > 0$ (\Cref{eq:stability_A}).
        \item One can control the difference between the two right-hand side of  the linear systems
        \begin{align*}
            \normin{\nabla_{\hat{S}} \cC(\beta) - \nabla_{\hat{S}} \cC(\hat \beta)}
            \leq
            L_{\cC} \normin{\beta - \hat{\beta}}
            \enspace ,
        \end{align*}
        since $\beta \mapsto \nabla \mathcal{C}(\beta)$ is $L_{\cC}$-Lipschitz continuous \ref{hyp:lip_criterion}.
        \item One can control the product of the perturbations
        \begin{align*}
            \delta \cdot \normin{\beta - \hat{\beta}}
            \cdot \normin{\hat{A}^{-1}}_2 \leq \rho < 1
            \enspace .
        \end{align*}
    \end{itemize}
    Conditions are met to apply the result by \citet[Thm 7.2]{Higham2002}, which leads to
    \begin{align}
        \normin{\tilde{v} -\hat{v}}
        & \leq
        \frac{\epsilon }{1-\epsilon  \normin{\hat{A}^{-1}} \delta}\left ( L_{\cC} \normin{\hat{A}^{-1}}
        +
        \normin{\hat{v}} \cdot \normin{\hat{A}^{-1}} \delta \right )
        \nonumber
        \\
        & \leq
        \frac{\epsilon}{1-\rho}
        \left (
            L_{\cC} \normin{\hat{A}^{-1}}
            + \normin{\hat{v}} \cdot  \normin{\hat{A}^{-1}} \delta
        \right )
        \nonumber
        \\
        & = \bigo(\epsilon)
        \label{eq:bound_v_v_tilde}
        \enspace .
    \end{align}
    The bound on $\normin{\tilde{v} -\hat{v}}$ finally yields a bound on the first quantity in \Cref{eq:hypergrad}, $\normin{v  - \hat{v}}$
    \begin{align}
        \normin{v  - \hat{v}}
        & = \normin{v - \tilde{v} + \tilde{v} - \hat{v}}
        \nonumber
        \\
        & \leq \normin{v - \tilde{v}}
        + \normin{\tilde{v} - \hat{v}}
        \nonumber
        \\
        & \leq \normin{A^{-1}A(v - \tilde{v})}
        + \normin{\tilde{v} - \hat{v}}
        \nonumber
        \\
        & \leq
        \normin{A^{-1}}_2
        \times
        \underbrace{\norm{A ( v - \tilde{v} )}}_{
            \leq \epsilon  \text{ \ref{hyp:espilon_sol}}
        }
        +
        \underbrace{\normin{\tilde{v} - \hat{v}}}_{
            \bigo (\epsilon ) \text{ (\Cref{eq:bound_v_v_tilde})}
        }
        \nonumber
        \\
        & = \bigo (\epsilon )
        \label{eq:bund_v_tilde}
        \enspace .
    \end{align}

\textit{Bound on $ \normin{B - \hat B}_2$.}
We now bound the second quantity in \Cref{eq:hypergrad} $ \normin{B - \hat B}_2$
\begin{align}
    \normin{B - \hat{B}}_2
    & \leq \normin{
        \partial_{\lambda}\prox_{\gamma g(\cdot, \lambda)}
        (\beta - \gamma \nabla f(\beta))_{\hat{S} :}
    -
        \partial_{\lambda}\prox_{\gamma g(\cdot, \lambda)}
        (\hat{\beta} -\gamma \nabla f(\hat{\beta}))_{\hat{S} :}}_2
    \nonumber
    \\
    & + \gamma \normin{\partial_z\prox_{\gamma g(\cdot, \lambda)}
        (\hat{\beta} - \gamma \nabla f(\hat{\beta}))_{\hat{S}}
        \nabla^2_{\hat{S}, \hat S^c} f(\hat{\beta})
        \hat{\jac}_{\hat{S}^c :}
    \nonumber
    \\
    & -
        \partial_z\prox_{\gamma g(\cdot, \lambda)}
        (\beta - \gamma \nabla f(\beta))_{\hat{S}}
        \nabla^2_{\hat{S}, \hat S^c} f(\beta)
        \hat{\jac}_{\hat{S}^c :}}_2
    \nonumber
    \\
    & \leq L_1 \normin{
        \beta - \gamma \nabla f(\beta)_{\hat{S} :}
        - \hat{\beta} + \gamma \nabla f(\hat{\beta})
        } \text{ using \ref{hyp:prox_lip}}
    \nonumber
    \\
    & + L_2 \normin{\hat \beta - \beta}
    \cdot \normin{\hat{\jac}_{\hat{S}^c :}}
    \text{ using \ref{hyp:prox_lip} and \Cref{ass:taylor_expansion}}
    \nonumber
    \\
    & =  \bigo(\normin{\hat \beta - \beta})
    \label{eq:bund_B_B_hat}
    \enspace .
\end{align}
Plugging \Cref{eq:bund_v_tilde,eq:bund_B_B_hat} into  \Cref{eq:hypergrad} yields the desired result: $\normin{\nabla \mathcal{L}(\lambda) - h} = \bigo(\epsilon)$.
\end{proof}

\newpage
\section{Additional Experiments}
\label{app:sec:additional_xp}
%
\subsection{Local Linear Convergence}
\Cref{fig:linear_convergence_lasso,fig:linear_convergence_logreg} are the counterparts of \Cref{fig:linear_convergence_svm} for the Lasso and sparse logistic regression.
It shows the local linear convergence of the Jacobian for the Lasso, obtained by the forward-mode differentiation of coordinate descent.
The solvers used to determine the exact solution up to machine precision are \texttt{Celer} \citep{Massias_Gramfort_Salmon18,Massias_Vaiter_Gramfort_Salmon20} for the Lasso and
\texttt{Blitz} \citep{Johnson_Guestrin15} for the sparse logistic regression.
\Cref{table:setting_linear_cv} summarizes the values of the hyperparameters $\lambda$ used in
\Cref{fig:linear_convergence_svm,fig:linear_convergence_lasso,fig:linear_convergence_logreg}.
\begin{figure*}[tb]
    \centering
    \begin{subfigure}[b]{1\textwidth}
        \centering
        \includegraphics[width=0.4\linewidth]{linear_convergence_lasso_legend}
        \includegraphics[width=\figsize\linewidth]{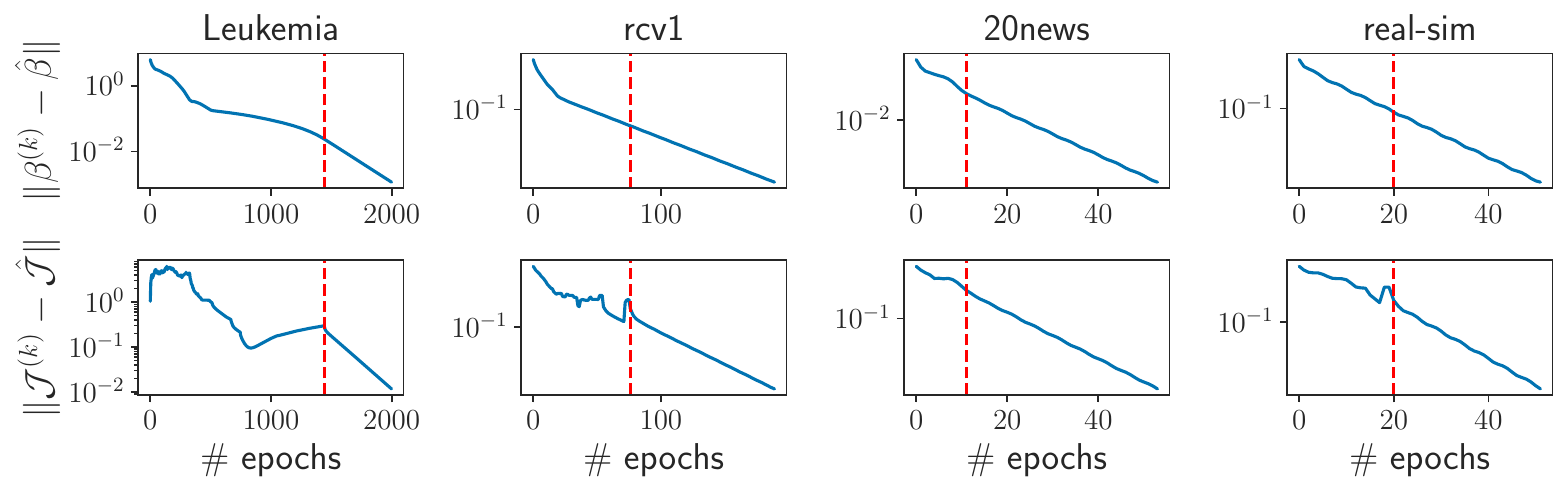}
    \end{subfigure}
    \caption{
        \textbf{Local linear convergence of the Jacobian for the Lasso.}
    Distance to optimum for the coefficients $\beta$ (top) and the Jacobian
    $\jac$ (bottom) of the \forward differentiation of proximal coordinate descent (\Cref{alg:forward_pcd}) on multiple data sets.}
    \label{fig:linear_convergence_lasso}
\end{figure*}
\begin{figure*}[tb]
    \centering
    \begin{subfigure}[b]{1\textwidth}
        \centering
        \includegraphics[width=0.4\linewidth]{linear_convergence_lasso_legend}
        \includegraphics[width=\figsize\linewidth]{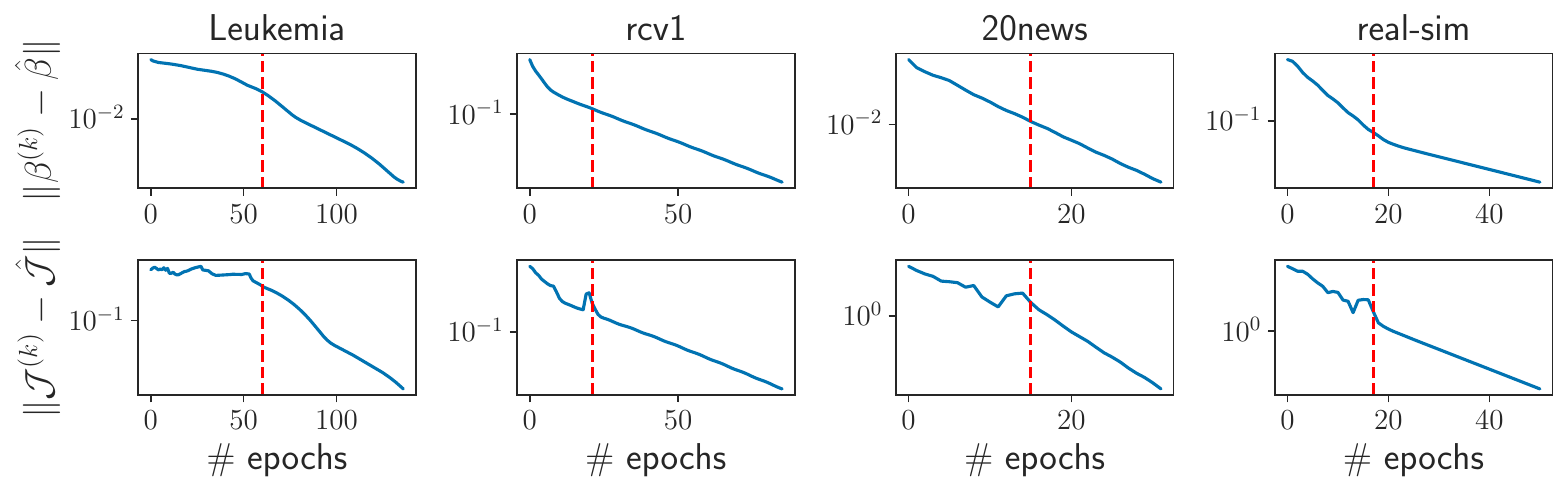}
    \end{subfigure}
    \caption{
        \textbf{Local linear convergence of the Jacobian for sparse logistic regression.}
    Distance to optimum for the coefficients $\beta$ (top) and the Jacobian $\jac$ (bottom) of the \forward differentiation of proximal coordinate descent (\Cref{alg:forward_pcd}) on multiple data sets.}
    \label{fig:linear_convergence_logreg}
\end{figure*}
\begin{table}
    \caption{Data set characteristics and regularization parameters used in \Cref{fig:linear_convergence_lasso,fig:linear_convergence_logreg,fig:linear_convergence_svm}.}
    \begin{tabular}{|c|c|c|c|c|}
        \hline
        Data sets
        & \emph{leukemia}
        & \emph{rcv1}
        & \emph{news20}
        & \emph{real-sim} \\
        \# samples
        & $n=38$
        & $n=\num{20242}$
        & $n=\num{19996}$
        & $n=\num{72309}$ \\
        \# features
        & $p=\num{7129}$
        & $p=\num{19959}$
        & $p=\num{632982}$
        & $p=\num{20958}$ \\
        \hline
        \rule{0pt}{2.5ex}
        Lasso
        & $e^\lambda = 0.01 \, e^{ \lambda_{\text{max}} }$
        & $e^\lambda = 0.075 \, e^{ \lambda_{\text{max}} }$
        & $e^\lambda = 0.3 \, e^{ \lambda_{\text{max}} }$
        & $e^\lambda = 0.1 \, e^{ \lambda_{\text{max}} }$ \\
        Logistic regression
        & $e^\lambda = 0.1 \, e^{ \lambda_{\text{max}} }$
        & $e^\lambda = 0.25 \, e^{ \lambda_{\text{max}} }$
        & $e^\lambda = 0.8 \, e^{ \lambda_{\text{max}} }$
        & $e^\lambda = 0.15 \, e^{ \lambda_{\text{max}} }$\\
        SVM
        & $e^\lambda = 10^{-5} $
        & $e^\lambda = 3 \times 10^{-2}$
        & $e^\lambda = 10^{-3}$
        & $e^\lambda = 5 \times 10^{-2}$ \\
        \hline
      \end{tabular}
      \label{table:setting_linear_cv}
    \end{table}

\subsection{Hypergradient Computation Time}
    The experimental setting for \Cref{fig:hypergradient_cvxpy_wlasso} is the same as for \Cref{fig:hypergradient_cvxpy}, but with a
    weighted Lasso (\ie $p$ hyperparameters to optimize) as inner problem.
    It represents the time needed by the different algorithms to compute a single hypergradient as a function of the number of features.
    The regularization amounts were chosen uniformly at random in the interval $[0, e^\lambda_{\max}]$
    and each point represents $10$ repetitions.
    \Cref{fig:hypergradient_cvxpy_wlasso} shows that when the number of hyperparameters is large the \implicitfull outperforms the \backward, ans the \backward outperforms the \forward, by one or more orders of magnitude.
    This corroborates the complexities summarized in \Cref{tab:summary_costs}.
    \begin{figure*}[tb]
        \centering
        \begin{subfigure}[b]{1\textwidth}
            \centering
            \includegraphics[width=0.55\linewidth]{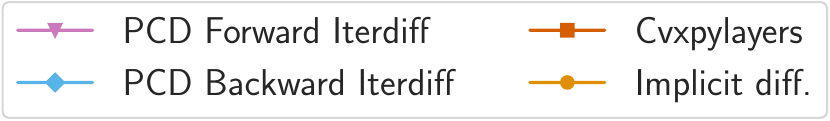}
            \includegraphics[width=0.5\linewidth]{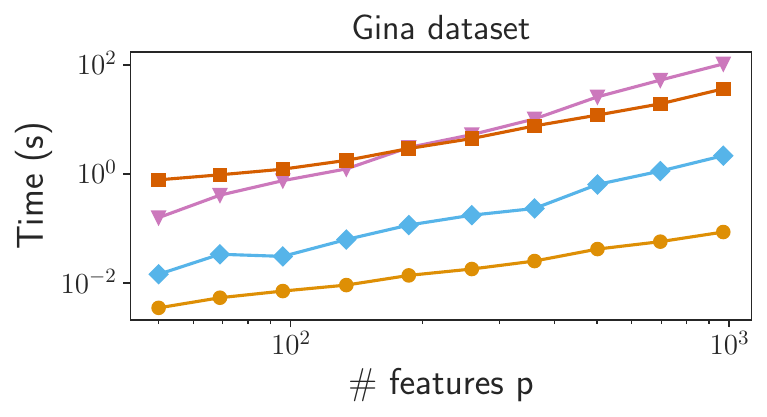}
        \end{subfigure}
        \caption{
            \textbf{Weighted Lasso with hold-out criterion.}
            Time to compute a single hypergradient as a function of the number of features on the \emph{gina} data set.
            The regularization parameters have been chosen uniformly at random in the range $[0, e^\lambda_{\max}]$.
        }
        \label{fig:hypergradient_cvxpy_wlasso}
    \end{figure*}

\end{document}